\documentclass{article} 
\usepackage{iclr2027_conference,times}

\usepackage{amsmath,amsfonts,bm}

\def\eqref#1{equation~\ref{#1}}

\def\1{\bm{1}}

\DeclareMathAlphabet{\mathsfit}{\encodingdefault}{\sfdefault}{m}{sl}
\SetMathAlphabet{\mathsfit}{bold}{\encodingdefault}{\sfdefault}{bx}{n}

\usepackage{url}
\usepackage[table]{xcolor}

\definecolor{burgundy}{RGB}{128, 0, 32}
\definecolor{hotpink}{RGB}{10,  78,  139}

\usepackage[colorlinks,linkcolor=burgundy, anchorcolor=blue, citecolor=burgundy, urlcolor=hotpink]{hyperref}

\usepackage{amsmath}
\usepackage{amssymb}
\usepackage{amsfonts}
\usepackage{amsthm}
\usepackage{mathtools}
\usepackage{bm}
\usepackage{booktabs} 
\usepackage{tcolorbox}
\usepackage[svgnames]{xcolor}
\usepackage{multirow}
\usepackage{tabularx}
\usepackage{pifont}
\usepackage{adjustbox}
\usepackage{tablefootnote}
\usepackage{graphicx}   
\usepackage{subcaption} 
\usepackage{wrapfig}   
\usepackage{minitoc}

\usepackage{fontawesome5}
\usepackage{xcolor}

\definecolor{titlelink}{HTML}{1F5A8A}
\newcommand{\titlehref}[2]{%
  \href{#1}{\textcolor{titlelink}{#2}}%
}

\theoremstyle{plain}
\newtheorem{theorem}{Theorem}[section]
\newtheorem{proposition}[theorem]{Proposition}
\newtheorem{lemma}[theorem]{Lemma}

\theoremstyle{definition}
\newtheorem{definition}[theorem]{Definition}

\theoremstyle{remark}

\title{Do ResNets Route? Sparse Interaction Experts in Residual Networks}

\author{%
\makebox[0.96\textwidth][c]{%
  \textbf{Liang Yan}\textsuperscript{1}%
  \thanks{Corresponding author: \texttt{yanliangfdu@gmail.com}}%
  \hspace{0.7em}
  \textbf{Siying Chen}\textsuperscript{2}
  \hspace{0.7em}
  \textbf{Kaijie Chen}\textsuperscript{3}
  \hspace{0.7em}
  \textbf{Bo Li}\textsuperscript{1}
  \hspace{0.7em}
  \textbf{Jinghao Zhang}\textsuperscript{4}
  \hspace{0.7em}
  \textbf{Mu Miao}\textsuperscript{5}%
}
\\[0.35em]
\makebox[0.96\textwidth][c]{%
  {\normalfont\small
  \textsuperscript{1}Fudan University
  \qquad
  \textsuperscript{2}University of Washington
  \qquad
  \textsuperscript{3}Tongji University}%
}
\\[-0.05em]
\makebox[0.96\textwidth][c]{%
  {\normalfont\small
  \textsuperscript{4}Shandong University
  \qquad
  \textsuperscript{5}Datacanvas}%
}
\\[0.4em]
\makebox[0.96\textwidth][c]{%
  {\normalfont\small
  \titlehref{https://divinyan.com/resnets-route-without-routers}
  {\faGlobe\enspace Project Page}%
  \hspace{1.1em}%
  \titlehref{https://github.com/yanliang3612/resnets-route-without-routers}
  {\faGithub\enspace Code}}%
}
}

\iclrfinalcopy 
\begin{document}
\doparttoc 
\faketableofcontents

\maketitle

\begin{abstract}
Residual networks execute every block for every input, yet their functional
contributions need not be input independent. We formulate a trained ResNet as
a set function over binary residual-branch masks and apply M\"obius inversion
to decompose its output exactly into individual residual corrections and
higher-order interactions. For smooth residual stacks, we show that each fixed
$k$-way interaction scales as $\mathcal{O}(\lambda^k)$ under residual scaling.
Exhaustive analysis of ImageNet-pretrained ResNet-18 and ResNet-34 reveals
that interaction mass peaks at orders five and ten, respectively, rather than
at low orders. Reducing the residual scale shifts both spectra toward lower
orders, but also changes model predictions. The interaction coefficients are
concentrated in magnitude but not hard sparse, and prediction-preserving
sparsity weakens with depth. Crucially, the dominant interactions vary across
inputs and predicted classes around a shared global core, while their overall
complexity changes little with sample difficulty. These results show that
dense ResNets implement an implicit form of soft routing: every block is
executed, but different inputs rely on different residual interaction experts.
Routing can therefore emerge at the level of functional contribution without
an explicit router or sparse execution.
\end{abstract}

\vspace{-0.6cm}
\section{Introduction}
\vspace{-0.3cm}
\label{sec:introduction}

Standard ResNets execute every residual block for every input and contain no
explicit router, yet we show that their functional contributions exhibit
implicit input-dependent routing. Residual networks enable the reliable
optimization of deep models through identity shortcuts and residual
updates~\citep{he2016deep}. The influential path-ensemble view
of~\citet{veit2016residual} interprets each block as offering a shortcut and a
residual branch, yielding exponentially many structural paths of different
lengths. This perspective helps explain network redundancy, robustness to
block removal, and training methods such as stochastic
depth~\citep{huang2016deep}.

The path-ensemble view is structural rather than functional. A masked-network
response is the cumulative output of a surviving nonlinear subnetwork, not the
additive contribution of an isolated path. Path counting therefore does not
reveal how residual branches individually or jointly support a prediction, or
whether their dominant contributions vary across inputs. These questions
connect ResNets to conditional computation. Dynamic residual models explicitly
gate blocks~\citep{wang2018skipnet,wu2018blockdrop}, while sparse
Mixture-of-Experts models activate selected experts for each
input~\citep{shazeer2017outrageously,lepikhin2020gshard,fedus2021switch}.
Any routing in a standard ResNet must instead arise at the level of functional
contribution. We therefore ask: \emph{Do standard ResNets exhibit implicit
functional routing over residual interactions, despite having no explicit
router?} We characterize such routing by contribution concentration and
systematic input dependence of the dominant interactions.

To answer this question, we treat a trained ResNet as an input-conditioned set
function over binary residual-branch masks. Boolean M\"obius inversion yields
a unique, exact decomposition into a shortcut-only baseline, individual
residual corrections, and higher-order
interactions~\citep{rota1964foundations,grabisch2000equivalent,kang2024mobius}.
We call high-contribution terms \emph{implicit residual experts}. Ranking these
experts defines an input-dependent mapping from $x$ to a dominant interaction
set $\mathcal{A}_K(x)$, which acts as an implicit functional router. The gates
are analytical interventions, so ordinary execution remains dense.
Figure~\ref{fig:framework-overview} schematically summarizes this framework:
the network remains densely executed, while M\"obius inversion reveals an
input-dependent distribution of functional contributions over residual
interaction experts. For an idealized smooth residual stack, we further show
that each fixed $k$-way interaction coefficient is
$\mathcal{O}(\lambda^k)$ as the residual scale $\lambda\to0$, predicting
higher-order attenuation in the small-residual regime.

We exhaustively analyze ImageNet-pretrained ResNet-18 and ResNet-34, whose
eight and sixteen residual blocks induce $255$ and $65{,}535$ non-constant
interaction terms. M\"obius inversion reconstructs the full logits to numerical
precision, whereas naive aggregation of masked responses fails. At the natural
operating point, the squared-coefficient spectra peak at orders five and ten,
showing that short structural paths do not imply low-order functional
interactions. Residual scaling shifts both spectra toward lower orders but also
substantially changes predictions.

Our central finding is that dense execution does not imply a fixed functional
pathway. In both models, each input induces a different allocation of
contribution over a shared dictionary of residual interactions. Dominant
experts vary systematically across inputs, predicted-class signatures are
strongly class-structured, and full-logit signatures retain a shared global
core. This establishes implicit functional routing: ResNets reuse common
interaction experts while changing which ones dominate each prediction. The
routing is soft rather than hard because smaller interactions remain decision
relevant, especially in ResNet-34, while expert complexity changes little with
input difficulty. Standard ResNets therefore route functional responsibility
across inputs despite executing every residual block.

Our contributions are: (1) an input-conditioned residual-mask framework that
provides a unique and exact decomposition into individual corrections and
higher-order interactions; (2) a coefficient-wise
$\mathcal{O}(\lambda^k)$ connection between residual scale and interaction
strength; and (3) empirical evidence that pretrained ResNets exhibit
class-structured, readout-dependent implicit routing, organized around a
shared global core and input-specific expert emphasis. Together, these results
show that routing can emerge at the level of functional contribution without
an explicit router or sparse execution.


\vspace{-0.3cm}
\section{Background and Theoretical Framework}
\vspace{-0.3cm}

To determine whether a trained ResNet exhibits input-dependent, routing-like specialization, we distinguish structural paths from functional contributions. The path-ensemble view identifies which computational paths exist, but not how residual branches jointly affect a particular prediction. For each input, we
therefore treat the network as a set function over binary residual-branch masks and use M\"obius inversion to decompose its output exactly into individual corrections and higher-order interactions. We then test whether a small subset faithfully preserves the output and whether the dominant interactions vary
systematically across inputs. Evidence for both properties is required to support a limited routing interpretation at the level of functional contribution rather than sparse execution.

\vspace{-0.3cm}
\subsection{Residual-Mask Responses and M\"obius Interaction Supports}
\vspace{-0.3cm}
\textbf{Residual-Mask Responses.} Let $[L]=\{1,\ldots,L\}$ index the residual blocks. For a mask $m\in\{0,1\}^L$, block $l$ computes $x_{l+1}(m)=\phi_l\!\left(s_l(x_l(m))+m_lF_l(x_l(m))\right)$, where $s_l$ is the identity or projection shortcut and $\phi_l$ is the post-addition activation. Thus, $m_l=0$ suppresses only the residual branch while preserving the shortcut. For a fixed input $x$, these interventions define the residual-mask response function $h_x:\{0,1\}^L\rightarrow\mathbb{R}^C$, where $h_x(m)$ is the resulting logit vector and $C$ is the number of classes. For scalar analysis, we define $v_x(m)=\rho_x(h_x(m))$, where the readout map $\rho_x$ is fixed across all
masks and may, for example, select the class predicted by the full model.

\begin{figure}[t]
  \centering
  \vspace{-1cm}
  \includegraphics[width=\textwidth]{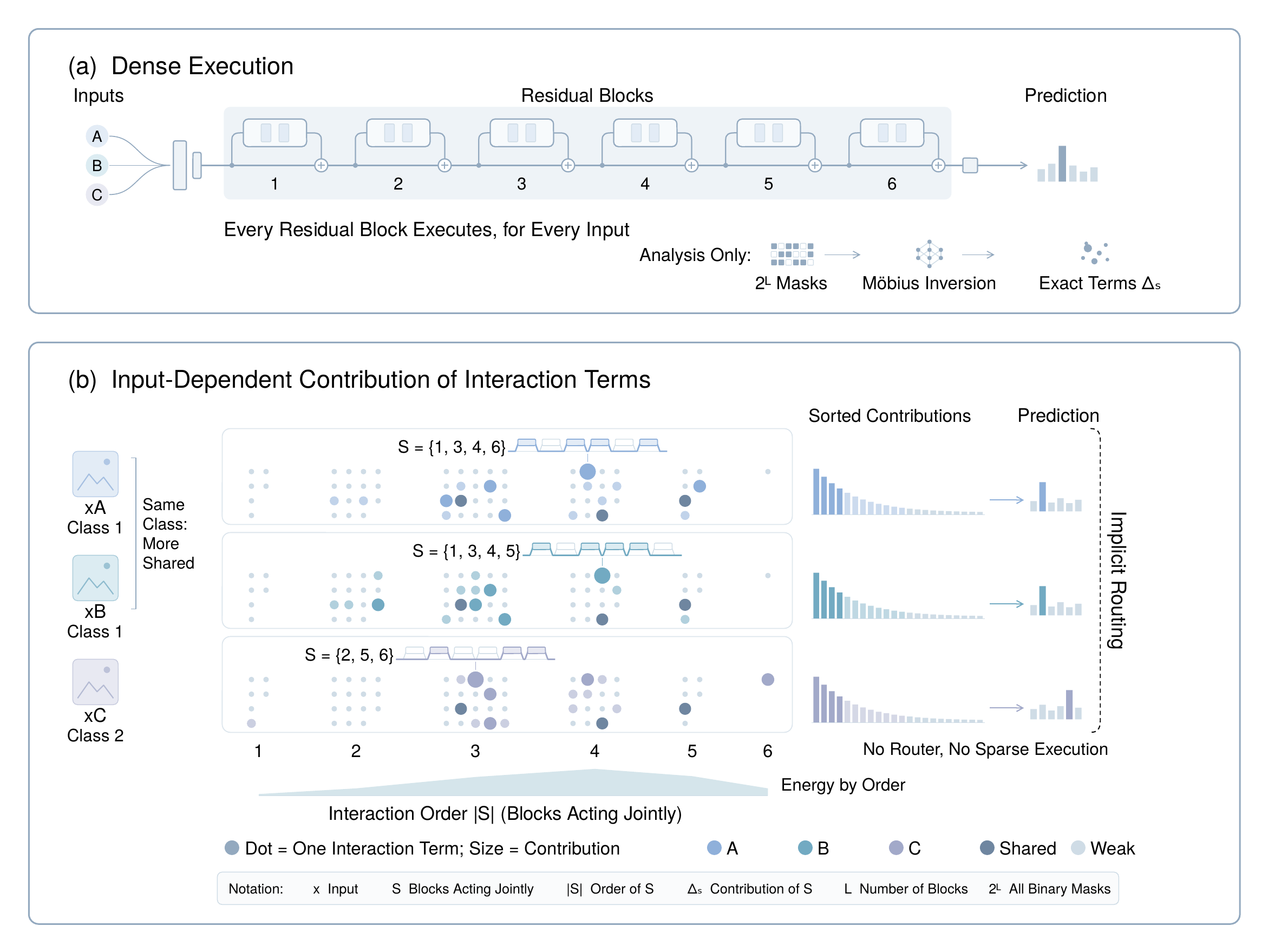}
  \vspace{-0.8cm}
\caption{From dense residual execution to input-dependent interaction
experts. (a) A standard ResNet executes every residual block for every input
and contains no explicit router. For analysis, we apply binary gates to its
$L$ residual branches, evaluate all $2^L$ masks, and use M\"obius inversion to
decompose the output exactly into interaction coefficients $\Delta_S$, one for
each branch subset $S$. These gates are analytical interventions and do not
alter ordinary dense inference. (b) Schematic of the resulting functional
decomposition for different inputs. Each dot represents an interaction,
positioned by its order $|S|$ and sized by its predicted-class contribution
$|\Delta_S v_x|$; the callouts identify the branches participating in the
strongest interactions. The dominant terms combine a shared global core with
input-dependent, class-structured variation. We refer to this concentration
and variation of functional contributions as \emph{implicit soft routing},
which is distinct from sparse execution or an explicit learned router.}
\label{fig:framework-overview}
\vspace{-0.2cm}
\end{figure}

\textbf{M\"obius Residual Interactions.} For any subset $S\subseteq[L]$, let $\mathbf{1}_S\in\{0,1\}^L$ denote the mask that activates exactly the residual branches in $S$. We apply the standard M\"obius transform on the Boolean lattice of residual-branch subsets~\citep{rota1964foundations,grabisch2000equivalent,kang2024mobius}. The resulting coefficient for $S$ removes, through inclusion--exclusion, the effects already present in its proper subsets. We refer to these coefficients as residual interaction coefficients.

\begin{definition}[M\"obius residual interaction coefficient]
\label{def:mobius-residual-interaction}
For a fixed input $x$ and scalar residual-mask response function $v_x:\{0,1\}^L \to \mathbb{R}$, the M\"obius residual interaction coefficient associated with a subset $S \subseteq [L]$ is defined as $\Delta_S v_x = \sum_{T \subseteq S} (-1)^{|S|-|T|}v_x(\mathbf{1}_T)$.
\end{definition}
Intuitively, $\Delta_S v_x$ isolates the functional contribution that is uniquely attributable to the joint activation of the residual branches in $S$, after all lower-order contributions from proper subsets of $S$ have been removed.

\begin{lemma}[M\"obius inverse relation]
\label{lem:mobius-inverse}
For every subset $U \subseteq [L]$, the scalar residual-mask response function satisfies $v_x(\mathbf{1}_U) = \sum_{S \subseteq U} \Delta_S v_x$. 
\end{lemma}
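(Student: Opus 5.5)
The plan is to substitute Definition~\ref{def:mobius-residual-interaction} into the right-hand side and exchange the order of summation. For fixed $U\subseteq[L]$, write
\[
\sum_{S\subseteq U}\Delta_S v_x=\sum_{S\subseteq U}\sum_{T\subseteq S}(-1)^{|S|-|T|}v_x(\mathbf{1}_T)=\sum_{T\subseteq U}v_x(\mathbf{1}_T)\sum_{S:\,T\subseteq S\subseteq U}(-1)^{|S|-|T|}.
\]
The exchange is legitimate because all sums are finite, and the pairs $(S,T)$ with $T\subseteq S\subseteq U$ can be enumerated either way. The whole statement then reduces to evaluating the inner coefficient $c(T,U)=\sum_{T\subseteq S\subseteq U}(-1)^{|S|-|T|}$.

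The key step is to show that $c(T,U)=1$ if $T=U$ and $c(T,U)=0$ otherwise. I will parametrize $S=T\cup R$ with $R\subseteq U\setminus T$, so $|S|-|T|=|R|$. This gives
\[
c(T,U)=\sum_{R\subseteq U\setminus T}(-1)^{|R|}=\sum_{j=0}^{n}\binom{n}{j}(-1)^j=(1-1)^n,\qquad n=|U\setminus T|,
\]
by the binomial theorem. It equals $0$ when $n\ge1$ and $1$ when $n=0$, using the convention $0^0=1$, i.e., the sum over $R\subseteq\emptyset$ is the single term $R=\emptyset$. Substituting back, only $T=U$ survives, and the right-hand side collapses to $v_x(\mathbf{1}_U)$.

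The only delicate point is the index bookkeeping in the interchange and the empty-set case $n=0$. An alternative I could mention is induction on $|U|$, or the observation that the zeta and Möbius functions of the Boolean lattice are inverse, citing \citet{rota1964foundations}. The direct computation above is self-contained, however, and needs nothing from the network structure: $v_x$ is an arbitrary function on $\{0,1\}^L$. I will also note that the same argument applies componentwise to the vector-valued $h_x$, which supports the exact reconstruction claim for full logits.
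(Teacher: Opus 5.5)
Your proposal is correct and follows the paper's proof essentially step for step: you substitute the definition, exchange the finite sums, reparametrize $S=T\cup R$ with $R\subseteq U\setminus T$, and use $(1-1)^{|U\setminus T|}$ to eliminate every term except $T=U$. Your handling of the $n=0$ case and the componentwise extension to $h_x$ also match how the paper uses the lemma.
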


The proof of Lemma~\ref{lem:mobius-inverse} is provided in Appendix~\ref{app:mobius-proof}. Lemma~\ref{lem:mobius-inverse} shows that the coefficients in Definition~\ref{def:mobius-residual-interaction} form an exact inclusion--exclusion decomposition of the residual-mask response function. In particular, for the full network, $v_x(\mathbf{1}_{[L]}) = \sum_{S \subseteq [L]} \Delta_S v_x$. Applying the same transform to each logit coordinate gives the vector coefficients $\Delta_S h_x\in\mathbb{R}^C$, which satisfy $h_x(\mathbf{1}_{[L]})=\sum_{S\subseteq[L]}\Delta_S h_x$. We interpret first-order terms as residual corrections. For a single residual branch $l$,
$\Delta_{\{l\}} v_x = v_x(\mathbf{1}_{\{l\}}) - v_x(\mathbf{0})$, which measures the functional effect of activating branch $l$ relative to the shortcut-only baseline. Higher-order terms correspond to \emph{residual interactions}. For example, for two residual branches $i$ and $j$, $\Delta_{\{i,j\}} v_x = v_x(\mathbf{1}_{\{i,j\}}) - v_x(\mathbf{1}_{\{i\}}) - v_x(\mathbf{1}_{\{j\}}) + v_x(\mathbf{0})$. This quantity measures the non-additive effect produced by using branches $i$ and $j$ together. If $\Delta_{\{i,j\}}v_x$ is large, then the joint effect of the two residual branches cannot be explained by the sum of their individual corrections.

\textbf{Sparse Residual Experts.} Let $\mathcal{R} = \{S\subseteq[L]:S\neq\emptyset\}$ denote the dictionary of non-empty residual interactions. We define the scalar and vector contribution magnitudes of an interaction $S\in\mathcal{R}$ as $a_S^v(x)=|\Delta_S v_x|$ and $a_S^h(x)=\|\Delta_S h_x\|_2$, and write $a_S^r$ with $r\in\{v,h\}$ for either. A subset $S$ with large $a_S^r(x)$ acts as an \emph{implicit residual expert} for input $x$: it is a residual correction or interaction that substantially affects the model's output under the chosen readout. The effective expert set consists of the top-$K$ interactions, $\mathcal{A}_K^r(x)=\operatorname{TopK}_{S\in\mathcal{R}}a_S^r(x)$. We say that the residual-mask response admits a sparse M\"obius representation for input $x$ if a small set of residual interactions explains most of the contribution:
\begin{equation}
    v_x(\mathbf{1}_{[L]})
    \approx
    \Delta_{\emptyset}v_x
    +
    \sum_{S\in\mathcal{A}_{K}^v(x)}
    \Delta_S v_x,
    \qquad
    K\ll|\mathcal{R}|.
\end{equation}

This sparsity is not execution sparsity. A standard ResNet still evaluates all residual blocks in the ordinary forward pass. Rather, it is effective contribution sparsity: among exponentially many possible residual corrections and interactions, only a small subset may dominate the functional contribution for a particular input.

\textbf{Interaction-Order Statistics.} For either readout $r\in\{v,h\}$, we define the squared coefficient mass at order $k$ as $E_k^r(x)=\sum_{S\in\mathcal{R}:\,|S|=k}a_S^r(x)^2$ and the total non-constant mass as $Z^r(x)=\sum_{k=1}^{L}E_k^r(x)$. When $Z^r(x)>0$, the normalized order spectrum is $\widetilde E_k^r(x)=E_k^r(x)/Z^r(x)$, and the effective interaction order is $\kappa^r(x)=\sum_{k=1}^{L}k\,\widetilde E_k^r(x)$. These statistics summarize the concentration and order distribution of the M\"obius coefficient mass. They do not by themselves measure reconstruction fidelity or establish input-dependent routing.

\textbf{Input-dependent Implicit Routing.} In sparse MoE models, a router selects a small set of experts for each input. A standard ResNet has no explicit router. Nevertheless, the residual expert set $\mathcal{A}_K^r(x)$ is input-dependent because the residual-mask responses depend on the input representation at every block. Thus, two inputs $x_i$ and $x_j$ may have different effective expert sets: $\mathcal{A}_K^r(x_i)\neq\mathcal{A}_K^r(x_j)$. This motivates an implicit routing interpretation: the input does not explicitly choose which residual experts to execute, but it induces which residual corrections and interactions are functionally active. We can measure this input dependence by comparing top-$K$ expert sets across samples: $J_K^r(x_i,x_j)=|\mathcal{A}_K^r(x_i)\cap\mathcal{A}_K^r(x_j)|/|\mathcal{A}_K^r(x_i)\cup\mathcal{A}_K^r(x_j)|$. If samples from the same class or with similar visual structure have higher overlap than unrelated samples, then residual interactions exhibit a form of input-dependent specialization. This is the sense in which a trained ResNet may behave like an implicit soft MoE: not by sparse execution, but by sparse, input-dependent functional contribution.

\vspace{-0.3cm}
\subsection{Low-Order Bias from Residual Scaling}
\vspace{-0.3cm}
We also note an interesting theoretical phenomenon: lower-order interactions may dominate as the residual scale decreases. Consider a scaled residual network
\begin{equation}
x_{l+1}^{(\lambda)}(m)
=
x_l^{(\lambda)}(m)
+
\lambda m_l F_l \left(x_l^{(\lambda)}(m)\right),
\qquad
m\in[0,1]^L,
\quad
0 < \lambda \leq \lambda_0 .
\label{eq:scaled-residual-network}
\end{equation}
For small $\lambda$, each residual branch acts as a small perturbation to the shortcut trajectory. An interaction among $k$ branches arises from the joint effect of $k$ such perturbations, suggesting that higher-order interactions become increasingly attenuated as $\lambda$ decreases.

\begin{proposition}[Residual scaling suppresses high-order interactions]
\label{prop:residual-scaling}
Let $v_x^{(\lambda)}(m)$ be the scalar residual-mask response function induced by the scaled network in Equation~\ref{eq:scaled-residual-network}. Suppose that, for the fixed input $x$, the residual maps $F_l$ and the final scalar readout are sufficiently smooth in a neighborhood of the shortcut trajectory, and that their derivatives up to order $L$ are uniformly bounded for $0<\lambda\leq\lambda_0$. Then for every subset $S\subseteq[L]$ with $|S|=k$, there exists a constant $B_{S,x}>0$ independent of $\lambda$ such that $\left|\Delta_S v_x^{(\lambda)}\right|\leq B_{S,x}\lambda^k$. The proof is given in Appendix~\ref{app:residual-scaling-proof}. 
\label{proposition:residual-scaling-bound}
\end{proposition}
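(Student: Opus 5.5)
The plan is to extend the mask to continuous gates $m\in[0,1]^L$, as Equation~\ref{eq:scaled-residual-network} already allows, and to express the M\"obius coefficient as an iterated finite difference. For fixed $\lambda$ define $g_\lambda(m)=v_x^{(\lambda)}(m)$ on $[0,1]^L$. Restricting to coordinates in $S$ and setting the others to $0$, Definition~\ref{def:mobius-residual-interaction} says that $\Delta_S v_x^{(\lambda)}$ is the $k$-fold mixed difference of $g_\lambda$ in the variables $m_l$, $l\in S$, taken between $0$ and $1$. If $g_\lambda$ is $C^k$ in these variables, the fundamental theorem of calculus applied $k$ times gives
\begin{equation*}
\Delta_S v_x^{(\lambda)}=\int_{[0,1]^S}\frac{\partial^k g_\lambda}{\prod_{l\in S}\partial m_l}\big(t_S,0_{[L]\setminus S}\big)\,dt_S .
\end{equation*}
It therefore suffices to show that the mixed partial $\partial^k g_\lambda/\prod_{l\in S}\partial m_l$ is bounded by $B_{S,x}\lambda^k$, uniformly over $t\in[0,1]^S$ and over $0<\lambda\le\lambda_0$.

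The key observation is that $m_l$ enters only through the product $\lambda m_l$. Substitute $u_l=\lambda m_l$ and define $G(u)$ as the readout of the network $x_{l+1}=x_l+u_lF_l(x_l)$. This map does not depend on $\lambda$ at all, and $g_\lambda(m)=G(\lambda m)$. By the chain rule,
\begin{equation*}
\frac{\partial^k g_\lambda}{\prod_{l\in S}\partial m_l}(m)=\lambda^k\,\frac{\partial^k G}{\prod_{l\in S}\partial u_l}(\lambda m).
\end{equation*}
The points $\lambda m$ range over the compact box $[0,\lambda_0]^L$. So it remains to bound the mixed partials of $G$ of order at most $L$ on that box. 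Then $B_{S,x}=\sup_{u\in[0,\lambda_0]^L}\big|\partial^k G/\prod_{l\in S}\partial u_l\big|$ works.

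The main step, and the only real obstacle, is establishing that $G$ is $C^L$ with bounded derivatives on $[0,\lambda_0]^L$. I will argue by induction on the layer index. Each state $x_{l+1}(u)$ is a composition of $x_l(u)$ with the smooth map $(y,u_l)\mapsto y+u_lF_l(y)$. Fa\`a di Bruno's formula then expresses the mixed $u$-derivatives of $x_{l+1}$ up to order $L$ as polynomials in two kinds of quantities: derivatives of $F_l$ up to order $L$ evaluated along the trajectory, and lower-layer derivatives of $x_l$. A final composition handles the readout.

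The hypothesis supplies what the induction needs. It requires the $F_l$ and the readout to be smooth with derivatives bounded up to order $L$ in a neighborhood of the shortcut trajectory. I also need all trajectories $x_l(u)$ with $u\in[0,\lambda_0]^L$ to stay inside that neighborhood. I will secure this by taking $\lambda_0$ small: $x_l(u)-x_l(0)=O(\lambda_0)$ follows from a Gr\"onwall-type discrete bound using the boundedness of $F_l$. Alternatively, I can read the stated uniform-boundedness assumption as already guaranteeing it.

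Both Fa\`a di Bruno and the chain rule use only sums and products of bounded quantities, so each mixed derivative stays bounded uniformly in $u$, and hence in $\lambda$. This yields $|\Delta_S v_x^{(\lambda)}|\le B_{S,x}\lambda^{|S|}$. In the case $S=\emptyset$ the bound is just boundedness of $G$.
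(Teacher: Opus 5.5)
Your proof is correct. It shares its first step with the paper: both write $\Delta_S v_x^{(\lambda)}$ as the integral of the mixed partial $\partial_S v_x^{(\lambda)}$ over the face $\{m_{\bar S}=0\}$, by applying the fundamental theorem of calculus once per coordinate in $S$.

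Where you differ is in how the factor $\lambda^k$ is obtained. The paper keeps $\lambda$ inside the recursion and proves by induction over blocks that $\|\partial_A x_l^{(\lambda)}(m)\|\le C_{l,A,x}\lambda^{|A|}$, splitting into two cases:
\begin{itemize}
\item $l\in A$: the derivative hits the explicit factor $m_l$ and uses up the prefactor $\lambda$;
\item $l\notin A$: the chain-rule terms indexed by partitions $\{P_1,\ldots,P_r\}$ of $A$ carry $\lambda^{|P_1|+\cdots+|P_r|}=\lambda^{|A|}$, times a harmless extra $\lambda$.
\end{itemize}
It then repeats the partition argument for the readout.

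You instead observe that $\lambda$ and $m$ enter only through $u=\lambda m$. Hence $v_x^{(\lambda)}(m)=G(\lambda m)$ with $G$ independent of $\lambda$, and the diagonal linear chain rule gives the exact identity $\partial_S v_x^{(\lambda)}(m)=\lambda^k(\partial_S G)(\lambda m)$. This buys several things:
\begin{itemize}
\item All power-of-$\lambda$ bookkeeping disappears. The only analytic input is the $\lambda$-free fact that $G$ is $C^L$ on the compact box $[0,\lambda_0]^L$. Boundedness of its derivatives then follows from continuity, so your Fa\`a di Bruno induction only needs to establish smoothness.
\item You get more than the bound: $\lambda^{-k}\Delta_S v_x^{(\lambda)}=\int_{[0,1]^k}\partial_S G(\lambda t_S,0)\,dt_S\to\partial_S G(0)$. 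So the rate $\lambda^k$ is attained whenever that mixed partial at the shortcut point is nonzero.
\item The substitution carries over verbatim to the real block form $\phi_l(s_l(x_l)+\lambda m_lF_l(x_l))$ with smooth $s_l,\phi_l$.
\end{itemize}
The paper's route, in exchange, produces the layerwise scaling of hidden-state gate derivatives as a standalone intermediate result.

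On the domain issue, your second reading is exactly what the paper assumes: the hypotheses already keep all trajectories for $m\in[0,1]^L$, $0<\lambda\le\lambda_0$ in a compact set where $F_l$ and the readout have bounded derivatives. Shrinking $\lambda_0$ instead would change the quantifier in the statement. That can be patched, provided $v_x^{(\lambda)}$ stays bounded for $\lambda$ bounded away from $0$, using the trivial estimate $|\Delta_S v_x^{(\lambda)}|\le 2^k\sup|v_x^{(\lambda)}|$ on that range.
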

Proposition~\ref{prop:residual-scaling} suggests that residual networks are naturally biased toward low-order residual interactions when residual updates are small. Depth therefore does not merely create long computational chains; it creates a large dictionary of possible residual interactions, while the effective contribution may remain concentrated in a sparse, not high-order subset.


\vspace{-0.3cm}
\section{From Structural Paths to Functional Interactions}
\vspace{-0.3cm}

This section empirically tests functional interactions behind the structural path view. We first show that masked-subnetwork outputs are not additive path contributions, while their Möbius coefficients exactly reconstruct the logits. We then measure interaction energy across residual-branch orders and find that pretrained ResNets concentrate mass at mid-to-high orders. Finally, we scale residual branches at inference time to test whether smaller residual updates lower interaction order and how much they change predictions.

\textbf{Experimental Setup.} We use ImageNet-pretrained ResNet-18 and ResNet-34 architectures~\citep{he2016deep}, treating their residual blocks as maskable residual branches. ResNet-18 has $L=8$ such branches and is our main model; ResNet-34 has $L=16$ branches and is used to verify that the masked-output reconstruction phenomenon is not specific to ResNet-18. Each branch receives an externally specified binary mask $m_l\in\{0,1\}$, so block $l$ computes $x_{l+1}(m)=x_l(m)+m_lF_l\!\left(x_l(m)\right)$. The shortcut pathway, including projection shortcuts, is always preserved; thus, $m_l=0$ removes only the residual branch. For ResNet-18, we evaluate a class-balanced subset of the ImageNet-2012 validation set~\citep{russakovsky2015imagenet} with $10$ images per class ($N=10{,}000$) under all $2^8=256$ residual-branch masks. For ResNet-34, we evaluate $N=1{,}000$ images under all $2^{16}=65{,}536$ masks for the reconstruction diagnostic. For $S\subseteq[L]$, $\mathbf{1}_S\in\{0,1\}^L$ denotes the mask activating exactly the branches in $S$, and $h_x(\mathbf{1}_S)\in\mathbb{R}^{1000}$ denotes the corresponding pre-softmax logits. We write $h_x^{\mathrm{full}}=h_x(\mathbf{1}_{[L]})$ and $h_x^{\emptyset}=h_x(\mathbf{0})$.

\textbf{Observation~\ding{182}: Masked-subnetwork outputs are not additive path contributions.}
Enumerating structural paths does not imply that the corresponding masked-subnetwork outputs form an additive decomposition of the network prediction. If each masked output represented an independent path contribution, simple aggregations across masks should recover the full-network logits. We therefore ask: \textit{Can direct summation, averaging, or shortcut-baseline centering of masked-subnetwork outputs reconstruct the full ResNet logits?} We consider three naive aggregations: the raw subset sum $A_x^{\mathrm{raw}}=\sum_{S\subseteq[L]}h_x(\mathbf{1}_S)$, the mean subset output $A_x^{\mathrm{mean}}=2^{-L}A_x^{\mathrm{raw}}$, and the centered subset sum $A_x^{\mathrm{ctr}}=h_x^{\emptyset}+\sum_{\emptyset\neq S\subseteq[L]}(h_x(\mathbf{1}_S)-h_x^{\emptyset})$. We compare them with the M\"obius reconstruction $A_x^{\mathrm{Mobius}}=\sum_{S\subseteq[L]}\Delta_S h_x$, which is computed from the same masked evaluations and equals $h_x^{\mathrm{full}}$ in exact arithmetic. The latter therefore also provides a numerical check of our implementation. We report the norm ratio, relative $\ell_2$ error, optimally rescaled error using the least-squares scalar $\alpha_x^\star(A_x)=\langle h_x^{\mathrm{full}},A_x\rangle/\|A_x\|_2^2$, cosine similarity, and top-1 agreement. Formal definitions are provided in Appendix~\ref{app:naive-path-sum}.

\begin{wraptable}{r}{0.65\textwidth}
\vskip-0.4in
\vspace{0.5cm}
\centering
\caption{Naive additive reconstructions versus exact M\"obius reconstruction on ResNet-18 and ResNet-34. Entries are means, rounded to three decimal places. ResNet-18 uses $N=10{,}000$ ImageNet validation images; ResNet-34 uses $N=1{,}000$.}
\vspace{-0.3cm}
\label{tab:naive_vs_mobius_main}
\scalebox{0.62}{
\begin{tabular}{clccccc}
\toprule[1.5pt]
& \textbf{Method} & \textbf{Norm ratio} & \textbf{Rel.\ error} ($\downarrow$) & \textbf{Scaled error} ($\downarrow$) & \textbf{Cosine} ($\uparrow$) & \textbf{Top-1 agree} ($\uparrow$) \\
\midrule
\multirow{4}{*}{\textbf{18}}
& $A_x^{\mathrm{raw}}$  & $79.400$        & $79.200$        & $0.961$ & $0.222$ & $0.006$ \\
& $A_x^{\mathrm{mean}}$ & $0.310$         & $0.977$         & $0.961$ & $0.222$ & $0.006$ \\
& $A_x^{\mathrm{ctr}}$  & $77.400$        & $77.200$        & $0.959$ & $0.229$ & $0.006$ \\
& M\"obius\tablefootnote{ResNet-18, before rounding: relative error $8.51\times10^{-7}$, optimally scaled error $8.55\times10^{-7}$.} & \textbf{1.000} & \textbf{0.000} & \textbf{0.000} & \textbf{1.000} & \textbf{1.000} \\
\midrule[1.2pt]
\multirow{4}{*}{\textbf{34}}
& $A_x^{\mathrm{raw}}$  & $18{,}852.327$  & $18{,}851.887$  & $0.889$ & $0.440$ & $0.025$ \\
& $A_x^{\mathrm{mean}}$ & $0.288$         & $0.910$         & $0.889$ & $0.440$ & $0.025$ \\
& $A_x^{\mathrm{ctr}}$  & $17{,}750.219$  & $17{,}749.760$  & $0.879$ & $0.459$ & $0.024$ \\
& M\"obius\tablefootnote{ResNet-34, before rounding: relative error is $0$; optimally scaled error is $1.76\times10^{-16}$.} & \textbf{1.000} & \textbf{0.000} & \textbf{0.000} & \textbf{1.000} & \textbf{1.000} \\
\bottomrule[1.5pt]
\end{tabular}
}
\vspace{-0.4cm}
\end{wraptable}

\textbf{Results and Analysis.}
Table~\ref{tab:naive_vs_mobius_main} shows that all three naive aggregations fail to reconstruct the full-network logits. On ResNet-18, the raw and centered sums inflate the logit norm by factors of approximately $79.4$ and $77.4$, yielding relative errors of $79.2$ and $77.2$, respectively. Averaging prevents this magnitude explosion but still produces a relative error of $0.977$. Optimal rescaling does not resolve the discrepancy: the scaled errors remain between $0.959$ and $0.961$, the cosine similarities are only $0.222$--$0.229$, and top-1 agreement is $0.6\%$. ResNet-34 exhibits the same qualitative behavior, with even greater norm inflation for the raw and centered sums and no naive method exceeding $2.5\%$ top-1 agreement. In contrast, the M\"obius reconstruction recovers the full logits up to floating-point precision on both architectures. Thus, residual masks define meaningful interventions, but their outputs cannot be interpreted as disjoint additive path contributions. The exact additive units are the M\"obius coefficients $\Delta_S h_x$, which isolate the incremental contribution of each subset through inclusion--exclusion.

\textbf{Observation~\ding{183}: Interaction mass concentrates at mid-to-high orders.}
Having established that exact additive contributions are given by the M\"obius coefficients, we next examine how their squared mass is distributed across interaction orders. For $S\subseteq[L]$, the order $|S|$ counts the number of residual branches that contribute jointly after all lower-order effects have been removed through inclusion--exclusion. The diagnostic question is: \textit{At their pretrained operating point, are ResNets dominated by low-order residual interactions, or is their functional contribution concentrated at mid-to-high orders?} We consider two readouts: the scalar predicted-class logit $v_x(m)=h_{x,\hat y(x)}(m)$, where $\hat y(x)=\arg\max_c h_{x,c}^{\mathrm{full}}$ is fixed across masks, and the full logit vector $h_x(m)$. For notational compactness, we suppress the dependence on $x$ below and write $Z^r=\sum_{j=1}^{L}E_j^r$. For each readout $r\in\{v,h\}$, the order statistics are
{\small
\begin{equation}
E_k^r=\sum_{|S|=k}(a_S^r)^2,\quad
\widetilde E_k^r=\frac{E_k^r}{Z^r},\quad
C_{\le q}^r=\frac{\sum_{j=1}^{q}E_j^r}{Z^r},\quad
\kappa^r=\frac{\sum_{j=1}^{L}jE_j^r}{Z^r},\quad
M_k^r=\frac{E_k^r}{\binom{L}{k}}.
\end{equation}}
Here, $\widetilde E_k^r$ measures the fraction of squared coefficient mass, which we call \emph{interaction energy}, assigned to order $k$; $C_{\le q}^r$ is its cumulative mass through order $q$; and $\kappa^r$ summarizes the spectrum by its energy-weighted mean order. Because the M\"obius coefficients are not generally orthogonal, these quantities describe the distribution of coefficient mass rather than the fraction of logit norm explained. The per-interaction statistic $M_k^r$ controls for the $\binom{L}{k}$ possible subsets at each order. All reported quantities are dataset means of these per-example statistics; standard errors and additional results are provided in Appendix~\ref{app:interaction-order-spectrum}.

\begin{figure}[t]
\centering
\vspace{-1cm}
\includegraphics[width=1\linewidth]{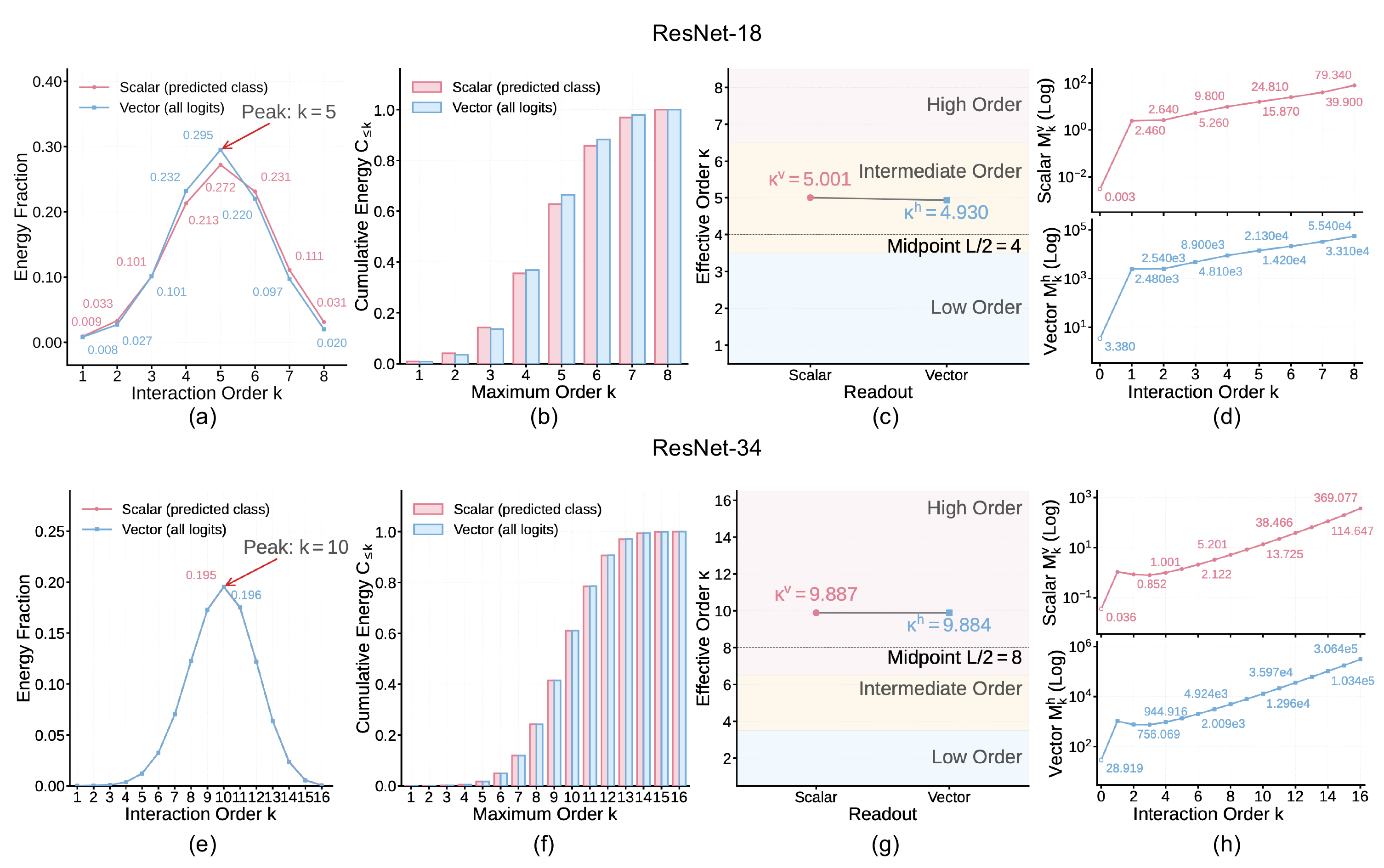}
\vspace{-0.5cm}
\caption{Residual interaction-order spectra for scalar predicted-class and vector-logit readouts. The top and bottom rows show ResNet-18 and ResNet-34, respectively. From left to right, the columns report normalized interaction energy $\widetilde E_k$, cumulative low-order energy $C_{\le k}$, effective interaction order $\kappa$, and mean per-interaction energy $M_k$ on a logarithmic scale.}
\label{fig:interaction-order-spectrum}
\end{figure}

\begin{wrapfigure}{r}{0.62\textwidth}
\vskip-0.3in
\vspace{0.3cm}
    \centering
    \includegraphics[width=\linewidth]{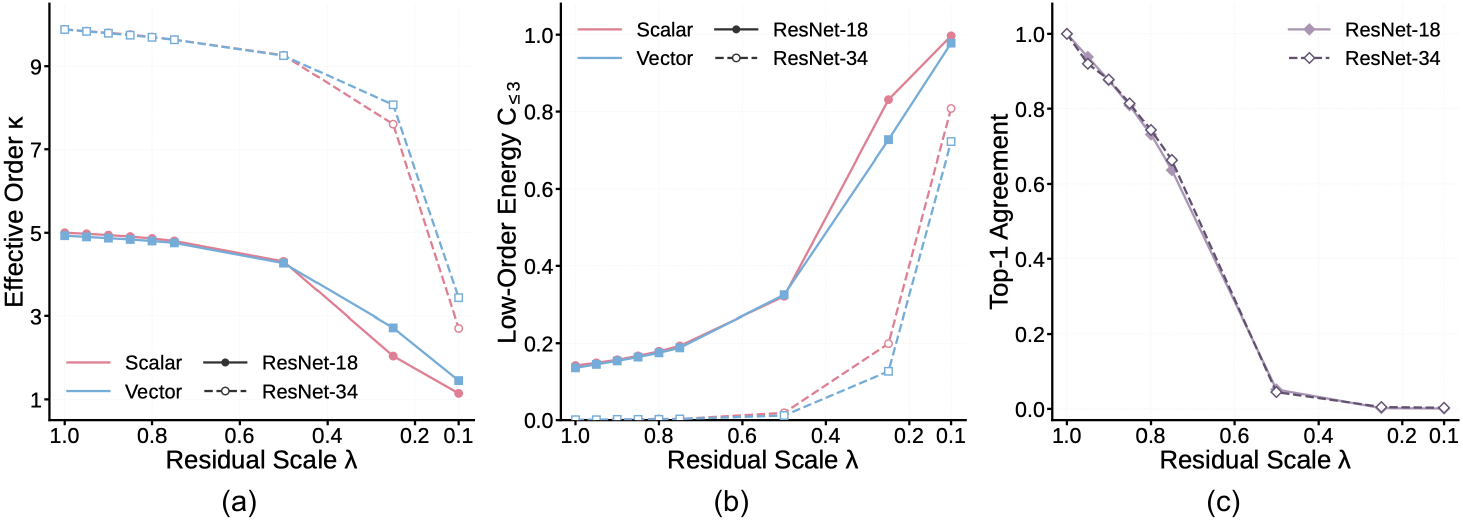}
    \vspace{-0.5cm}
    \caption{Inference-time residual-scaling sweep on pretrained ResNet-18 ($N=10{,}000$) and ResNet-34 ($N=1{,}000$), without retraining. Solid and dashed lines denote ResNet-18 and ResNet-34, respectively. (a) Effective interaction order $\kappa$ for the scalar and vector readouts. (b) Cumulative low-order energy $C_{\le3}$. (c) Top-1 agreement of each scaled model with its prediction at $\lambda=1$.}
    \label{fig:residual-scaling-sweep}
\end{wrapfigure}

\textbf{Results and Analysis.}
Figure~\ref{fig:interaction-order-spectrum} shows closely aligned scalar and vector spectra for both architectures. In ResNet-18, both peak at $k=5$: orders $4$--$6$ account for $71.6\%$ and $74.7\%$ of scalar and vector energy, whereas orders up to three contribute only $14.17\%$ and $13.56\%$. The effective orders, $\kappa^v=5.001$ and $\kappa^h=4.930$, also exceed the midpoint $L/2=4$. ResNet-34 shows the same pattern, with both spectra peaking at $k=10$ and effective orders $\kappa^v=9.887$ and $\kappa^h=9.884$, above the midpoint $L/2=8$. For both architectures, $M_k$ increases toward higher orders and is largest at $k=L$, showing that the mid-order peaks are not merely caused by subset counts. Thus, at $\lambda=1$, pretrained ResNets operate in a mid-to-high-order interaction regime: short structural paths do not imply low-order functional interactions.

\textbf{Observation~\ding{184}: Residual scaling lowers interaction order but changes predictions.}
The mid-to-high-order spectra above are measured at the natural operating point $\lambda=1$. Proposition~\ref{proposition:residual-scaling-bound} predicts that a $k$-way interaction is attenuated as $O(\lambda^k)$ in the small-update regime, suggesting that decreasing the residual scale should shift interaction mass toward lower orders. We therefore ask: \textit{Does inference-time residual scaling causally lower functional interaction order, and can it do so without substantially changing the model's predictions?} Without retraining, we sweep $\lambda\in\{1.0,0.95,0.9,0.85,0.8,0.75,0.5,0.25,0.1\}$ for both ResNet-18 and ResNet-34. At each scale, we recompute the effective order $\kappa$ and cumulative low-order energy $C_{\le3}$, and measure top-1 agreement with each model's prediction at $\lambda=1$. The denser sampling over $\lambda\in[0.8,1.0]$ tests whether interaction order decreases before prediction behavior changes.

\textbf{Results and Analysis.}
Reducing $\lambda$ shifts both scalar and vector interactions toward lower orders in both architectures (Figure~\ref{fig:residual-scaling-sweep}a--b). For ResNet-18, as $\lambda$ decreases from $1$ to $0.1$, $\kappa^v$ falls from $5.001$ to $1.144$ and $\kappa^h$ from $4.930$ to $1.452$, while $C_{\le3}^v$ rises from $0.142$ to $0.997$ and $C_{\le3}^h$ from $0.136$ to $0.978$. ResNet-34 shows the same trend: its effective orders fall from approximately $9.9$ to below $3.5$, while its scalar and vector low-order mass rises to approximately $0.80$ and $0.72$. These results support the qualitative prediction of Proposition~\ref{proposition:residual-scaling-bound}. However, the shift also changes the model's predictions (Figure~\ref{fig:residual-scaling-sweep}c). For ResNet-18, top-1 agreement falls to $0.938$ at $\lambda=0.95$, $0.731$ at $\lambda=0.8$, and $0.052$ at $\lambda=0.5$; ResNet-34 follows a similar trajectory. Thus, residual scaling can lower interaction order, but it does not preserve model behavior. Together, Observations~\ding{183}--\ding{184} show that pretrained ResNets are not dominated by low-order interactions at $\lambda=1$. This does not exclude concentration on a small number of specific subsets, which we examine in Section~\ref{sec:sparse-routing}.


\vspace{-0.5cm}
\section{A Residual Network Routes Like an MoE,Without a Router }~\label{sec:sparse-routing}
\vspace{-0.8cm}

The preceding section studies how interaction energy is distributed across orders; here we ask whether it is concentrated on a few input-specific residual subsets. We test whether a small number of experts can reconstruct the network readouts, whether the selected experts vary across inputs and classes, and whether their complexity depends on sample difficulty. Together, these observations support a sparse, input-dependent soft-routing interpretation of residual computation. All observations use ImageNet-pretrained ResNet-18 and ResNet-34 and report both scalar and vector readouts. Observation~\ding{185} uses unlabeled ImageNet-1K test subsets with $N=10{,}000$ for ResNet-18 and $N=1{,}000$ for ResNet-34. Observations~\ding{186}--\ding{187} use a class-balanced ImageNet-1K validation subset containing $10$ images per class ($N=10{,}000$) for both architectures.

\begin{figure*}[t]
\centering
\vspace{-0.6cm}
\includegraphics[width=\linewidth]{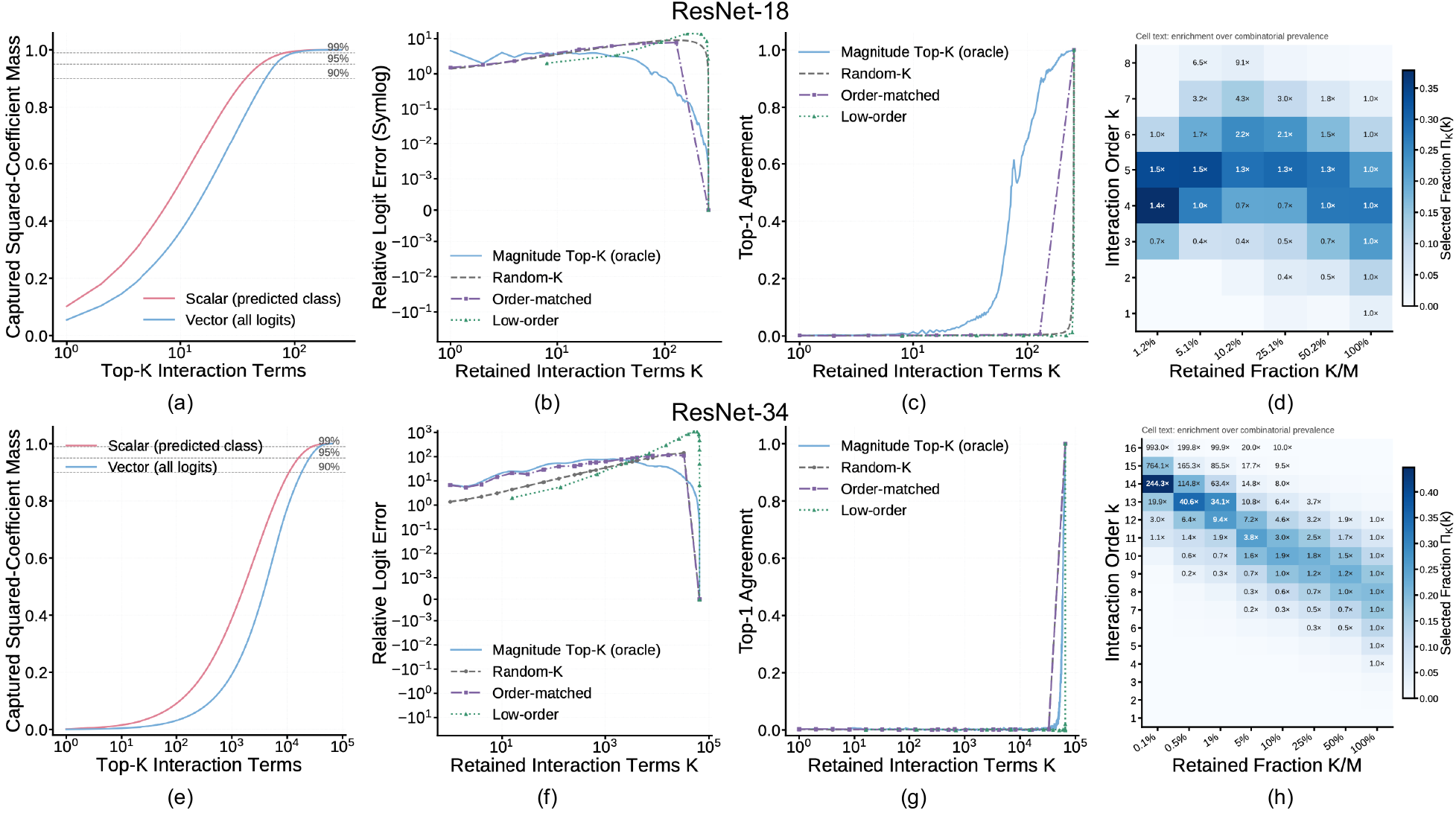}
\vspace{-0.5cm}
\caption{Top-$K$ interaction concentration and reconstruction.
ResNet-18 results are shown on top and ResNet-34 results on the bottom.
(a,e) Captured squared coefficient mass.
(b,f) Relative logit error.
(c,g) Agreement with the full-model predictions.
(d,h) Order composition and enrichment of the vector Top-$K$ supports.
Most coefficient mass is concentrated in a sparse interaction core, while a
long tail progressively refines the final prediction.}
\label{fig:topk-reconstruction}
\vspace{-0.1cm}
\end{figure*}

\textbf{Observation~\ding{185}: Residual interactions form a sparse but long-tailed expert representation.} We ask whether a compact per-input subset of residual interactions can preserve the functional contribution and prediction of the full model. Let $\mathcal{A}_K^v(x)$ and $\mathcal{A}_K^h(x)$ denote the $K$ interactions with the largest scalar and vector magnitudes, respectively. Their truncated reconstructions are
\begin{equation}
\label{eq:main-topk-reconstruction}
\widehat v_{x,K}
=
\Delta_{\emptyset}v_x+
\sum_{S\in\mathcal{A}_K^v(x)}\Delta_Sv_x,
\qquad
\widehat h_{x,K}
=
\Delta_{\emptyset}h_x+
\sum_{S\in\mathcal{A}_K^h(x)}\Delta_Sh_x .
\end{equation}
Because selection uses the complete M\"obius decomposition, these are post-hoc oracle supports that measure representational rather than execution sparsity. We enumerate all masks for ResNet-18 on $10{,}000$ ImageNet test images and ResNet-34 on $1{,}000$ images, producing $255$ and $65{,}535$ non-empty interaction experts, respectively. We evaluate captured coefficient mass, relative logit error, and agreement with the full-model prediction, and compare magnitude Top-$K$ with random-$K$, order-matched, and low-order selection.

\textbf{Results and Analysis.}
Both models exhibit substantial interaction sparsity. For ResNet-18, Top-64 contains $25.1\%$ of the dictionary but captures $93.64\%$ of the vector
coefficient mass. Top-128 captures $99.71\%$ of the mass and preserves $87.66\%$ of the full-model predictions. For ResNet-34, whose $16$ blocks
induce $2^{16}-1=65{,}535$ experts, retaining $25\%$ of the dictionary captures $95.28\%$ of the scalar mass, while $38.3\%$ captures $95\%$ of the
vector mass. Prediction reconstruction requires a broader support than mass recovery, revealing a decision-relevant long tail. For ResNet-34, retaining half of the dictionary captures $97.90\%$ of the vector mass, but smaller interactions remain important for accurate logits and decisions. Moreover, magnitude selection substantially outperforms random, order-matched, and low-order selection, showing that sparsity depends on the identities of specific
mid- and high-order interactions rather than interaction order alone. These results establish the sparsity component of soft routing; the next
experiment examines whether the dominant supports vary across inputs.

\textbf{Observation~\ding{186}: Expert supports share a global core with class-dependent variation.} We ask whether the dominant residual interactions are fixed across inputs or vary systematically with class. For each input $x$ and readout $r\in\{v,h\}$, we form the Top-$K$ interaction support $\mathcal{A}_K^r(x)$ among the $M=2^L-1$ non-empty interactions. For inputs $x_i,x_j$ with labels $y_i,y_j$, we compare binary supports using Jaccard similarity $J_K^r$ and vector mass-weighted supports using weighted Jaccard $J_K^w$. Class alignment is measured by $\Delta_{\mathrm{class}}^r(K)=\mathbb{E}[J_K^r\mid y_i=y_j]-\mathbb{E}[J_K^r\mid y_i\neq y_j]$, defined analogously for $J_K^w$. To compare architectures, we report the retained fraction $K/M$ and use label-shuffled classes and the size-matched dataset-level Top-$K$ support as controls.

\begin{figure}[t]
  \centering
    \vspace{-0.2cm}
  \includegraphics[width=\textwidth]{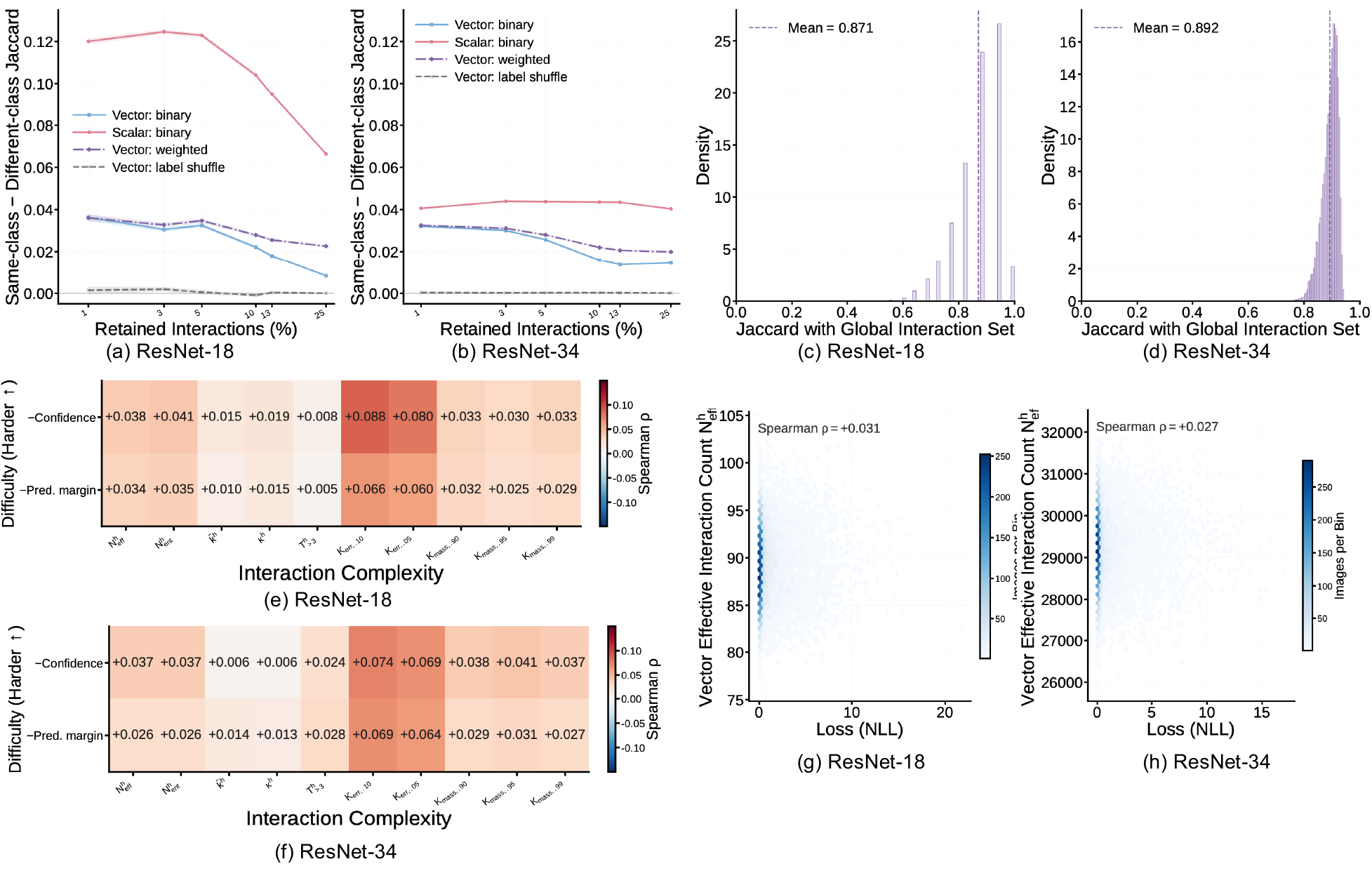}
  \vspace{-0.8cm}
    \caption{Input dependence and difficulty sensitivity of residual interaction supports. Results are shown for ResNet-18 and ResNet-34 on $10{,}000$ ImageNet validation images. (a,b) Same-class minus different-class similarity for scalar, vector, and weighted interaction signatures, with label shuffling as a null control. (c,d) Input-wise Jaccard similarity with the global vector interaction set at approximately $13\%$ retention. (e,f) Spearman correlations between label-free difficulty measures and vector interaction complexity. (g,h) Negative log-likelihood versus the vector effective interaction count $N_{\mathrm{eff}}^h$. Dominant interactions exhibit class-dependent variation within a shared global core, while their overall complexity is only weakly associated with input difficulty.}
      \vspace{-0.1cm}
    \label{fig:interaction-selection-and-difficulty}
\end{figure}

\textbf{Results and Analysis.} For both models, same-class pairs are more similar than different-class pairs at every retained fraction, whereas label shuffling reduces the gap to nearly zero (Figure~\ref{fig:interaction-selection-and-difficulty}a,b). The effect is readout-dependent: the peak class gaps for predicted-class scalar supports are $0.125$ and $0.044$ for ResNet-18 and ResNet-34, compared with $0.036$ and $0.032$ for vector supports. At approximately $13\%$ retention, however, the vector supports remain close to the global support, with mean Jaccard similarities of $0.871$ and $0.892$ (Figure~\ref{fig:interaction-selection-and-difficulty}c,d). Thus, inputs do not select disjoint expert sets; they modify a largely shared interaction core. Consistently, even at $K=32$, scalar binary signatures achieve $24.2\%$ and $9.9\%$ 1-NN class retrieval, respectively, versus $0.1\%$ chance. These results support a soft, readout-dependent routing interpretation based on class-aware variation within a shared residual-expert pool.

\textbf{Observation~\ding{187}: Interaction complexity is largely insensitive to input difficulty.} We ask whether harder inputs require broader, higher-order, or less compressible residual interactions. We measure difficulty using negative confidence, negative predicted margin, and negative log-likelihood. For the vector readout, we consider support size, interaction order, and the Top-$K$ budgets required to reach specified coefficient-mass or reconstruction-error thresholds. In particular, $N_{\mathrm{eff}}^h=(\sum_S a_S^h)^2/\sum_S(a_S^h)^2$ measures the effective number of contributing interactions. Associations are evaluated using Spearman correlation.

\textbf{Results and Analysis.} Interaction complexity is only weakly associated with difficulty. Across all metrics in Figure~\ref{fig:interaction-selection-and-difficulty}(e,f), the largest correlation is $0.088$ for ResNet-18 and $0.074$ for ResNet-34. Negative log-likelihood is also nearly uncorrelated with $N_{\mathrm{eff}}^h$, with $\rho=0.031$ and $0.027$, respectively (Figure~\ref{fig:interaction-selection-and-difficulty}g,h). We therefore find little evidence that harder inputs systematically require broader, higher-order, or less compressible interaction structure. Together with the preceding experiment, these results suggest that input dependence primarily changes \emph{which} interactions dominate, rather than \emph{how many} are engaged.

\vspace{-0.2cm}
\section{Related Work}
\vspace{-0.2cm}
Our work intersects four lines of research. (1) \textit{Residual learning and path-ensemble views.} ResNets ease deep-network optimization via identity shortcuts~\citep{he2016deep}, and the unraveled view interprets them as ensembles of exponentially many paths with ensemble-like redundancy~\citep{veit2016residual}; related work studies stochastic depth~\citep{huang2016deep}, the effective depth of the virtual ensemble~\citep{littwin2016loss}, and signal propagation~\citep{he2016identity,de2020batch}. These studies concern structure and optimization, not which residual interactions produce a particular prediction. (2) \textit{Gating and dynamic routing.} Highway networks~\citep{srivastava2015highway}, residual gates~\citep{savarese2016learning}, SkipNet~\citep{wang2018skipnet} and BlockDrop~\citep{wu2018blockdrop} equip networks with explicit, learned gates or per-input block skipping. We instead analyze a standard trained ResNet without adding a router, using gates only as an analytical probe over residual-branch masks. (3) \textit{Mixture-of-Experts.} Sparse MoE models scale capacity by routing each input to a few experts~\citep{shazeer2017outrageously,lepikhin2020gshard,fedus2021switch}, with subsequent work on routing assignment~\citep{lewis2021base,zhou2022expertchoice,dai2022stablemoe} and cross-layer expert paths~\citep{gu2026pathmoe,rajbhandari2022deepspeedmoe}. We approach MoE from the opposite direction: rather than adding explicit experts, we ask whether a trained ResNet already induces implicit, input-dependent expert structure. (4)  \textit{Attribution and interaction decompositions.} Component importance has been estimated via Taylor criteria~\citep{molchanov2019importance} and Shapley values~\citep{lundberg2017unified,ancona2020shapley}, recently extended to higher-order interactions~\citep{chen2023harsanyinet,muschalik2024shapiq,kang2024mobius}. Our players are not input features, neurons, or filters but residual branches; treating branch on/off as the game defines residual corrections and their higher-order interactions as functional contribution units, bridging residual learning and implicit expert behavior.

\vspace{-0.2cm}
\section{Conclusion}
\label{sec:conclusion}

We use M\"obius inversion to decompose trained ResNets exactly into residual
corrections and higher-order interactions. In ResNet-18 and ResNet-34,
interaction mass peaks at intermediate orders, while residual scaling shifts
it toward lower orders but also changes predictions. Crucially, dominant
interactions vary across inputs and classes around a shared global core. Dense
execution therefore does not imply fixed functional computation: every block
executes, yet different inputs rely on different interaction experts. Standard
ResNets thus exhibit implicit soft routing at the level of functional
contribution, not explicit sparse execution. Extending this analysis to deeper
networks requires efficient estimators of high-order interactions.

\clearpage
\section*{AI Use Statement}
Generative AI tools were used solely to polish the language of the manuscript. All research ideas, theoretical results, experiments, and analyses are the authors’ own, and the authors take full responsibility for the content of this paper.

\section*{Reproducibility Statement}
We have taken several steps to ensure that our results can be reproduced. All theoretical claims are stated together with their assumptions in the main text, and complete proofs are provided in the appendices. The experimental protocol is fully specified in the main text and appendices: we use a publicly available ImageNet-pretrained ResNet-18 and ResNet-34, exhaustively enumerate all $2^L$ residual-branch masks, and evaluate on the ImageNet validation images, with the M\"obius-inversion reconstruction, the interaction-order spectrum, and the residual-scaling sweep described step by step. Because the analysis relies only on public pretrained weights and the public ImageNet validation set, no model training is required, and the exhaustive mask evaluation is deterministic given the pretrained weights. We provide all model, data, and analysis details needed to reproduce our results directly from these public resources.

\section*{Ethics Statement}
This work analyzes the internal computational structure of existing, publicly available image-classification networks. It involves no human subjects and no private or personally identifiable data, using only the standard public ImageNet validation set and publicly released pretrained weights, and it releases no new datasets or trained models. The study is interpretability-oriented: it characterizes how residual branches interact functionally and does not by itself measure the fairness, safety, or reliability of deployed systems, for which additional application-specific evaluation would be required. The authors are aware of no ethical concerns raised by this work and confirm adherence to the ICLR Code of Ethics.

\bibliography{reference}
\bibliographystyle{iclr2027_conference}

\clearpage
\appendix

\addcontentsline{toc}{section}{Appendix} 
\renewcommand \thepart{} 
\renewcommand \partname{}
\part{\Large\centering{Appendix}}
\parttoc 

\makeatletter
\setlength{\@fptop}{0pt}
\setlength{\@fpsep}{14pt plus 0fil}
\setlength{\@fpbot}{0pt plus 1fil}
\makeatother
\setlength{\floatsep}{14pt plus 2pt minus 2pt}
\setlength{\textfloatsep}{16pt plus 2pt minus 4pt}
\setlength{\intextsep}{14pt plus 2pt minus 2pt}

\clearpage
\section{Notation}\label{app:notation}

Table~\ref{tab:residual-interaction-notation} summarizes the notation used throughout the paper.

\begin{table}[!ht]
\centering
\caption{Notation. Superscript $r\in\{v,h\}$ marks the scalar (predicted-class) or vector (full-logit) readout; per-input statistics are averaged over images. Masks are analytical interventions, not learned routing decisions. Symbols used within a single appendix are defined there.}
\label{tab:residual-interaction-notation}
\scriptsize
\setlength{\tabcolsep}{4pt}
\renewcommand{\arraystretch}{1.05}
\begin{tabularx}{\textwidth}{@{}p{0.24\textwidth}>{\raggedright\arraybackslash}X@{}}
\toprule[1.5pt]
\textbf{Symbol} & \textbf{Meaning} \\
\midrule
\multicolumn{2}{@{}l}{\textit{Residual network, masks, and readouts}} \\
\addlinespace[1pt]
$x,\ N,\ C$ & Input image; number of images; number of classes. \\
$L,\ [L];\ l,i,j\in[L]$ & Number and index set of residual blocks; block indices. \\
$F_l,\ s_l,\ \phi_l$ & Residual branch, shortcut, and activation of block $l$. \\
$x_l(m)$ & Hidden state at block $l$ under mask $m$. \\
$m\in\{0,1\}^L$ & Residual-branch mask; $m_l=0$ disables branch $l$ only. \\
$\mathbf{0},\ \mathbf{1}_S,\ \mathbf{1}_{[L]}$ & Shortcut-only, subset-active, and full masks. \\
$\mathbf{1}[\cdot]$ & Indicator function. \\
$h_x(m)\in\mathbb{R}^C$ & Logits under mask $m$ (vector response). \\
$h_x^{\mathrm{full}},\ h_x^{\emptyset}$ & Full-network and shortcut-only logits. \\
$\hat y(x)$ & Full-model predicted class. \\
$\rho_x,\ v_x(m)$ & Scalar readout and scalar response (predicted-class logit). \\
$y,\ y_i$ & Ground-truth labels of $x$ and $x_i$. \\
\addlinespace[2pt]
\multicolumn{2}{@{}l}{\textit{M\"obius residual interactions}} \\
\addlinespace[1pt]
$S,\ T,\ U\subseteq[L]$ & Subsets of residual branches. \\
$k=|S|$ & Interaction order. \\
$\Delta_S v_x,\ \Delta_S h_x$ & Scalar and vector M\"obius coefficients of $S$. \\
$\Delta_{\emptyset}v_x$ & Shortcut-only baseline (not counted in $K$). \\
$\Delta_{\{l\}}v_x,\ \Delta_{\{i,j\}}v_x$ & First-order correction; pairwise interaction. \\
$\mathcal{R},\ M$ & Dictionary of non-empty interactions; its size $2^L-1$. \\
\addlinespace[2pt]
\multicolumn{2}{@{}l}{\textit{Residual experts and Top-$K$ reconstruction}} \\
\addlinespace[1pt]
$a_S^v(x),\ a_S^h(x)$ & Contribution magnitudes $|\Delta_S v_x|$ and $\|\Delta_S h_x\|_2$. \\
$K,\ K/M$ & Top-$K$ budget; retained fraction. \\
$\mathcal{A}_K^r(x)$ & Input-specific Top-$K$ expert set. \\
$\widehat v_{x,K},\ \widehat h_{x,K}$ & Top-$K$ reconstructions. \\
$\Gamma_K^r(x)$ & Retained fraction of squared coefficient mass. \\
$\mathrm{Err}_K^r,\ \mathrm{Cos}_K$ & Relative reconstruction error; cosine similarity. \\
$K_\tau^{\mathrm{err}},\ K_\eta^{\mathrm{mass}}$ & Smallest $K$ with error $\le\tau$ or retained mass $\ge\eta$. \\
\addlinespace[2pt]
\multicolumn{2}{@{}l}{\textit{Interaction-order and breadth statistics}} \\
\addlinespace[1pt]
$E_k^r(x),\ Z^r(x)$ & Interaction energy at order $k$; total over $k\ge1$. \\
$\widetilde E_k^r(x)$ & Normalized order spectrum. \\
$C_{\le q}^r(x),\ T_{>3}^r(x)$ & Energy fraction up to order $q$; fraction above order $3$. \\
$\kappa^r(x)$ & Effective interaction order. \\
$M_k^r(x)$ & Mean energy per interaction at order $k$. \\
$q_S^r(x),\ p_S^r(x)$ & Normalized magnitude and squared-magnitude weights. \\
$N_{\mathrm{eff}}^r(x),\ N_{\mathrm{ent}}^r(x)$ & Participation-ratio and entropy effective counts. \\
$\bar k^r(x)$ & Magnitude-weighted mean order. \\
\addlinespace[2pt]
\multicolumn{2}{@{}l}{\textit{Input dependence of expert sets}} \\
\addlinespace[1pt]
$J_K^r(x_i,x_j)$ & Jaccard overlap of two Top-$K$ sets. \\
$w_{K,S}^h(x),\ J_K^w$ & Mass-weighted vector signature; weighted Jaccard. \\
$\Delta_{\mathrm{class}}^r(K)$ & Same-class minus different-class overlap. \\
$G_K$ & Dataset-level global Top-$K$ set. \\
\addlinespace[2pt]
\multicolumn{2}{@{}l}{\textit{Aggregation baselines and prediction agreement}} \\
\addlinespace[1pt]
$A_x^{\mathrm{raw}},\ A_x^{\mathrm{mean}},\ A_x^{\mathrm{ctr}}$ & Naive sum, mean, and centered aggregations. \\
$A_x^{\mathrm{Mob}}$ & M\"obius reconstruction of $h_x^{\mathrm{full}}$. \\
$\alpha_x^\star$ & Least-squares rescaling factor. \\
$\mathrm{Agree}$ & Top-1 agreement with the full model (not accuracy). \\
\addlinespace[2pt]
\multicolumn{2}{@{}l}{\textit{Residual scaling}} \\
\addlinespace[1pt]
$\lambda,\ \lambda_0$ & Residual scale and its upper bound. \\
$x_l^{(\lambda)},\ h_x^{(\lambda)},\ v_x^{(\lambda)}$ & Hidden state, logits, and scalar response at scale $\lambda$. \\
$\hat y_\lambda(x)$ & Predicted class of the scaled model. \\
$B_{S,x}$ & $\lambda$-independent constant in the interaction bound. \\
\bottomrule[1.5pt]
\end{tabularx}
\end{table}

\clearpage
\section{Detailed Related Work}~\label{appen:detailed_related_work}

\textbf{Residual learning and path-ensemble views.}
Residual networks were introduced to ease the optimization of very deep neural networks by reformulating layers as residual functions with respect to their inputs~\citep{he2016deep}. A residual block has the form $x_{l+1} = x_l + F_l(x_l)$, where the identity shortcut provides a stable information and gradient pathway. A prominent interpretation of this architecture is the unraveled view of~\citet{veit2016residual}, which shows that a ResNet can be represented as an exponential collection of computational paths of different lengths. Their lesion studies suggest that these paths exhibit ensemble-like redundancy, and their gradient analysis shows that relatively short paths dominate training. Stochastic depth~\citep{huang2016deep} provides a related training mechanism: during training, residual layers are randomly dropped, effectively exposing the model to shorter subnetworks while retaining the full network at test time. Our work builds on this line of thought but asks a different question. Beyond the original path-ensemble analysis of path lengths, block-removal robustness, and gradient flow~\citep{veit2016residual}, subsequent work has examined how the effective depth of the virtual ensemble evolves during training~\citep{littwin2016loss} and how identity shortcuts and residual-branch scaling affect signal propagation and trainability~\citep{he2016identity,de2020batch}. These studies illuminate network structure and optimization, but do not identify the residual interactions that contribute to a particular prediction. In contrast, we study the functional contribution structure induced by residual branches. Rather than treating paths as additive outputs, we introduce a residual-gate function view and ask whether residual corrections and their higher-order interactions form input-dependent, sparse, expert-like computational units. Thus, our focus is not merely that ResNets contain many paths, but whether their effective residual computations behave like implicit soft experts.

\textbf{Gating, dynamic routing, and residual branch selection.} Highway networks~\citep{srivastava2015highway} introduced input-dependent gates to regulate information flow between transformed and skipped representations. Residual gates~\citep{savarese2016learning} further studied scalar gates on residual architectures, showing that explicit gating can facilitate optimization and increase layer independence. Dynamic inference methods such as SkipNet~\citep{wang2018skipnet} and BlockDrop~\citep{wu2018blockdrop} learn to skip convolutional or residual blocks on a per-input basis, thereby converting static residual networks into explicitly routed conditional-computation models. These methods introduce or learn explicit gating mechanisms. Our goal is different: we analyze a standard trained ResNet without adding a router. We use gates only as an analytical probe to define a function over residual branch masks. This allows us to measure how the model output changes as residual branches are activated or removed, and to characterize whether the induced contribution structure is sparse and input-dependent.

\textbf{Mixture-of-Experts and expert-path structure.} Mixture-of-Experts (MoE) models use gating or routing functions to combine specialized expert modules~\citep{shazeer2017outrageously}. Sparse MoE architectures such as GShard~\citep{lepikhin2020gshard} and Switch Transformers~\citep{fedus2021switch} scale model capacity by activating only a small number of experts for each input. Subsequent work studies load balancing and routing assignment, including BASE layers~\citep{lewis2021base}, Expert Choice routing~\citep{zhou2022expertchoice}, and stable routing strategies~\citep{dai2022stablemoe}. More recent work has started to view MoE computation through the lens of cross-layer expert paths; for example, Path-Constrained MoE constrains the combinatorial space of expert paths by sharing router parameters across consecutive layers~\citep{gu2026pathmoe}. Residual MoE architectures, such as PR-MoE in DeepSpeed-MoE~\citep{rajbhandari2022deepspeedmoe}, also suggest that dense/shared paths and sparse expert paths can be combined. Our work approaches MoE from the opposite direction. Instead of adding explicit sparse experts to a network, we ask whether a trained residual network already induces implicit expert-like structures. The analogy is not exact: a standard ResNet does not perform sparse execution and has no explicit router. Nevertheless, if only a small subset of residual corrections or interactions dominates the functional contribution for a given input, then the network exhibits an implicit, soft, input-dependent expert structure.

\textbf{Attribution, pruning, and interaction decompositions.} A related line of work estimates the importance of neural network components for pruning or interpretation. Taylor-based pruning methods estimate the contribution of neurons or filters to the final loss~\citep{molchanov2019importance}, and Shapley-value approaches provide principled component importance measures for structured pruning~\citep{ancona2020shapley}. More broadly, Shapley values and related cooperative-game-theoretic tools have become common for explaining model predictions~\citep{lundberg2017unified}. Recent work extends these ideas to higher-order interactions: HarsanyiNet uses Harsanyi interactions to compute Shapley values in a single forward propagation~\citep{chen2023harsanyinet}, Shapiq provides tools for estimating Shapley interactions in machine learning~\citep{muschalik2024shapiq}, and the M\"obius transform has been proposed as a way to identify sparse, low-degree interactions in learned functions~\citep{kang2024mobius}. Our analysis is inspired by this interaction viewpoint, but the players in our setting are not input features, neurons, or filters. Instead, we treat residual branches as players and study the function induced by turning these branches on or off. This lets us define residual corrections and higher-order residual interactions as functional contribution units. The resulting framework provides a bridge between residual learning and implicit expert behavior.


\clearpage
\section{Proof of Lemma~\ref{lem:mobius-inverse}}
\label{app:mobius-proof}

\begin{proof}[Proof of Lemma~\ref{lem:mobius-inverse}]
For any subset $S\subseteq[L]$, define its M\"obius coefficient by
\[
\Delta_S v_x
=
\sum_{T\subseteq S}
(-1)^{|S|-|T|}v_x(\mathbf{1}_T),
\]
where $\mathbf{1}_T\in\{0,1\}^L$ activates exactly the residual branches
in $T$.

Fix an arbitrary subset $U\subseteq[L]$. Summing the coefficients over all
subsets of $U$ gives
\begin{align}
\sum_{S\subseteq U}\Delta_S v_x
&=
\sum_{S\subseteq U}
\sum_{T\subseteq S}
(-1)^{|S|-|T|}v_x(\mathbf{1}_T) \\
&=
\sum_{T\subseteq U}
v_x(\mathbf{1}_T)
\sum_{S:\,T\subseteq S\subseteq U}
(-1)^{|S|-|T|},
\end{align}
where the second equality follows by exchanging the order of the two finite
sums.

For a fixed $T\subseteq U$, every set $S$ satisfying
$T\subseteq S\subseteq U$ can be written uniquely as
$S=T\cup R$, where $R\subseteq U\setminus T$. Hence,
\begin{align}
\sum_{S:\,T\subseteq S\subseteq U}
(-1)^{|S|-|T|}
&=
\sum_{R\subseteq U\setminus T}
(-1)^{|R|} \\
&=
\sum_{r=0}^{|U\setminus T|}
\binom{|U\setminus T|}{r}(-1)^r \\
&=
(1-1)^{|U\setminus T|} \\
&=
\begin{cases}
1, & T=U,\\
0, & T\subsetneq U.
\end{cases}
\end{align}
Therefore, every term in the outer sum vanishes except the term with $T=U$.
It follows that
\[
\sum_{S\subseteq U}\Delta_S v_x
=
v_x(\mathbf{1}_U).
\]
Since $U\subseteq[L]$ was arbitrary, the identity holds for every subset
$U\subseteq[L]$.
\end{proof}

\clearpage
\section{Proof of Proposition~\ref{prop:residual-scaling}}
\label{app:residual-scaling-proof}

We prove Proposition~\ref{prop:residual-scaling} in three steps. First, we state the finite-difference identity that connects M\"obius coefficients to mixed partial derivatives. Second, we show that mixed derivatives of the hidden states with respect to residual gates scale with the corresponding power of $\lambda$. Third, we combine these two facts to obtain the desired bound.

\paragraph{Notation.}
For a subset $A\subseteq[L]$, write
\begin{equation}
    \partial_A
    :=
    \prod_{a\in A}
    \frac{\partial}{\partial m_a},
\end{equation}
with the convention that $\partial_{\emptyset}$ is the identity operator.
Let $x_l^{(\lambda)}(m)$ denote the hidden representation at block $l$ under gate mask $m\in[0,1]^L$ and residual scale $\lambda$. We write the final scalar readout as
\begin{equation}
    v_x^{(\lambda)}(m)
    =
    R\!\left(x_{L+1}^{(\lambda)}(m)\right),
\end{equation}
where $R$ includes the remaining network operations after the residual stack and the scalar readout $\rho_x$.

We assume that, for the fixed input $x$, all hidden states induced by $m\in[0,1]^L$ and $0<\lambda\leq\lambda_0$ remain in a compact set $\mathcal{K}$ containing the shortcut trajectory. We also assume that each residual map $F_l$ and the scalar readout $R$ are $C^L$ on $\mathcal{K}$ and that all derivatives up to order $L$ are bounded on $\mathcal{K}$. These are the smoothness and boundedness assumptions stated in Proposition~\ref{prop:residual-scaling}.

\begin{lemma}[Finite-difference identity]
\label{lem:finite-difference-identity}
Let $g:[0,1]^L\to\mathbb{R}$ be $C^{|S|}$ in the coordinates indexed by $S$. Then
\begin{equation}
    \sum_{T\subseteq S}
    (-1)^{|S|-|T|}
    g(\mathbf{1}_T)
    =
    \int_{[0,1]^{|S|}}
    \partial_S g(u_S,0_{\bar S})
    \,du_S,
    \label{eq:finite-difference-identity}
\end{equation}
where $\bar S=[L]\setminus S$, and $(u_S,0_{\bar S})$ denotes the point whose coordinates in $S$ are given by $u_S$ and whose coordinates outside $S$ are set to $0$.
\end{lemma}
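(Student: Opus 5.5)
The plan is to prove \eqref{eq:finite-difference-identity} by induction on $k=|S|$, applying the fundamental theorem of calculus one coordinate at a time. Throughout, the coordinates outside $S$ are frozen at $0$. Write $g_S(u_S):=g(u_S,0_{\bar S})$, a function on $[0,1]^{k}$. The left-hand side involves only the values $g(\mathbf{1}_T)=g_S(\mathbf{1}_T|_S)$ with $T\subseteq S$, i.e.\ the values of $g_S$ at the vertices of the cube $[0,1]^k$. So the claim reduces to the following statement: the alternating vertex sum of a $C^k$ function on $[0,1]^k$ equals the integral of its mixed partial $\partial_1\cdots\partial_k$ over the cube. The base case $k=0$ is trivial, since both sides equal $g(\mathbf{0})$ (the integral over $[0,1]^0$ being evaluation). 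The case $k=1$ is exactly $g(\mathbf{1}_{\{a\}})-g(\mathbf{0})=\int_0^1\partial_a g(u,0)\,du$.

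For the inductive step, pick $a\in S$ and set $S'=S\setminus\{a\}$. Split the sum over $T\subseteq S$ according to whether $a\in T$. This gives
\[
\sum_{T\subseteq S}(-1)^{|S|-|T|}g(\mathbf{1}_T)=\sum_{T'\subseteq S'}(-1)^{|S'|-|T'|}\bigl[g(\mathbf{1}_{T'\cup\{a\}})-g(\mathbf{1}_{T'})\bigr].
\]
Define $\tilde g(m):=g(m+(1-m_a)e_a)-g(m-m_ae_a)$, i.e.\ $g$ with $m_a$ set to $1$ minus $g$ with $m_a$ set to $0$. By the one-dimensional fundamental theorem of calculus, $\tilde g(m)=\int_0^1\partial_a g(m_{-a},u_a)\,du_a$. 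Now $\tilde g$ is $C^{k-1}$ in the coordinates of $S'$, so the induction hypothesis applies to $\tilde g$ on $S'$:
\[
\text{LHS}=\int_{[0,1]^{k-1}}\partial_{S'}\tilde g(u_{S'},0_{\bar S'})\,du_{S'}.
\]
Substituting the integral representation of $\tilde g$ gives an iterated integral of $\partial_{S'}$ applied under the $u_a$-integral to $\partial_a g$. Combining the iterated integrals with Fubini then yields the right-hand side.

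The only delicate point is differentiating under the integral sign, i.e.\ $\partial_{S'}\int_0^1\partial_a g\,du_a=\int_0^1\partial_{S'}\partial_a g\,du_a$. This requires the mixed partials $\partial_{S'}\partial_a g$ to exist and be continuous, hence bounded, on the compact cube. That is guaranteed by the $C^{|S|}$ hypothesis, which also makes the order of mixed partials irrelevant (Schwarz). Continuity of the integrand likewise justifies Fubini. I expect this justification step to be the main (if modest) obstacle; the combinatorics is just the two-term splitting above.

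Alternatively, one could expand directly: write $\partial_S g_S$ and integrate variable by variable. Each integration in $u_a$ produces the difference ``evaluate at $u_a=1$ minus evaluate at $u_a=0$'', and the product of the $k$ difference operators expands to $\sum_T(-1)^{k-|T|}g(\mathbf{1}_T)$. I would present the inductive version as the proof and mention this as intuition.
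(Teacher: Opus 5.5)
Your proposal is correct and follows the same route as the paper, which applies the fundamental theorem of calculus once per coordinate in $S$ and iterates, noting that the left side is the $|S|$-fold forward difference. Your induction on $|S|$, with the two-term split over $a\in S$ and the Schwarz, Fubini, and differentiation-under-the-integral justifications supplied by the $C^{|S|}$ hypothesis, is a more carefully written version of that same iteration.
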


\begin{proof}
The result follows by applying the fundamental theorem of calculus once for each coordinate in $S$. For a single coordinate $i$, we have
\begin{equation}
    g(1_i,0_{\bar{\{i\}}}) - g(0_i,0_{\bar{\{i\}}})
    =
    \int_0^1
    \frac{\partial g}{\partial m_i}(u_i,0_{\bar{\{i\}}})
    \,du_i.
\end{equation}
Applying the same argument iteratively over all coordinates in $S$ gives Equation~\ref{eq:finite-difference-identity}. Equivalently, the left-hand side is the $|S|$-fold forward finite difference of $g$ over the coordinates in $S$, and the right-hand side is its integral representation.
\end{proof}

\begin{lemma}[Gate derivatives of hidden states scale with residual order]
\label{lem:hidden-state-scaling}
Under the smoothness and boundedness assumptions above, for every block index $l$ and every subset $A\subseteq[L]$, there exists a constant $C_{l,A,x}>0$, independent of $\lambda$, such that
\begin{equation}
    \sup_{m\in[0,1]^L}
    \left\|
    \partial_A x_l^{(\lambda)}(m)
    \right\|
    \leq
    C_{l,A,x}\lambda^{|A|}
    \qquad
    \text{for all } 0<\lambda\leq\lambda_0.
    \label{eq:hidden-state-scaling}
\end{equation}
\end{lemma}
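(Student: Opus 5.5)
We prove the bound by induction on the block index $l$, differentiating the recursion in Equation~\ref{eq:scaled-residual-network} with respect to the gates. For the base case $l=1$, the input representation $x_1^{(\lambda)}(m)=x_1$ does not depend on $m$. Hence $\partial_A x_1^{(\lambda)}=0$ for every $A\neq\emptyset$, while $\|x_1\|$ is bounded. This gives Equation~\ref{eq:hidden-state-scaling} with exponent $0$ for $A=\emptyset$.

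For the inductive step, assume the bound holds for $x_l^{(\lambda)}$ and every $A$. Apply $\partial_A$ to
\[
x_{l+1}^{(\lambda)}(m)=x_l^{(\lambda)}(m)+\lambda m_l F_l\bigl(x_l^{(\lambda)}(m)\bigr).
\]
The first term contributes $\partial_A x_l^{(\lambda)}$, which is already $\le C_{l,A,x}\lambda^{|A|}$. For the second term, the factor $m_l$ is linear in the single coordinate $m_l$, and $x_l^{(\lambda)}$ does not depend on $m_l$ (nor on $m_{l'}$ for $l'\ge l$). We split on whether $l\in A$.

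If $l\notin A$, we get $\lambda m_l\,\partial_A[F_l(x_l^{(\lambda)})]$. If $l\in A$, we get $\lambda\,\partial_{A\setminus\{l\}}[F_l(x_l^{(\lambda)})]$. The remaining derivative is expanded by the multivariate Fa\`a di Bruno formula. It is a finite sum over set partitions $\{B_1,\dots,B_p\}$ of the differentiated index set $A'$ of terms
\[
D^pF_l\bigl(x_l^{(\lambda)}\bigr)\bigl[\partial_{B_1}x_l^{(\lambda)},\dots,\partial_{B_p}x_l^{(\lambda)}\bigr].
\]
Since all $m$ lie in $[0,1]^L$ and states lie in $\mathcal{K}$, the factor $D^pF_l$ with $p\le L$ is bounded uniformly in $\lambda$. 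By the induction hypothesis, each partition term is bounded by a constant times $\lambda^{\sum_i|B_i|}=\lambda^{|A'|}$. Therefore $\partial_{A'}[F_l(x_l^{(\lambda)})]=O(\lambda^{|A'|})$.

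Multiplying by the explicit prefactor $\lambda$ gives the second term's order in each case:
\begin{itemize}
\item if $l\in A$: $\lambda\cdot\lambda^{|A|-1}=\lambda^{|A|}$;
\item if $l\notin A$: $\lambda\cdot\lambda^{|A|}=\lambda^{|A|+1}\le\lambda_0\lambda^{|A|}$.
\end{itemize}
Summing the two terms and taking suprema over $m$ gives a constant $C_{l+1,A,x}$ independent of $\lambda$. Only finitely many pairs $(l,A)$ occur, so all constants are finite.

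\textbf{Main obstacle.} The key point is that every gate derivative hitting the stack yields a factor $\lambda$. This factor comes either from differentiating the explicit $m_l$ (which releases a $\lambda$) or from being absorbed into a lower-order derivative of $x_l$ that already carries it. The chain-rule bookkeeping must respect the partition structure so that the exponents add to exactly $|A|$. Order of derivatives never exceeds $|A|\le L$, so the $C^L$ assumption suffices.

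Subtle points are bounding $\|x_l^{(\lambda)}\|$ itself, handled by compactness of $\mathcal{K}$, and ensuring that the derivative bounds of $F_l$ are taken on $\mathcal{K}$. Both are supplied by the standing assumptions.
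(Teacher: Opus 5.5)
Your proof is correct and follows essentially the same route as the paper's: induction on the block index, the case split on whether $l\in A$, and the set-partition (Fa\`a di Bruno) expansion of $\partial_{A'}F_l(x_l^{(\lambda)})$, whose exponents sum to $|A'|$, before multiplying by the explicit factor $\lambda$. The only difference is cosmetic: when $l\in A$ the paper notes $\partial_A x_l^{(\lambda)}=0$ because $x_l^{(\lambda)}$ does not depend on $m_l$, whereas you bound it by the induction hypothesis, which is equally valid.
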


\begin{proof}
We prove the statement by induction over the residual block index $l$.

For the input representation $x_1^{(\lambda)}(m)=x$, the claim is immediate. If $A=\emptyset$, the hidden state is bounded because it lies in the compact set $\mathcal{K}$. If $A\neq\emptyset$, then $\partial_A x_1^{(\lambda)}(m)=0$, so Equation~\ref{eq:hidden-state-scaling} holds.

Now suppose the claim holds for block $l$. We prove it for block $l+1$. The scaled residual recursion is
\begin{equation}
    x_{l+1}^{(\lambda)}(m)
    =
    x_l^{(\lambda)}(m)
    +
    \lambda m_l
    F_l\!\left(x_l^{(\lambda)}(m)\right).
    \label{eq:state-recursion-proof}
\end{equation}
Applying $\partial_A$ gives
\begin{equation}
    \partial_A x_{l+1}^{(\lambda)}(m)
    =
    \partial_A x_l^{(\lambda)}(m)
    +
    \lambda\,
    \partial_A
    \left[
        m_l F_l\!\left(x_l^{(\lambda)}(m)\right)
    \right].
    \label{eq:state-derivative-proof}
\end{equation}

We analyze the second term. There are two cases.

\paragraph{Case 1: $l\notin A$.}
Then no derivative falls directly on the factor $m_l$. Hence,
\begin{equation}
    \partial_A
    \left[
        m_l F_l\!\left(x_l^{(\lambda)}(m)\right)
    \right]
    =
    m_l
    \partial_A
    F_l\!\left(x_l^{(\lambda)}(m)\right).
\end{equation}
By the multivariate chain rule, $\partial_A F_l(x_l^{(\lambda)}(m))$ is a finite sum of terms of the form
\begin{equation}
    D^r F_l\!\left(x_l^{(\lambda)}(m)\right)
    \left[
        \partial_{P_1}x_l^{(\lambda)}(m),
        \ldots,
        \partial_{P_r}x_l^{(\lambda)}(m)
    \right],
    \label{eq:chain-rule-term}
\end{equation}
where $\{P_1,\ldots,P_r\}$ ranges over partitions of $A$ into nonempty disjoint subsets. Since the derivatives of $F_l$ are uniformly bounded on $\mathcal{K}$, and by the induction hypothesis
\begin{equation}
    \left\|
    \partial_{P_j}x_l^{(\lambda)}(m)
    \right\|
    \leq
    C_{l,P_j,x}\lambda^{|P_j|},
\end{equation}
each term in Equation~\ref{eq:chain-rule-term} is bounded by a constant times
\begin{equation}
    \lambda^{|P_1|+\cdots+|P_r|}
    =
    \lambda^{|A|}.
\end{equation}
Therefore,
\begin{equation}
    \left\|
    \partial_A
    F_l\!\left(x_l^{(\lambda)}(m)\right)
    \right\|
    \leq
    C\lambda^{|A|}.
\end{equation}
Multiplying by the explicit factor $\lambda$ in Equation~\ref{eq:state-derivative-proof} yields a term of order $\lambda^{|A|+1}$, which is bounded by a constant times $\lambda^{|A|}$ for $0<\lambda\leq\lambda_0$. Together with the induction bound on $\partial_A x_l^{(\lambda)}(m)$, this gives
\begin{equation}
    \left\|
    \partial_A x_{l+1}^{(\lambda)}(m)
    \right\|
    \leq
    C_{l+1,A,x}\lambda^{|A|}.
\end{equation}

\paragraph{Case 2: $l\in A$.}
Let $A'=A\setminus\{l\}$. Since the state $x_l^{(\lambda)}(m)$ depends only on gates from earlier residual blocks, it does not depend on $m_l$. Therefore,
\begin{equation}
    \partial_A x_l^{(\lambda)}(m)=0.
\end{equation}
Moreover,
\begin{equation}
    \partial_A
    \left[
        m_l F_l\!\left(x_l^{(\lambda)}(m)\right)
    \right]
    =
    \partial_{A'}
    F_l\!\left(x_l^{(\lambda)}(m)\right).
\end{equation}
Using the same chain-rule argument as in Case 1, but now for $A'$, gives
\begin{equation}
    \left\|
    \partial_{A'}
    F_l\!\left(x_l^{(\lambda)}(m)\right)
    \right\|
    \leq
    C\lambda^{|A'|}
    =
    C\lambda^{|A|-1}.
\end{equation}
Multiplying by the explicit factor $\lambda$ in Equation~\ref{eq:state-derivative-proof}, we obtain
\begin{equation}
    \left\|
    \partial_A x_{l+1}^{(\lambda)}(m)
    \right\|
    \leq
    C_{l+1,A,x}\lambda^{|A|}.
\end{equation}

Both cases establish the induction step. Hence the result holds for all blocks $l$ and all subsets $A\subseteq[L]$.
\end{proof}

\begin{lemma}[Gate derivatives of the scalar readout scale with residual order]
\label{lem:readout-scaling}
Under the same assumptions, for every subset $S\subseteq[L]$, there exists a constant $B_{S,x}>0$, independent of $\lambda$, such that
\begin{equation}
    \sup_{m\in[0,1]^L}
    \left|
    \partial_S v_x^{(\lambda)}(m)
    \right|
    \leq
    B_{S,x}\lambda^{|S|}
    \qquad
    \text{for all } 0<\lambda\leq\lambda_0.
    \label{eq:readout-scaling}
\end{equation}
\end{lemma}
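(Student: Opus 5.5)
The plan is to apply the multivariate chain rule (Fa\`a di Bruno for mixed first-order partials) to the composition $v_x^{(\lambda)}(m)=R\bigl(x_{L+1}^{(\lambda)}(m)\bigr)$ and feed in the bounds from Lemma~\ref{lem:hidden-state-scaling} with $l=L+1$. The bookkeeping is the same as in Case~1 of that lemma, except that the outer function is the readout $R$ rather than a residual map $F_l$.

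\textbf{Base case $S=\emptyset$.} Here $\partial_\emptyset v_x^{(\lambda)}(m)=R(x_{L+1}^{(\lambda)}(m))$. Since $x_{L+1}^{(\lambda)}(m)\in\mathcal{K}$ and $R$ is continuous on the compact set $\mathcal{K}$, this quantity is bounded uniformly in $m$ and $\lambda$. We may take $B_{\emptyset,x}$ to be that bound plus one, so that it is strictly positive, and $\lambda^0=1$.

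\textbf{Case $S\neq\emptyset$.} Because each $\partial/\partial m_a$ with $a\in S$ is applied once, the chain rule expresses $\partial_S v_x^{(\lambda)}(m)$ as a finite sum, indexed by set partitions $\{P_1,\dots,P_r\}$ of $S$, of terms
\[
D^rR\bigl(x_{L+1}^{(\lambda)}(m)\bigr)\bigl[\partial_{P_1}x_{L+1}^{(\lambda)}(m),\ldots,\partial_{P_r}x_{L+1}^{(\lambda)}(m)\bigr].
\]
This formula is valid because the gate map is $C^L$ in $m$. That regularity follows inductively from the recursion in Equation~\ref{eq:scaled-residual-network}, since compositions and products of $C^L$ maps are $C^L$ and $|S|\le L$.

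For each term, the multilinear operator $D^rR$ has norm bounded on $\mathcal{K}$ by assumption, because $r\le |S|\le L$. Each factor satisfies $\|\partial_{P_j}x_{L+1}^{(\lambda)}\|\le C_{L+1,P_j,x}\lambda^{|P_j|}$ by Lemma~\ref{lem:hidden-state-scaling}. The product of the factors is therefore bounded by $\bigl(\prod_j C_{L+1,P_j,x}\bigr)\lambda^{\sum_j|P_j|}=(\cdots)\lambda^{|S|}$. Summing over the finitely many partitions (a Bell number's worth) gives $B_{S,x}$. It depends on $S$, $x$ and the derivative bounds, but not on $\lambda$ or $m$. Taking the supremum over $m\in[0,1]^L$ then yields Equation~\ref{eq:readout-scaling}.

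The only delicate point is uniformity. All constants come from the compactness of $\mathcal{K}$, which holds for every $m\in[0,1]^L$ and $0<\lambda\le\lambda_0$, and from Lemma~\ref{lem:hidden-state-scaling}, which is itself uniform in $m$ and $\lambda$. No hidden $\lambda$-dependence can therefore enter.

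Combined with Lemma~\ref{lem:finite-difference-identity}, this lemma immediately gives $|\Delta_S v_x^{(\lambda)}|\le\int_{[0,1]^{|S|}}|\partial_S v_x^{(\lambda)}|\,du_S\le B_{S,x}\lambda^{|S|}$. That is exactly Proposition~\ref{prop:residual-scaling}.
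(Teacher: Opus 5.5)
Your proposal is correct and follows essentially the same route as the paper: you expand $\partial_S v_x^{(\lambda)}$ by the chain rule as a sum over set partitions $\{P_1,\dots,P_r\}$ of $S$, bound $D^rR$ uniformly on $\mathcal{K}$, and apply Lemma~\ref{lem:hidden-state-scaling} at $l=L+1$ so that the exponents $|P_j|$ sum to $|S|$. Your explicit handling of $S=\emptyset$, of the strict positivity of $B_{S,x}$, and of the $C^L$ regularity of the gate map covers minor points that the paper leaves implicit.
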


\begin{proof}
Recall that
\begin{equation}
    v_x^{(\lambda)}(m)
    =
    R\!\left(x_{L+1}^{(\lambda)}(m)\right).
\end{equation}
Applying the multivariate chain rule, $\partial_S v_x^{(\lambda)}(m)$ is a finite sum of terms of the form
\begin{equation}
    D^r R\!\left(x_{L+1}^{(\lambda)}(m)\right)
    \left[
        \partial_{P_1}x_{L+1}^{(\lambda)}(m),
        \ldots,
        \partial_{P_r}x_{L+1}^{(\lambda)}(m)
    \right],
\end{equation}
where $\{P_1,\ldots,P_r\}$ ranges over partitions of $S$ into nonempty disjoint subsets. Since the derivatives of $R$ are uniformly bounded and Lemma~\ref{lem:hidden-state-scaling} gives
\begin{equation}
    \left\|
    \partial_{P_j}x_{L+1}^{(\lambda)}(m)
    \right\|
    \leq
    C_{L+1,P_j,x}\lambda^{|P_j|},
\end{equation}
each term is bounded by a constant times
\begin{equation}
    \lambda^{|P_1|+\cdots+|P_r|}
    =
    \lambda^{|S|}.
\end{equation}
Summing over the finitely many partitions of $S$ gives Equation~\ref{eq:readout-scaling}.
\end{proof}

\begin{proof}[Proof of Proposition~\ref{prop:residual-scaling}]
Fix a subset $S\subseteq[L]$ with $|S|=k$. By Definition~\ref{def:mobius-residual-interaction}, the M\"obius residual interaction coefficient is
\begin{equation}
    \Delta_S v_x^{(\lambda)}
    =
    \sum_{T\subseteq S}
    (-1)^{|S|-|T|}
    v_x^{(\lambda)}(\mathbf{1}_T).
\end{equation}
Applying Lemma~\ref{lem:finite-difference-identity} with $g=v_x^{(\lambda)}$ gives
\begin{equation}
    \Delta_S v_x^{(\lambda)}
    =
    \int_{[0,1]^k}
    \partial_S v_x^{(\lambda)}(u_S,0_{\bar S})
    \,du_S.
\end{equation}
Taking absolute values and using Lemma~\ref{lem:readout-scaling}, we obtain
\begin{align}
    \left|
    \Delta_S v_x^{(\lambda)}
    \right|
    &\leq
    \int_{[0,1]^k}
    \left|
    \partial_S v_x^{(\lambda)}(u_S,0_{\bar S})
    \right|
    \,du_S \\
    &\leq
    \int_{[0,1]^k}
    B_{S,x}\lambda^k
    \,du_S \\
    &=
    B_{S,x}\lambda^k.
\end{align}
This proves Proposition~\ref{prop:residual-scaling}.
\end{proof}

\clearpage
\section{Naive Additive Reconstruction of Masked Responses}
\label{app:naive-path-sum}

This appendix provides the complete experimental details for testing whether
masked-subnetwork responses can be interpreted as independent additive
contributions. We compare three naive aggregations of the same residual-mask
evaluations with exact M\"obius reconstruction on ImageNet-pretrained
ResNet-18 and ResNet-34. Across both architectures, summation, averaging, and
shortcut-baseline centering fail to recover the full-network logits, whereas
M\"obius inversion reconstructs them to numerical precision. The experiment
therefore distinguishes masked responses, which are overlapping nonlinear
interventions, from the additive interaction coefficients derived from them.

\subsection{Motivation}

The path-ensemble view characterizes a ResNet as a collection of structural
paths, but it does not assign independent additive outputs to those paths. In
particular, the response $h_x(\mathbf{1}_S)$ obtained by activating a subset
$S$ of residual branches is the output of an entire surviving nonlinear
subnetwork. It retains shared stem and shortcut computation, normalization,
nonlinear activations, projection shortcuts, and the final classifier, and a
single mask may itself contain multiple structural paths. Consequently,
directly aggregating masked responses can repeatedly count shared and
lower-order effects.

We use naive aggregation as a diagnostic rather than as a claim about the
classical path-ensemble literature. The question is whether masked responses
nevertheless behave approximately like additive contribution units:
\emph{Can the logits of the full ResNet be reconstructed by directly summing,
averaging, or baseline-centering its masked-subnetwork responses?} A negative
answer motivates the use of M\"obius coefficients, which remove repeated
lower-order effects through inclusion--exclusion.

\subsection{Experimental Setup and Evaluation Metrics}
\label{app:naive-path-setup}

\paragraph{Models and Data.}
We use the torchvision ImageNet-pretrained ResNet-18 and ResNet-34 checkpoints
with \texttt{IMAGENET1K\_V1} weights in evaluation mode. ResNet-18 contains
$L=8$ BasicBlocks and is evaluated on $N=10{,}000$ images sampled uniformly
from the unlabeled ImageNet-1K test split with seed $0$. ResNet-34 contains
$L=16$ BasicBlocks and is evaluated on a fixed $N=1{,}000$-image subset of the
same test pool. We apply the standard preprocessing associated with the
pretrained checkpoints. Because the test images are unlabeled, no metric in
this experiment uses ground-truth classes.

\paragraph{Residual-branch Interventions.}
A binary gate $m_l\in\{0,1\}$ is placed on the residual branch of every
BasicBlock:
\begin{equation}
x_{l+1}(m)
=
\phi_l\!\left(
s_l(x_l(m))+m_lF_l(x_l(m))
\right),
\end{equation}
where $s_l$ is the identity or projection shortcut and $\phi_l$ is the
post-addition activation. Setting $m_l=0$ suppresses only the residual update;
the shortcut, including any projection used for downsampling or channel
matching, remains active.

For a subset $S\subseteq[L]$, let $\mathbf{1}_S\in\{0,1\}^L$ activate exactly
the residual branches in $S$, and let
$h_x(\mathbf{1}_S)\in\mathbb{R}^{1000}$ denote the corresponding pre-softmax
logits. We exhaustively evaluate all $2^8=256$ masks for each ResNet-18 input
and all $2^{16}=65{,}536$ masks for each ResNet-34 input. The full-network and
shortcut-only logits are
\begin{equation}
h_x^{\mathrm{full}}=h_x(\mathbf{1}_{[L]}),
\qquad
h_x^{\emptyset}=h_x(\mathbf{0}).
\end{equation}

\paragraph{Candidate Reconstructions.}
We compare three direct aggregations of the masked responses:
\begin{align}
A_x^{\mathrm{raw}}
&=
\sum_{S\subseteq[L]}h_x(\mathbf{1}_S),
\\
A_x^{\mathrm{mean}}
&=
2^{-L}\sum_{S\subseteq[L]}h_x(\mathbf{1}_S),
\\
A_x^{\mathrm{ctr}}
&=
h_x^{\emptyset}
+
\sum_{\emptyset\neq S\subseteq[L]}
\left(
h_x(\mathbf{1}_S)-h_x^{\emptyset}
\right).
\end{align}
The raw sum tests direct additivity, the mean tests an ensemble-style average,
and the centered sum removes repeated inclusion of the shortcut-only baseline.

As an exact control, we compute the vector-valued M\"obius coefficients
component-wise:
\begin{equation}
\Delta_S h_x
=
\sum_{T\subseteq S}
(-1)^{|S|-|T|}
h_x(\mathbf{1}_T),
\end{equation}
and reconstruct the full logits as
\begin{equation}
A_x^{\mathrm{Mob}}
=
\sum_{S\subseteq[L]}\Delta_S h_x
=
h_x^{\mathrm{full}}.
\end{equation}
The ResNet-18 transform is accumulated in FP32, while the larger ResNet-34
transform is accumulated in FP64 to control numerical cancellation.

\paragraph{Evaluation Metrics.}
For any candidate reconstruction $A_x$, we compute the norm ratio and relative
reconstruction error
\begin{equation}
Q(A_x)
=
\frac{\|A_x\|_2}{\|h_x^{\mathrm{full}}\|_2+\epsilon},
\qquad
R(A_x)
=
\frac{\|h_x^{\mathrm{full}}-A_x\|_2}
{\|h_x^{\mathrm{full}}\|_2+\epsilon},
\end{equation}
where the target values are $1$ and $0$, respectively. To determine whether a
failure is caused only by global scale, we fit the per-image least-squares
scalar
\begin{equation}
\alpha_x^\star
=
\frac{
\langle h_x^{\mathrm{full}},A_x\rangle
}{
\|A_x\|_2^2+\epsilon
}
\end{equation}
and report the optimally scaled error
\begin{equation}
R_{\mathrm{scale}}(A_x)
=
\frac{
\|h_x^{\mathrm{full}}-\alpha_x^\star A_x\|_2
}{
\|h_x^{\mathrm{full}}\|_2+\epsilon
}.
\end{equation}
We additionally report the cosine similarity
\begin{equation}
\operatorname{Cos}(A_x,h_x^{\mathrm{full}})
=
\frac{
\langle A_x,h_x^{\mathrm{full}}\rangle
}{
\|A_x\|_2\|h_x^{\mathrm{full}}\|_2+\epsilon
}
\end{equation}
and agreement with the full model's top-1 prediction,
\begin{equation}
\operatorname{Agree}(A_x)
=
\mathbf{1}\!\left[
\arg\max_c A_{x,c}
=
\arg\max_c h_{x,c}^{\mathrm{full}}
\right].
\end{equation}
Top-1 agreement is prediction agreement with the full model, not classification
accuracy. We use $\epsilon=10^{-12}$ and report per-image means and standard
errors.

\subsection{Results and Analysis}
\label{app:naive-path-results}

Table~\ref{tab:naive-vs-mobius} reports the complete numerical results, and
Figure~\ref{fig_appendix:naive-path-sum-fails} provides the corresponding
visualization.

\begin{table}[t]
\centering
\caption{Naive aggregation versus exact M\"obius reconstruction.
Entries are per-image means over fixed ImageNet-1K test subsets. ResNet-18 uses
$N=10{,}000$ images and $256$ masks per image; ResNet-34 uses $N=1{,}000$ images
and $65{,}536$ masks per image. Top-1 agreement is measured against the
corresponding full-model prediction.}
\label{tab:naive-vs-mobius}
\resizebox{\textwidth}{!}{
\begin{tabular}{llccccc}
\toprule[1.5pt]
Model & Reconstruction & Norm ratio $(\rightarrow 1)$ & Rel.\ error $(\downarrow)$ & Scaled error $(\downarrow)$ & Cosine $(\uparrow)$ & Top-1 agree $(\uparrow)$ \\
\midrule
ResNet-18 & Raw sum      & $79.366$ & $79.150$ & $0.9606$ & $0.2217$ & $0.0064$ \\
          & Mean output  & $0.3100$ & $0.9771$ & $0.9606$ & $0.2217$ & $0.0064$ \\
          & Centered sum & $77.412$ & $77.190$ & $0.9595$ & $0.2288$ & $0.0062$ \\
          & M\"obius     & $1.0000$ & $8.506{\times}10^{-7}$ & $8.550{\times}10^{-7}$ & $1.0000$ & $1.0000$ \\
\midrule
ResNet-34 & Raw sum      & $18{,}852.3$ & $18{,}851.9$ & $0.8890$ & $0.4400$ & $0.0250$ \\
          & Mean output  & $0.2877$ & $0.9104$ & $0.8890$ & $0.4400$ & $0.0250$ \\
          & Centered sum & $17{,}750.2$ & $17{,}749.8$ & $0.8787$ & $0.4587$ & $0.0240$ \\
          & M\"obius     & $1.0000$ & $0$ & $1.76{\times}10^{-16}$ & $1.0000$ & $1.0000$ \\
\bottomrule[1.5pt]
\end{tabular}
}
\end{table}

\begin{figure}[t]
\centering
\includegraphics[width=\textwidth]{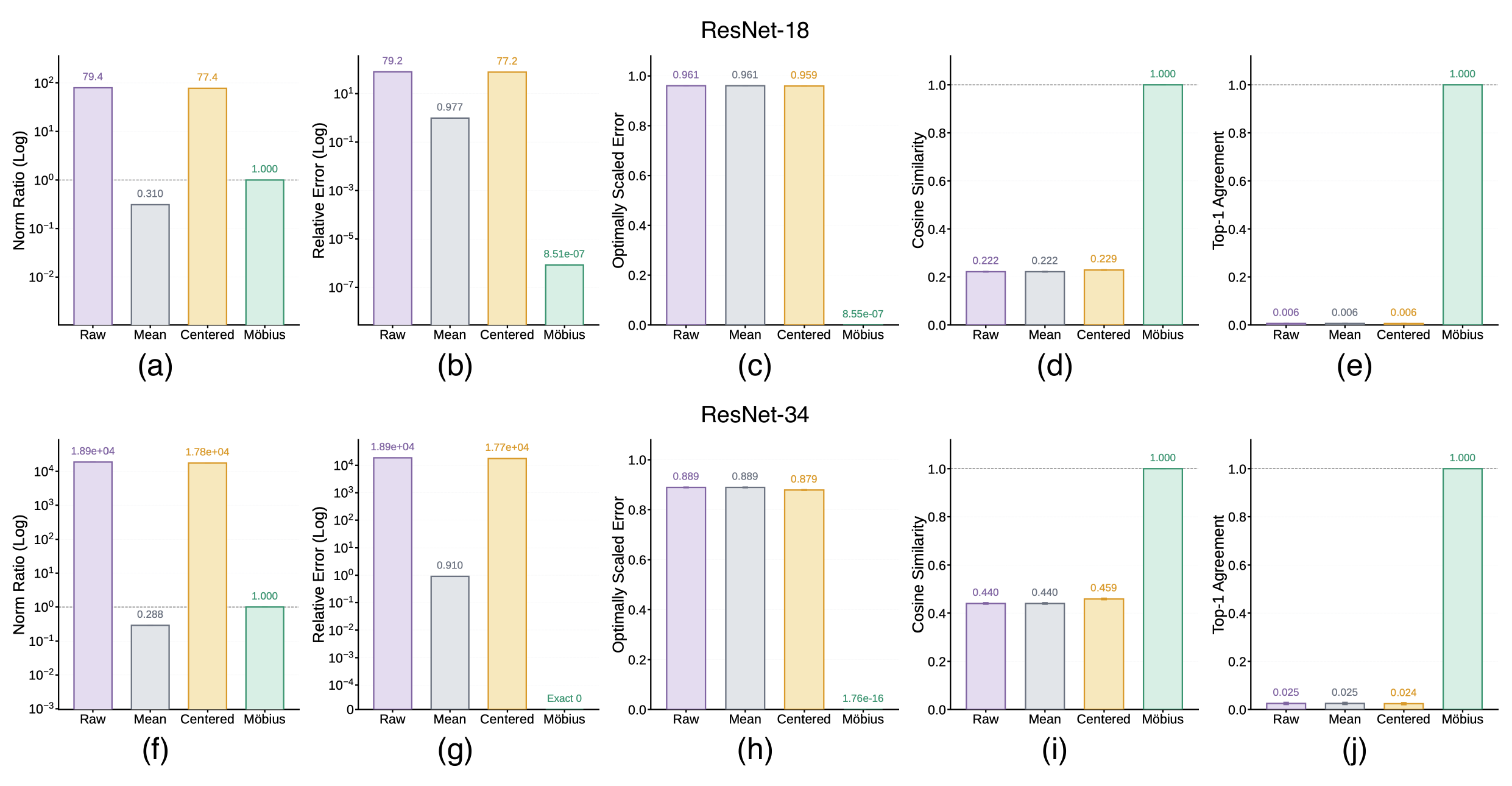}
\caption{Naive masked-response aggregation fails, whereas M\"obius
inversion reconstructs the full logits. The top and bottom rows show
ResNet-18 and ResNet-34, respectively. Columns report norm ratio, relative
error, optimally scaled error, cosine similarity, and top-1 agreement with the
full model. Bars and error bars denote per-image means and standard errors over
the fixed ImageNet-1K test subsets.}
\label{fig_appendix:naive-path-sum-fails}
\end{figure}

All three naive reconstructions fail on both architectures. The raw and
centered sums severely inflate the logit norm. Their mean norm ratios are
$79.4$ and $77.4$ for ResNet-18, and approximately
$1.89\times10^4$ and $1.78\times10^4$ for ResNet-34. The centered result shows
that this failure cannot be explained solely by repeatedly including the
shortcut-only baseline.

Averaging prevents the magnitude explosion, but does not recover the
full-model logits. The mean-output norm ratios are $0.310$ and $0.288$, while
the corresponding relative errors remain $0.977$ and $0.910$ for ResNet-18
and ResNet-34. Thus, an unweighted ensemble average of masked subnetworks is
also not a faithful approximation.

Optimal scalar rescaling does not resolve the discrepancy. The scaled errors
remain between $0.959$ and $0.961$ for ResNet-18 and between $0.879$ and
$0.889$ for ResNet-34. Cosine similarities remain low, ranging from
$0.222$ to $0.229$ and from $0.440$ to $0.459$, respectively. Correspondingly,
the naive reconstructions preserve the full model's top-1 prediction on only
$0.62$--$0.64\%$ of ResNet-18 inputs and $2.4$--$2.5\%$ of ResNet-34 inputs.
The failure is therefore directional rather than merely a global scale
mismatch.

In contrast, M\"obius inversion reconstructs the full logits up to numerical
precision using exactly the same masked evaluations. For ResNet-18, the mean
relative error is $8.51\times10^{-7}$; for the FP64 ResNet-34 transform, it is
zero at the reported precision. Both reconstructions have unit norm ratio and
cosine similarity and preserve the full-model prediction on every evaluated
image.

These results do not challenge the structural path-ensemble interpretation.
Rather, they establish that a masked-subnetwork response is an overlapping
nonlinear intervention, not an independent additive path contribution.
Summation, averaging, baseline centering, and optimal rescaling cannot remove
this overlap. The exact additive units of the residual-mask response are the
M\"obius coefficients $\Delta_S h_x$, whose inclusion--exclusion construction
removes effects already represented by lower-order subsets. This distinction
motivates the functional residual-interaction analysis used throughout the
paper.


\clearpage
\section{Interaction-Order Spectrum}
\label{app:interaction-order-spectrum}

This section provides the full protocol and numerical results for the
interaction-order analysis summarized in
Figure~\ref{fig:interaction-order-spectrum}. We exhaustively evaluate all
residual-branch masks of pretrained ResNet-18 and ResNet-34 models and examine
how their M\"obius squared coefficient mass is distributed across interaction
orders. At the natural operating point, $\lambda=1$, both models are dominated
by mid-to-high-order interactions: their spectra peak at orders $5$ and $10$,
respectively, above the corresponding midpoints $L/2=4$ and $L/2=8$.

\subsection{Motivation}

The exact reconstruction experiment in
Appendix~\ref{app:naive-path-sum} establishes that M\"obius coefficients form
an additive decomposition of the residual-mask response. We next ask how the
magnitude of these coefficients is distributed across interaction orders. For
a subset $S\subseteq[L]$, the order $|S|$ counts the number of residual
branches involved in the corresponding interaction. It should not be confused
with structural path length: branches in a low-order interaction may be far
apart in depth, while a short structural path need not induce a low-order
functional interaction.

Our diagnostic question is therefore whether a pretrained ResNet, at its
natural operating point, is dominated by individual and low-order residual
corrections or by interactions involving larger groups of branches. This
experiment also establishes the $\lambda=1$ reference point for the residual
scaling intervention. Proposition~\ref{prop:residual-scaling} characterizes
the attenuation of each fixed $k$-way coefficient as
$\lambda\rightarrow0$; it does not require the unscaled network itself to be
low-order dominated.

\subsection{Experimental Setup and Evaluation Metrics}

\paragraph{Models and Data.}
We evaluate torchvision ResNet-18 and ResNet-34 checkpoints pretrained with
the \texttt{IMAGENET1K\_V1} weights. Their residual stages contain
$[2,2,2,2]$ and $[3,4,6,3]$ BasicBlocks, giving $L=8$ and $L=16$ gated
residual branches, respectively. Each block computes
$x_{l+1}(m)=\phi_l\!\left(s_l(x_l(m))+m_lF_l(x_l(m))\right)$, so setting
$m_l=0$ suppresses only the residual branch while preserving the identity or
projection shortcut.

For ResNet-18, we use a fixed set of $N=10{,}000$ images from the unlabeled
ImageNet-1K test split and enumerate all $2^8=256$ masks per image. For
ResNet-34, we use a deterministic $N=1{,}000$-image subset of the same test
pool and enumerate all $2^{16}=65{,}536$ masks. The smaller ResNet-34 sample
reflects the substantially higher cost of exhaustive mask enumeration. Both
models are evaluated in inference mode using the preprocessing associated
with their pretrained weights.

\paragraph{Readouts and Interaction Coefficients.}
For each input $x$, let $h_x(m)\in\mathbb{R}^{1000}$ denote the logit vector
under mask $m$. The scalar readout is the logit of the class predicted by the
unmasked model,
\[
v_x(m)=h_{x,\hat y(x)}(m),
\qquad
\hat y(x)=\arg\max_c h_{x,c}(\mathbf{1}_{[L]}),
\]
where $\hat y(x)$ is fixed across all masks. We apply M\"obius inversion
component-wise to obtain the scalar coefficients $\Delta_Sv_x$ and vector
coefficients $\Delta_Sh_x$. Their magnitudes are
$a_S^v(x)=|\Delta_Sv_x|$ and $a_S^h(x)=\|\Delta_Sh_x\|_2$.

\paragraph{Order Statistics.}
For either readout $r\in\{v,h\}$, the squared coefficient mass at order $k$
and the total non-constant mass are
\[
E_k^r(x)=
\sum_{\substack{S\subseteq[L]\\|S|=k}}a_S^r(x)^2,
\qquad
Z^r(x)=\sum_{j=1}^{L}E_j^r(x).
\]
We report the normalized spectrum, cumulative low-order mass, effective
interaction order, and mean per-interaction mass:
\[
\widetilde E_k^r(x)=\frac{E_k^r(x)}{Z^r(x)},
\qquad
C_{\le q}^r(x)=\frac{\sum_{k=1}^{q}E_k^r(x)}{Z^r(x)},
\]
\[
\kappa^r(x)=
\frac{\sum_{k=1}^{L}kE_k^r(x)}{Z^r(x)},
\qquad
M_k^r(x)=\frac{E_k^r(x)}{\binom{L}{k}}.
\]
The normalized spectrum, cumulative mass, and effective order exclude the
zeroth-order shortcut-only coefficient. We include $M_0^r=E_0^r$ in the
figure only as a baseline reference. The quantity $M_k^r$ controls for the
different numbers $\binom{L}{k}$ of subsets available at each order.

All quantities are first computed and normalized separately for each image
and then averaged across images. We use ``interaction energy'' as shorthand
for squared M\"obius coefficient mass. Because the coefficients are not
generally orthogonal, $\widetilde E_k^r$ should not be interpreted as the
fraction of the full-logit norm explained by order $k$.

\subsection{Results and Analysis}
\label{app:interaction-order-spectrum-results}

\begin{figure*}[t]
    \centering
    \includegraphics[width=\textwidth]{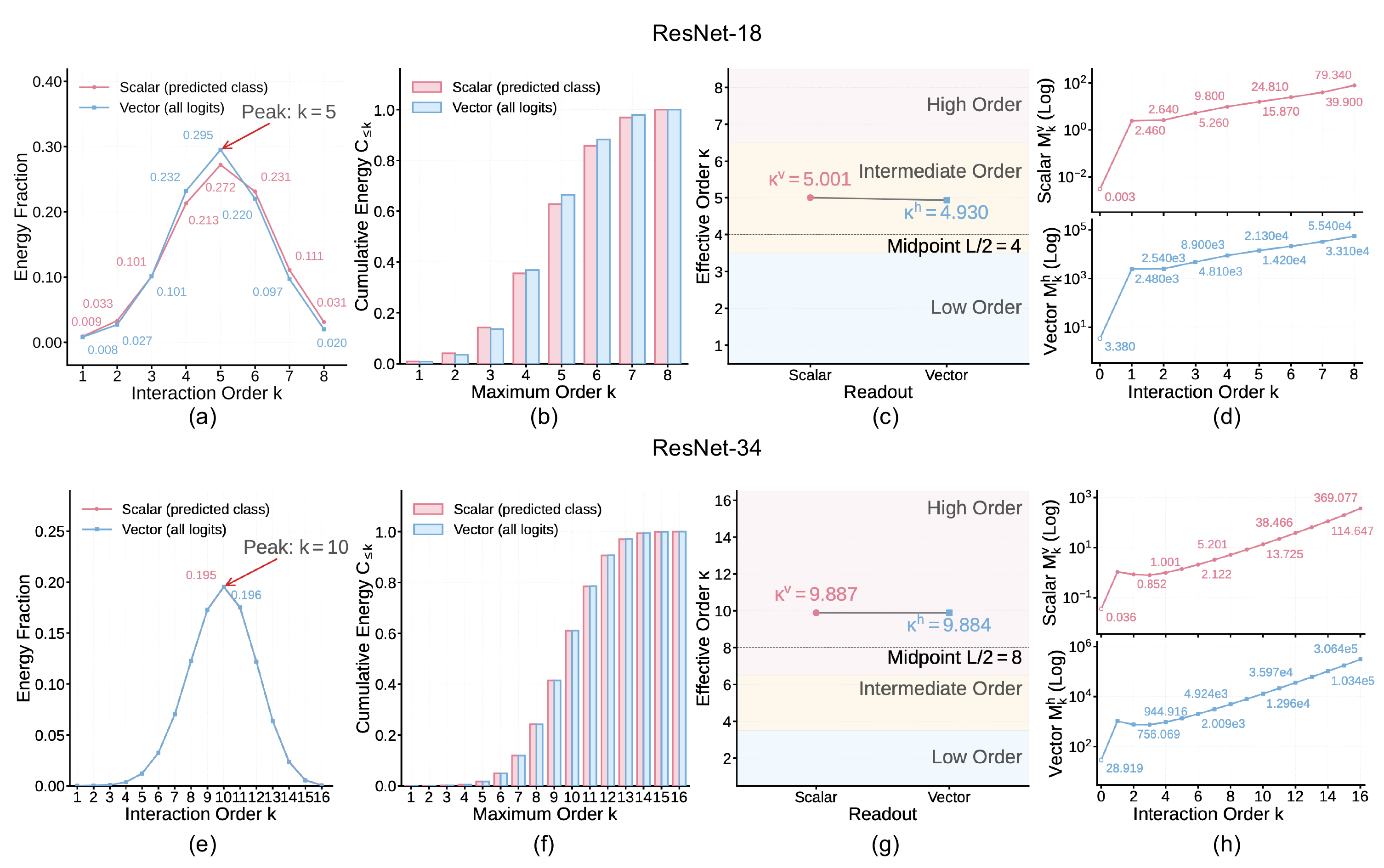}
    \caption{Residual interaction-order spectra at the natural
    operating point. Results are shown for ResNet-18
    (top; $N=10{,}000$, $L=8$) and ResNet-34
    (bottom; $N=1{,}000$, $L=16$). From left to right, the columns show
    the normalized squared coefficient mass $\widetilde E_k$, cumulative
    low-order mass $C_{\le k}$, effective interaction order $\kappa$, and
    mean per-interaction mass $M_k$. Scalar readouts use the class predicted
    by the full model, while vector readouts use all logits. Both models peak
    above $L/2$, and the large high-order $M_k$ values show that this pattern
    is not explained solely by the number of subsets at each order.}
    \label{fig_appendix:interaction-order-spectrum}
\end{figure*}

\begin{table*}[t]
\centering
\small
\setlength{\tabcolsep}{5pt}
\caption{Summary of the interaction-order spectra.
$C_{\le L/2}$ includes the midpoint order, while $M_L/M_1$ compares the
dataset-mean per-interaction masses at the highest and first orders. Values
are rounded to approximately four significant digits.}
\label{tab:exp2-order-summary}
\begin{tabular}{llccccc}
\toprule[1.5pt]
\textbf{Model} & \textbf{Readout}
& \textbf{Peak $k$ ($\widetilde E_k$)}
& \boldmath{$C_{\le3}$}
& \boldmath{$C_{\le L/2}$}
& \boldmath{$\kappa$}
& \boldmath{$M_L/M_1$} \\
\midrule
ResNet-18
& Scalar
& $5\;(0.2721)$
& $0.1417$
& $0.3545$
& $5.001$
& $32.21$ \\
& Vector
& $5\;(0.2951)$
& $0.1356$
& $0.3675$
& $4.929$
& $22.31$ \\
\midrule
ResNet-34
& Scalar
& $10\;(0.1951)$
& $0.001153$
& $0.2421$
& $9.887$
& $341.4$ \\
& Vector
& $10\;(0.1956)$
& $0.001116$
& $0.2419$
& $9.884$
& $296.4$ \\
\bottomrule[1.5pt]
\end{tabular}
\end{table*}

\begin{table*}[t]
\centering
\small
\setlength{\tabcolsep}{7pt}
\caption{Complete normalized interaction-order spectra.
Entries report the dataset-mean normalized squared coefficient mass
$\widetilde E_k$. Six decimal places are retained so that the small
low-order values of ResNet-34 are not rounded to zero.}
\label{tab:exp2-full-spectrum}
\begin{tabular}{c cc cc}
\toprule[1.5pt]
& \multicolumn{2}{c}{\textbf{ResNet-18}}
& \multicolumn{2}{c}{\textbf{ResNet-34}} \\
\cmidrule(lr){2-3}\cmidrule(lr){4-5}
\textbf{$k$}
& \textbf{Scalar} & \textbf{Vector}
& \textbf{Scalar} & \textbf{Vector} \\
\midrule
1  & $0.008619$ & $0.007611$ & $0.000038$ & $0.000036$ \\
2  & $0.032546$ & $0.027096$ & $0.000212$ & $0.000196$ \\
3  & $0.100549$ & $0.100901$ & $0.000903$ & $0.000884$ \\
4  & $0.212804$ & $0.231893$ & $0.003625$ & $0.003595$ \\
5  & $\mathbf{0.272092}$ & $\mathbf{0.295098}$
   & $0.012134$ & $0.012054$ \\
6  & $0.231087$ & $0.220151$ & $0.032453$ & $0.032361$ \\
7  & $0.111121$ & $0.096990$ & $0.070109$ & $0.070110$ \\
8  & $0.031182$ & $0.020260$ & $0.122593$ & $0.122682$ \\
9  & -- & -- & $0.172787$ & $0.172977$ \\
10 & -- & -- & $\mathbf{0.195111}$ & $\mathbf{0.195567}$ \\
11 & -- & -- & $0.174654$ & $0.175310$ \\
12 & -- & -- & $0.121618$ & $0.121884$ \\
13 & -- & -- & $0.063775$ & $0.063326$ \\
14 & -- & -- & $0.023746$ & $0.023134$ \\
15 & -- & -- & $0.005600$ & $0.005306$ \\
16 & -- & -- & $0.000641$ & $0.000577$ \\
\bottomrule[1.5pt]
\end{tabular}
\end{table*}

\paragraph{Mid-to-high-order interactions dominate.}
Figure~\ref{fig_appendix:interaction-order-spectrum}(a,e) and
Table~\ref{tab:exp2-full-spectrum} show that both readouts peak at $k=5$ for
ResNet-18 and at $k=10$ for ResNet-34. For ResNet-18, orders $4$--$6$
contain $71.6\%$ of the scalar mass and $74.7\%$ of the vector mass, whereas
orders $1$--$3$ contain only $14.17\%$ and $13.56\%$. For ResNet-34,
orders $8$--$12$ account for $78.68\%$ and $78.84\%$, while orders
$1$--$3$ contribute only approximately $0.11\%$. The cumulative curves in
Figure~\ref{fig_appendix:interaction-order-spectrum}(b,f) further show that
$63.25\%$--$64.55\%$ of the ResNet-18 mass and approximately $75.8\%$ of
the ResNet-34 mass lie above their respective midpoints. Consistently, the
effective orders in Figure~\ref{fig_appendix:interaction-order-spectrum}(c,g) and
Table~\ref{tab:exp2-order-summary} are above $L/2$ for both models.

\paragraph{The Pattern is not purely combinatorial.}
The total mass $E_k$ may be elevated near the middle orders because those
orders contain more subsets. Figure~\ref{fig_appendix:interaction-order-spectrum}(d,h)
therefore reports the mean per-interaction mass
$M_k=E_k/\binom{L}{k}$. For ResNet-18, $M_k$ increases with order and
reaches its maximum at $k=L$, with $M_L/M_1=32.21$ for the scalar readout
and $22.31$ for the vector readout. For ResNet-34, $M_k$ decreases slightly
over orders $1$--$3$ but then rises sharply, reaching
$M_L/M_1=341.4$ and $296.4$, respectively. These ratios are summarized in
Table~\ref{tab:exp2-order-summary}. Thus, the mid-to-high-order concentration
cannot be explained solely by the combinatorial prevalence of middle-order
subsets.

\paragraph{Readout agreement and interpretation.}
Across both architectures, the scalar and vector curves in
Figure~\ref{fig_appendix:interaction-order-spectrum} have closely aligned peak
locations, cumulative profiles, and effective orders. The numerical
comparison in Table~\ref{tab:exp2-order-summary} confirms this agreement,
indicating that the conclusion is not specific to the predicted-class
readout. At $\lambda=1$, both studied checkpoints therefore operate in a
mid-to-high-order functional interaction regime. This observation does not
contradict Proposition~\ref{prop:residual-scaling}, which characterizes the
attenuation of fixed-order coefficients as $\lambda\rightarrow0$. The
subsequent residual-scaling experiment tests whether reducing $\lambda$
causally shifts the observed spectra toward lower orders.

\clearpage
\section{Residual-Scaling Sweep}
\label{app:residual-scaling-sweep}

This section examines how uniform inference-time residual scaling changes the
functional interaction spectrum. For both ResNet-18 and ResNet-34, decreasing
the residual scale $\lambda$ monotonically reduces the effective interaction
order and transfers squared coefficient mass toward orders $1$--$3$.
However, substantial low-order concentration emerges only under strong
scaling, where the model's predictions differ almost completely from those at
$\lambda=1$. Residual scaling therefore provides causal control over the
interaction spectrum, but the observed shift is not prediction preserving.

\subsection{Motivation}

Appendix~\ref{app:interaction-order-spectrum} shows that the unscaled
ResNet-18 and ResNet-34 checkpoints operate in mid-to-high-order interaction
regimes. Their normalized spectra peak at $k=5$ and $k=10$, with effective
orders near $5$ and $10$, respectively. Proposition~\ref{prop:residual-scaling}
predicts that, for a fixed readout, a $k$-way coefficient is attenuated as
$\mathcal{O}(\lambda^k)$ when residual updates are uniformly scaled by
$\lambda$. Higher-order coefficients should therefore decay faster than
lower-order coefficients as $\lambda$ decreases.

We test this prediction through an inference-time intervention, without
retraining. The experiment asks two related questions: whether decreasing
$\lambda$ causally shifts the interaction spectrum toward lower orders, and
whether such a shift can occur while preserving the predictions of the
unscaled model.

\subsection{Experimental Setup and Evaluation Metrics}

\paragraph{Models, Data, and Intervention.}
We use the same pretrained models, test images, residual gates, and exhaustive
mask evaluations as in Appendix~\ref{app:interaction-order-spectrum}:
ResNet-18 with $L=8$ and $N=10{,}000$, and ResNet-34 with $L=16$ and
$N=1{,}000$. For every binary mask $m\in\{0,1\}^L$, the scaled block
computes
\[
x_{l+1}^{(\lambda)}(m)
=
\phi_l\!\left(
s_l(x_l^{(\lambda)}(m))
+
\lambda m_lF_l(x_l^{(\lambda)}(m))
\right).
\]
Thus, an active residual branch is multiplied by $\lambda$, an inactive
branch remains suppressed, and the shortcut is always preserved. We sweep
\[
\lambda\in
\{1.00,0.95,0.90,0.85,0.80,0.75,0.50,0.25,0.10\}
\]
without modifying the pretrained parameters.

\paragraph{Readouts.}
Let $h_x^{(\lambda)}(m)$ denote the logits under residual scale $\lambda$.
For the scalar spectrum, we select the class predicted by the scaled
full-mask model,
\[
\hat y_\lambda(x)
=
\arg\max_c h_{x,c}^{(\lambda)}(\mathbf{1}_{[L]}),
\qquad
v_x^{(\lambda)}(m)
=
h_{x,\hat y_\lambda(x)}^{(\lambda)}(m).
\]
The class $\hat y_\lambda(x)$ is fixed across all masks at a given
$\lambda$, but it may change across values of $\lambda$. The vector readout
uses the complete logit vector and is unaffected by this coordinate choice.

\paragraph{Spectrum Metrics.}
For each $\lambda$, we recompute the scalar and vector M\"obius coefficients
and the order-wise squared coefficient masses
$E_k^{r,(\lambda)}(x)$ for $r\in\{v,h\}$. We summarize the spectrum using
\[
\kappa^{r,(\lambda)}(x)
=
\frac{
\sum_{k=1}^{L}kE_k^{r,(\lambda)}(x)
}{
\sum_{k=1}^{L}E_k^{r,(\lambda)}(x)
},
\qquad
C_{\le3}^{r,(\lambda)}(x)
=
\frac{
\sum_{k=1}^{3}E_k^{r,(\lambda)}(x)
}{
\sum_{k=1}^{L}E_k^{r,(\lambda)}(x)
}.
\]
A decrease in $\kappa$ and an increase in $C_{\le3}$ indicate a shift toward
lower-order interactions. The complementary high-order mass is
$T_{>3}=1-C_{\le3}$ and is therefore omitted from the tables.

\paragraph{Prediction Agreement.}
To quantify how much the intervention changes the model's behavior, we
compare the full-mask prediction at each $\lambda$ with the prediction at
$\lambda=1$:
\[
A(\lambda)
=
\frac{1}{N}
\sum_{x}
\mathbf{1}\!\left[
\arg\max_c h_{x,c}^{(\lambda)}(\mathbf{1}_{[L]})
=
\arg\max_c h_{x,c}^{(1)}(\mathbf{1}_{[L]})
\right].
\]
This quantity is shared by the scalar and vector analyses. Because the
ImageNet test images are unlabeled, $A(\lambda)$ measures prediction
agreement rather than classification accuracy. All reported quantities are
computed per image and then averaged across images.

\subsection{Results and Analysis}
\label{app:residual-scaling-results}

\begin{figure}[t]
    \centering
    \includegraphics[width=\textwidth]
    {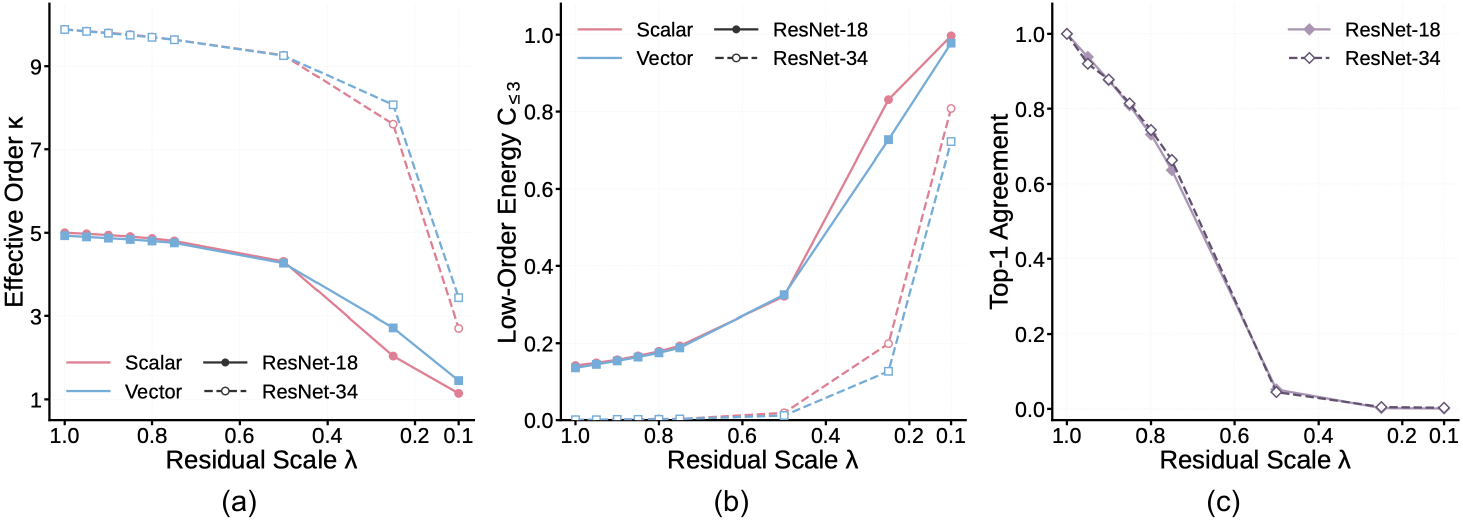}
    \caption{Inference-time residual-scaling sweep.
    Results are shown for pretrained ResNet-18
    ($N=10{,}000$) and ResNet-34 ($N=1{,}000$), without retraining.
    Solid and dashed lines denote ResNet-18 and ResNet-34, respectively.
    Pink and blue denote scalar and vector readouts.
    (a) Effective interaction order $\kappa$.
    (b) Cumulative low-order squared coefficient mass $C_{\le3}$.
    (c) Top-1 agreement between the scaled full-mask prediction and the
    corresponding prediction at $\lambda=1$. Decreasing $\lambda$ shifts
    both models toward lower-order interactions, but large shifts coincide
    with substantial changes in prediction.}
    \label{fig_appendix:residual-scaling-sweep}
\end{figure}

\begin{table*}[t]
\centering
\small
\setlength{\tabcolsep}{7pt}
\caption{Residual-scaling sweep for ResNet-18.
Entries are dataset means over $N=10{,}000$ images and are rounded to four
decimal places. Agreement compares the full-mask prediction at each
$\lambda$ with that at $\lambda=1$.}
\label{tab:exp3-r18-scaling}
\begin{tabular}{cccccc}
\toprule[1.5pt]
\boldmath{$\lambda$}
& \boldmath{$\kappa^v$}
& \boldmath{$\kappa^h$}
& \boldmath{$C_{\le3}^v$}
& \boldmath{$C_{\le3}^h$}
& \textbf{Top-1 agreement} \\
\midrule
1.00 & 5.0009 & 4.9295 & 0.1417 & 0.1356 & 1.0000 \\
0.95 & 4.9756 & 4.9002 & 0.1491 & 0.1447 & 0.9375 \\
0.90 & 4.9448 & 4.8706 & 0.1573 & 0.1541 & 0.8759 \\
0.85 & 4.9086 & 4.8389 & 0.1671 & 0.1640 & 0.8093 \\
0.80 & 4.8635 & 4.8022 & 0.1786 & 0.1751 & 0.7314 \\
0.75 & 4.8042 & 4.7572 & 0.1933 & 0.1882 & 0.6365 \\
0.50 & 4.3103 & 4.2689 & 0.3211 & 0.3254 & 0.0518 \\
0.25 & 2.0431 & 2.7209 & 0.8314 & 0.7268 & 0.0019 \\
0.10 & 1.1444 & 1.4517 & 0.9968 & 0.9777 & 0.0014 \\
\bottomrule[1.5pt]
\end{tabular}
\end{table*}

\begin{table*}[t]
\centering
\small
\setlength{\tabcolsep}{7pt}
\caption{Residual-scaling sweep for ResNet-34.
Entries are dataset means over $N=1{,}000$ images and are rounded to four
decimal places. Agreement compares the full-mask prediction at each
$\lambda$ with that at $\lambda=1$.}
\label{tab:exp3-r34-scaling}
\begin{tabular}{cccccc}
\toprule[1.5pt]
\boldmath{$\lambda$}
& \boldmath{$\kappa^v$}
& \boldmath{$\kappa^h$}
& \boldmath{$C_{\le3}^v$}
& \boldmath{$C_{\le3}^h$}
& \textbf{Top-1 agreement} \\
\midrule
1.00 & 9.8872 & 9.8835 & 0.0012 & 0.0011 & 1.0000 \\
0.95 & 9.8462 & 9.8390 & 0.0014 & 0.0013 & 0.9200 \\
0.90 & 9.8027 & 9.7938 & 0.0017 & 0.0016 & 0.8780 \\
0.85 & 9.7616 & 9.7474 & 0.0020 & 0.0020 & 0.8140 \\
0.80 & 9.7013 & 9.6957 & 0.0025 & 0.0024 & 0.7430 \\
0.75 & 9.6391 & 9.6373 & 0.0032 & 0.0030 & 0.6630 \\
0.50 & 9.2639 & 9.2544 & 0.0188 & 0.0118 & 0.0450 \\
0.25 & 7.6038 & 8.0708 & 0.1986 & 0.1267 & 0.0050 \\
0.10 & 2.7084 & 3.4369 & 0.8082 & 0.7219 & 0.0030 \\
\bottomrule[1.5pt]
\end{tabular}
\end{table*}

\paragraph{Residual scaling shifts both spectra toward lower orders.}
Figure~\ref{fig_appendix:residual-scaling-sweep}(a,b) and
Tables~\ref{tab:exp3-r18-scaling}--\ref{tab:exp3-r34-scaling} show that
$\kappa$ decreases and $C_{\le3}$ increases monotonically as $\lambda$
decreases. In ResNet-18, the scalar and vector effective orders fall from
$5.0009$ and $4.9295$ at $\lambda=1$ to $1.1444$ and $1.4517$ at
$\lambda=0.1$. Over the same range, $C_{\le3}$ rises from $0.1417$ and
$0.1356$ to $0.9968$ and $0.9777$. ResNet-34 exhibits the same directional
response: its effective orders decrease from $9.8872$ and $9.8835$ to
$2.7084$ and $3.4369$, while $C_{\le3}$ increases from approximately
$0.001$ to $0.8082$ and $0.7219$.

\paragraph{The deeper model shifts more slowly.}
At a fixed residual scale, ResNet-34 retains substantially more high-order
mass than ResNet-18. At $\lambda=0.25$, for example, the ResNet-18
low-order masses are $0.8314$ and $0.7268$, whereas the corresponding
ResNet-34 values are only $0.1986$ and $0.1267$. ResNet-34 requires scaling
to $\lambda=0.1$ before most of its squared coefficient mass moves into
orders $1$--$3$. Thus, the direction of the intervention is consistent
across architectures, but the scale required to reach a low-order regime
depends strongly on network depth.

\paragraph{Large spectral shifts do not preserve predictions.}
Figure~\ref{fig_appendix:residual-scaling-sweep}(c) shows that prediction agreement
decreases much faster than the effective interaction order. At
$\lambda=0.95$, agreement is still $0.9375$ for ResNet-18 and $0.9200$ for
ResNet-34, but the changes in $\kappa$ and $C_{\le3}$ remain small. At
$\lambda=0.8$, approximately one quarter of the predictions have already
changed, while both networks remain close to their original interaction
orders. By $\lambda=0.5$, agreement has fallen to $0.0518$ and $0.0450$,
yet ResNet-18 still has $\kappa\approx4.3$ and ResNet-34 has
$\kappa\approx9.3$. The strongest low-order concentration occurs only at
$\lambda\in\{0.25,0.1\}$, where agreement with the unscaled predictions is
nearly zero.

\paragraph{Interpretation.}
The monotonic decrease in $\kappa$ and increase in $C_{\le3}$ are consistent
with the faster attenuation of high-order coefficients predicted by
Proposition~\ref{prop:residual-scaling}. The experiment should nevertheless
be interpreted as directional evidence rather than a direct verification of
the asymptotic bound: the proposition is coefficient-wise, whereas
$\kappa$ and $C_{\le3}$ are normalized aggregate statistics. Moreover, the
scalar coordinate may change across $\lambda$ because it follows the scaled
model's predicted class. The vector readout is unaffected by this choice and
shows the same qualitative trend. Overall, residual scaling causally moves
the functional decomposition toward lower orders, but this shift does not
occur without substantially changing the model's predictions.

\clearpage
\section{Top-$K$ Residual-Interaction Reconstruction}
\label{app:topk-sparse-reconstruction}

This section tests whether the exact residual-interaction dictionary is
compressible at the level of individual terms. We rank the non-empty
M\"obius coefficients separately for each input and reconstruct the scalar
predicted-class readout or the full logit vector from the largest $K$ terms.
The results reveal a consistent distinction: squared coefficient mass is
concentrated, but faithful output reconstruction can require a substantially
larger support. This distinction is modest for ResNet-18 and pronounced for
ResNet-34. The selected sets are post-hoc analytical supports and do not imply
sparse execution.

\subsection{Motivation}

The interaction-order spectrum in
Appendix~\ref{app:interaction-order-spectrum} describes how coefficient mass
is distributed across orders $|S|$, but it does not determine how many
individual terms carry that mass. A mid- or high-order spectrum may still be
concentrated in relatively few subsets. We therefore ask: \emph{How many
input-specific interaction terms are required to capture the coefficient mass,
approximate the logits, and preserve the full model's prediction?} These are
different criteria because signed scalar coefficients can cancel and
vector-valued coefficients are not generally orthogonal.

\subsection{Experimental Setup and Evaluation Metrics}

\paragraph{Models and Residual-mask Evaluations.}
We analyze torchvision ResNet-18 and ResNet-34 checkpoints pretrained on
ImageNet-1K. ResNet-18 contains $L=8$ gateable residual branches and is
evaluated on a fixed set of $N=10{,}000$ unlabeled ImageNet-1K test images;
ResNet-34 contains $L=16$ branches and is evaluated on a fixed set of
$N=1{,}000$ test images. For every image, we exhaustively evaluate all $2^L$
binary residual-branch masks while preserving the identity or projection
shortcut. M\"obius inversion then yields $M=2^L-1$ non-empty interaction
coefficients: $M=255$ for ResNet-18 and $M=65{,}535$ for ResNet-34. The scalar
readout $v_x$ is the logit of the class predicted by the all-open model, with
that class fixed across masks; $h_x$ denotes the complete logit vector.

\paragraph{Per-input Magnitude Selection and Reconstruction.}
For each input $x$, scalar and vector interactions are ranked independently by
$a_S^v(x)=|\Delta_Sv_x|$ and
$a_S^h(x)=\|\Delta_Sh_x\|_2$. Let
$\mathcal A_K^v(x)$ and $\mathcal A_K^h(x)$ denote the corresponding $K$
largest non-empty subsets. The baseline coefficient is always retained, and
the signed reconstructions are
\begin{equation}
\begin{aligned}
\widehat v_{x,K}
&=\Delta_{\emptyset}v_x
  +\sum_{S\in\mathcal A_K^v(x)}\Delta_Sv_x,\\
\widehat h_{x,K}
&=\Delta_{\emptyset}h_x
  +\sum_{S\in\mathcal A_K^h(x)}\Delta_Sh_x.
\end{aligned}
\label{eq:app-topk-reconstruction}
\end{equation}
Magnitude selection maximizes retained squared coefficient mass for a fixed
$K$; it does not directly minimize reconstruction error or maximize prediction
agreement. We therefore refer to it as \emph{per-input magnitude Top-$K$}, not
as an oracle reconstruction.

\paragraph{Metrics.}
For either readout $r\in\{v,h\}$, the retained squared coefficient mass is
\begin{equation}
\Gamma_K^r(x)
=
\frac{\sum_{S\in\mathcal A_K^r(x)}a_S^r(x)^2}
     {\sum_{S\ne\emptyset}a_S^r(x)^2+\epsilon}.
\label{eq:app-topk-mass}
\end{equation}
We measure scalar and vector relative errors by
$\mathrm{Err}_K^v=|v_x(\mathbf{1}_{[L]})-\widehat v_{x,K}|/(|v_x(\mathbf{1}_{[L]})|+\epsilon)$
and
$\mathrm{Err}_K^h=\|h_x(\mathbf{1}_{[L]})-\widehat h_{x,K}\|_2/
(\|h_x(\mathbf{1}_{[L]})\|_2+\epsilon)$. For vector reconstructions, we additionally
report cosine similarity $\mathrm{Cos}_K$ and Top-1 agreement with the all-open model. The
latter is prediction agreement, not ground-truth accuracy. We also plot the
ratio between the reconstructed and full-model predicted-class margins, but do
not use its mean as a primary statistic because the ratio is unstable when the
full-model margin is close to zero.

For thresholds $\tau\in\{0.10,0.05\}$ and
$\eta\in\{0.90,0.95,0.99\}$, we summarize the vector ranking by
\begin{equation}
K_\tau^{\mathrm{err}}(x)
=\min\{K:\mathrm{Err}_K^h(x)\le\tau\},
\qquad
K_\eta^{\mathrm{mass}}(x)
=\min\{K:\Gamma_K^h(x)\ge\eta\}.
\label{eq:app-topk-effective-k}
\end{equation}
Because reconstruction error need not decrease monotonically with $K$,
$K_\tau^{\mathrm{err}}$ is a first-hit statistic. In contrast,
$\Gamma_K^h$ is monotone by construction.

\paragraph{Baselines and Reporting Grids.}
We compare magnitude Top-$K$ with three vector-readout controls: uniform
random-$K$ selection, order-matched random selection that preserves the number
of selected subsets at each order, and low-order truncation retaining every
subset with $|S|\le r$. All reconstructions include
$\Delta_{\emptyset}h_x$ and sum the selected signed coefficients. Random-$K$
results are averaged over five seeds. The order-matched result uses the stored
sampling protocol: one realization for ResNet-18 and five seeds for ResNet-34.
ResNet-18 is reported at
$K\in\{1,2,4,8,16,32,64,128,255\}$ and at retained fractions
$\{1,5,10,25,50,100\}\%$. ResNet-34 uses the analogous absolute grid together
with retained fractions
$\{0.1,0.5,1,5,10,25,50,100\}\%$, with $K=\lceil pM\rceil$.

\begin{figure*}[t]
\centering
\includegraphics[width=\textwidth,trim=7.9bp 23bp 7.9bp 15.5bp,clip]
{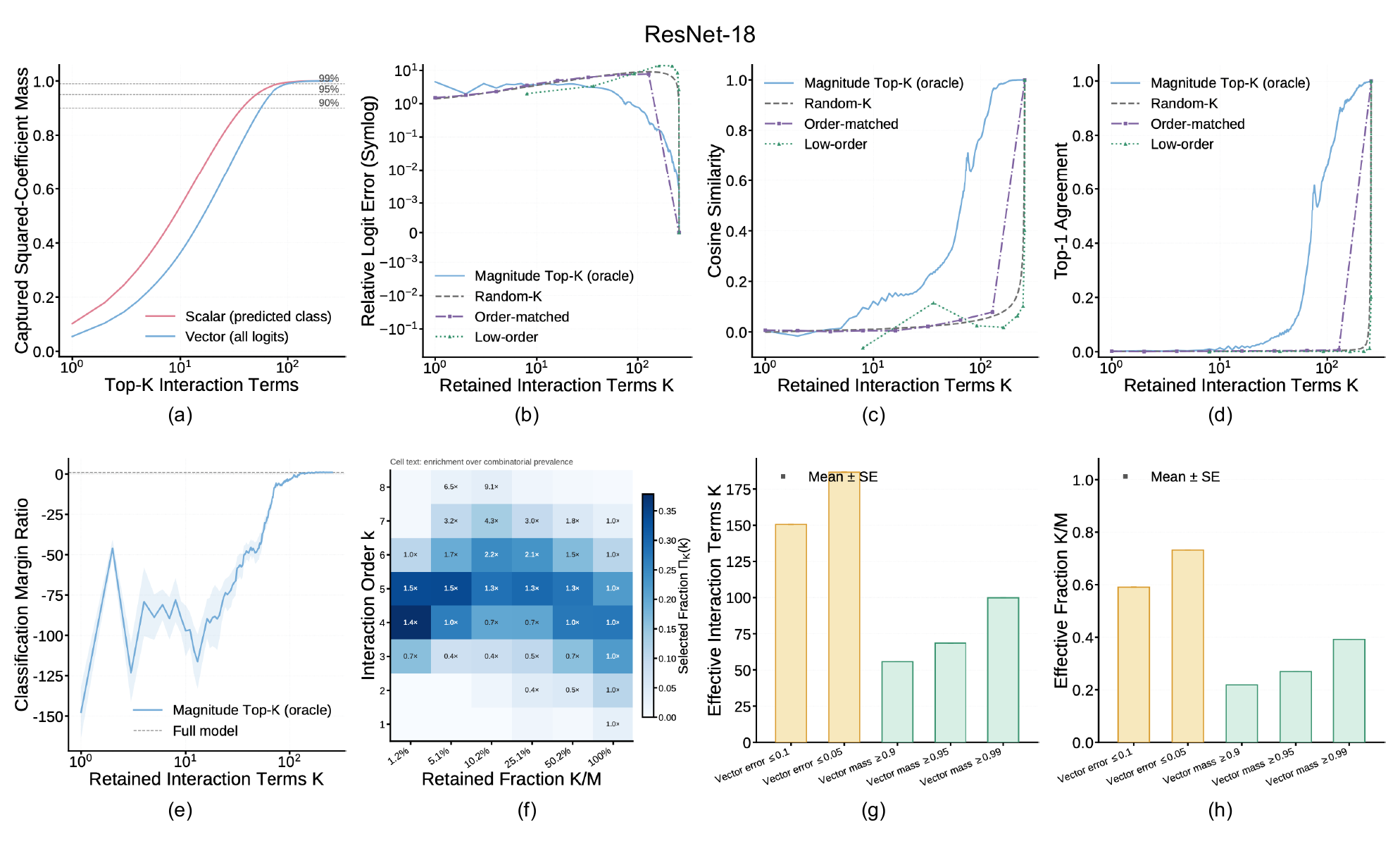}
\caption{Per-input magnitude Top-$K$ reconstruction for ResNet-18.
(a) Retained squared coefficient mass for scalar and vector rankings.
(b--d) Vector relative error, cosine similarity, and agreement with the
all-open model under magnitude, random, order-matched, and low-order selection.
(e) Predicted-class margin ratio. (f) Order composition of the magnitude
Top-$K$ sets; color denotes the selected fraction within each order and text
denotes enrichment over combinatorial prevalence. (g,h) Mean per-image
first-hit budgets and retained fractions for the stated error and mass
thresholds.}
\label{fig:app-exp3-r18}
\end{figure*}

\begin{figure*}[t]
\centering
\includegraphics[width=\textwidth,trim=7.9bp 23bp 7.9bp 15.5bp,clip]
{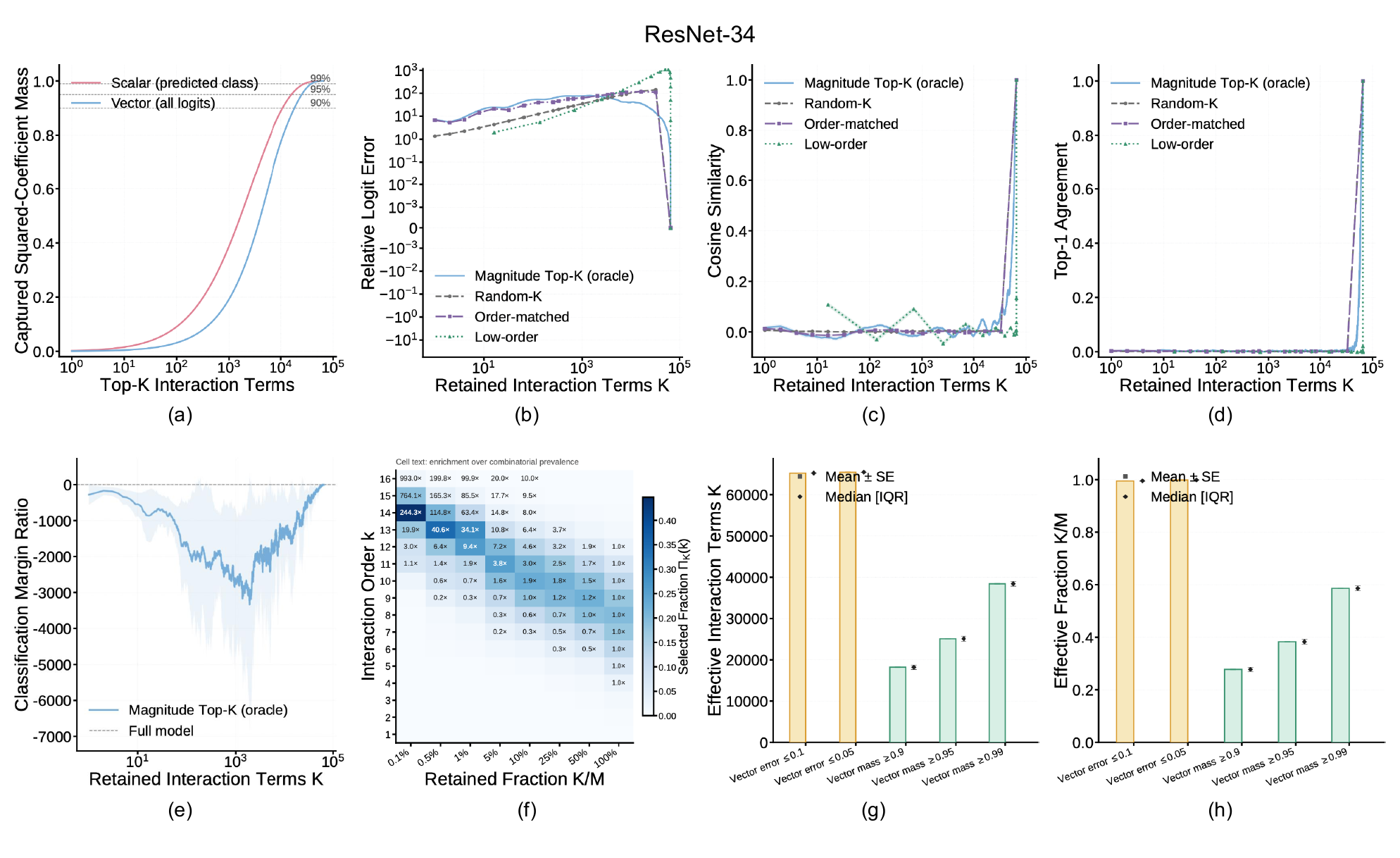}
\caption{Per-input magnitude Top-$K$ interaction reconstruction for
ResNet-34. Results are averaged over $N=1{,}000$ unlabeled ImageNet-1K test
images. The interaction dictionary contains $M=2^{16}-1=65{,}535$ non-empty
terms; the shortcut-only baseline is always included and is not counted in
$K$. (a) Squared coefficient mass captured by independently ranked scalar
predicted-class and vector-logit Top-$K$ sets. (b--d) Relative vector-logit
error, cosine similarity, and agreement with the all-open model prediction
under magnitude Top-$K$, random-$K$, order-matched random, and low-order
selection. Agreement is measured against the full-model prediction rather
than the ground-truth label. (e) Ratio of the reconstructed predicted-class
margin to the full-model margin. (f) Order composition of the vector
magnitude Top-$K$ sets across retained fractions; color denotes the selected
fraction within each interaction order, and cell text gives its enrichment
over combinatorial prevalence. (g,h) Effective interaction counts and their
fractions of the full dictionary for vector-error and coefficient-mass
thresholds; markers summarize the mean and median distributions across
inputs. Coefficient mass is concentrated, but faithful reconstruction is much
less sparse: retaining $50\%$ of the terms captures $97.90\%$ of the vector
mass, yet yields relative error $13.87$ and only $0.60\%$ prediction
agreement.}
\label{fig:app-exp3-r34}
\end{figure*}

\begin{table*}[t]
\centering
\caption{Magnitude Top-$K$ concentration and reconstruction.
Entries are dataset means; uncertainty terms are omitted for readability.
Scalar and vector supports are ranked separately. Top-1 agreement is measured
against the corresponding all-open model prediction.}
\label{tab:app-exp3-topk}
\scriptsize
\setlength{\tabcolsep}{4.0pt}
\begin{tabular}{@{}lrrrrrrr@{}}
\toprule[1.5pt]
Retained & $K$ & $\Gamma_K^v$ & $\mathrm{Err}_K^v$
& $\Gamma_K^h$ & $\mathrm{Err}_K^h$ & $\mathrm{Cos}_K$ & Agree \\
\midrule
\multicolumn{8}{l}{\textit{ResNet-18 ($N=10{,}000$, $M=255$)}}\\
1\%   & 3   & 0.2445 & 0.9934 & 0.1450 & 4.0016 &  0.0019 & 0.0019 \\
5\%   & 13  & 0.6140 & 0.8625 & 0.4345 & 4.0782 &  0.1214 & 0.0063 \\
10\%  & 26  & 0.8161 & 0.6754 & 0.6548 & 3.6881 &  0.1671 & 0.0339 \\
25\%  & 64  & 0.9751 & 0.2644 & 0.9364 & 1.7686 &  0.4896 & 0.3048 \\
50\%  & 128 & 0.9988 & 0.0575 & 0.9971 & 0.2850 &  0.9554 & 0.8766 \\
100\% & 255 & 1.0000 & $1.61\!\times\!10^{-7}$ & 1.0000
      & $1.04\!\times\!10^{-6}$ & 1.0000 & 1.0000 \\
\midrule
\multicolumn{8}{l}{\textit{ResNet-34 ($N=1{,}000$, $M=65{,}535$)}}\\
0.1\% & 66     & 0.0672 & 4.8806 & 0.0223 & 44.1222 &  0.0099 & 0.0010 \\
0.5\% & 328    & 0.2023 & 7.3811 & 0.0817 & 69.5117 &  0.0006 & 0.0000 \\
1\%   & 656    & 0.3077 & 8.0174 & 0.1400 & 76.1582 & -0.0164 & 0.0000 \\
5\%   & 3,277  & 0.6597 & 8.1968 & 0.4409 & 65.1384 & -0.0097 & 0.0020 \\
10\%  & 6,554  & 0.8125 & 6.6638 & 0.6545 & 42.4853 &  0.0115 & 0.0000 \\
25\%  & 16,384 & 0.9528 & 4.4102 & 0.8810 & 35.5789 &  0.0288 & 0.0020 \\
50\%  & 32,768 & 0.9947 & 1.7966 & 0.9790 & 13.8741 &  0.0788 & 0.0060 \\
100\% & 65,535 & 1.0000 & $<10^{-12}$ & 1.0000 & $<10^{-12}$
       & 1.0000 & 1.0000 \\
\bottomrule[1.5pt]
\end{tabular}
\end{table*}

\begin{table*}[t]
\centering
\caption{Representative vector-readout baseline comparisons.
Magnitude selection substantially improves numerical reconstruction over
random and order-matched selection, but this does not necessarily translate
into prediction agreement. Values are dataset means.}
\label{tab:app-exp3-baselines}
\scriptsize
\setlength{\tabcolsep}{4.2pt}
\begin{tabular}{@{}llrrrrrrr@{}}
\toprule[1.5pt]
Model & Retained & $K$ & \multicolumn{3}{c}{Relative error}
& \multicolumn{3}{c}{Top-1 agreement}\\
\cmidrule(lr){4-6}\cmidrule(l){7-9}
& & & Magnitude & Random & Order-matched
& Magnitude & Random & Order-matched\\
\midrule
ResNet-18 & 25\% & 64     & 1.7686  & 7.9048   & 7.3574   & 0.3048 & 0.0028 & 0.0044\\
          & 50\% & 128    & 0.2850  & 9.0794   & 7.7646   & 0.8766 & 0.0042 & 0.0068\\
ResNet-34 & 25\% & 16,384 & 35.5789 & 123.0253 & 117.9390 & 0.0020 & 0.0018 & 0.0006\\
          & 50\% & 32,768 & 13.8741 & 141.9837 & 111.8046 & 0.0060 & 0.0016 & 0.0024\\
\bottomrule[1.5pt]
\end{tabular}
\end{table*}

\begin{table}[t]
\centering
\caption{Vector effective budgets. Mean per-image first-hit $K$ and
its fraction of the $M$ non-empty interactions. Error thresholds use the first
crossing of a potentially non-monotone curve; mass thresholds are monotone.}
\label{tab:app-exp3-effective-k}
\small
\setlength{\tabcolsep}{4.0pt}
\begin{tabular}{@{}llrr@{}}
\toprule[1.5pt]
Model & Criterion & Mean $K$ & $K/M$ \\
\midrule
ResNet-18 & $\mathrm{Err}_K^h\le0.10$ & 150.60 & 59.059\%\\
          & $\mathrm{Err}_K^h\le0.05$ & 186.64 & 73.191\%\\
          & $\Gamma_K^h\ge0.90$       &  55.77 & 21.870\%\\
          & $\Gamma_K^h\ge0.95$       &  68.59 & 26.898\%\\
          & $\Gamma_K^h\ge0.99$       &  99.85 & 39.155\%\\
\midrule
ResNet-34 & $\mathrm{Err}_K^h\le0.10$ & 65,203.74 & 99.495\%\\
          & $\mathrm{Err}_K^h\le0.05$ & 65,425.85 & 99.833\%\\
          & $\Gamma_K^h\ge0.90$       & 18,188.62 & 27.754\%\\
          & $\Gamma_K^h\ge0.95$       & 25,097.15 & 38.296\%\\
          & $\Gamma_K^h\ge0.99$       & 38,389.40 & 58.578\%\\
\bottomrule[1.5pt]
\end{tabular}
\end{table}

\subsection{Results and Analysis}
\label{app:topk-sparse-reconstruction-results}

\paragraph{Squared coefficient mass is concentrated.}
Figures~\ref{fig:app-exp3-r18}a and~\ref{fig:app-exp3-r34}a, together with
Table~\ref{tab:app-exp3-topk}, show substantial mass concentration in both
models. For ResNet-18, retaining $25\%$ of the dictionary captures $97.51\%$
of scalar mass and $93.64\%$ of vector mass; at $50\%$, these values rise to
$99.88\%$ and $99.71\%$. ResNet-34 is less concentrated but remains
compressible in coefficient mass: $25\%$ of its terms capture $95.28\%$ of
scalar mass and $88.10\%$ of vector mass, while $50\%$ captures $99.47\%$ and
$97.90\%$. The effective budgets in
Table~\ref{tab:app-exp3-effective-k} give the same picture: $90\%$ of vector
mass requires, on average, $21.87\%$ of the ResNet-18 dictionary and $27.75\%$
of the ResNet-34 dictionary.

\paragraph{Mass concentration does not imply faithful reconstruction.}
The reconstruction metrics separate sharply from the mass curves. In
ResNet-18, retaining $K=64$ terms ($25\%$) captures $93.64\%$ of vector mass,
but yields relative error $1.7686$, cosine similarity $0.4896$, and only
$30.48\%$ agreement with the full-model prediction. At $K=128$ ($50\%$), the
error falls to $0.2850$ and agreement rises to $87.66\%$. The mean per-image
first-hit budgets for $10\%$ and $5\%$ vector error are $150.60$ and $186.64$
terms, or $59.06\%$ and $73.19\%$ of the dictionary
(Table~\ref{tab:app-exp3-effective-k}). Thus, ResNet-18 has a compressible
coefficient distribution, but accurate logit and decision reconstruction
retains a substantial tail.

This separation is much stronger in ResNet-34. Although $K=16{,}384$ ($25\%$)
captures $88.10\%$ of vector mass, its relative error is $35.5789$ and its
Top-1 agreement is $0.20\%$. Even at $K=32{,}768$ ($50\%$), vector mass is
$97.90\%$, but error remains $13.8741$ and agreement is only $0.60\%$
(Table~\ref{tab:app-exp3-topk}). On average, $99.495\%$ of all non-empty terms
are needed to reach $10\%$ vector error. For this checkpoint, squared
coefficient mass is therefore compressible, whereas magnitude-truncated logit
and prediction reconstruction is effectively dense.

\paragraph{Term identity matters, but order alone is insufficient.}
Table~\ref{tab:app-exp3-baselines} and panels (b)--(d) of
Figures~\ref{fig:app-exp3-r18} and~\ref{fig:app-exp3-r34} show that magnitude
selection usually yields much lower numerical error than random or
order-matched selection. For ResNet-18 at $K=64$, it also raises agreement from
below $0.5\%$ for both controls to $30.48\%$. In ResNet-34, magnitude selection
reduces error at the $25\%$ and $50\%$ budgets, but all three methods still
have near-zero agreement. Hence, selecting the identities of large terms is
more informative than selecting their orders alone, but it is not sufficient
for sparse decision reconstruction in the deeper model.

Low-order truncation fails for a complementary reason. In ResNet-18, retaining
all orders through $r=7$ keeps $254$ of $255$ terms yet gives error $2.6881$
and agreement $20.12\%$; including the eighth-order term restores exact
reconstruction. In ResNet-34, retaining all orders through $r=15$ keeps
$65{,}534$ of $65{,}535$ terms but still gives error $6.8803$ and agreement
$2.10\%$. The order-composition heatmaps in panels (f) further show that the
magnitude-selected sets are enriched in mid- and high-order terms, consistent
with the spectra in Appendix~\ref{app:interaction-order-spectrum}. These
results reflect cancellation within the anchored M\"obius expansion and
should not be interpreted as an executable high-order path.

\paragraph{Conclusion.}
The robust sparsity result is concentration of squared M\"obius coefficient
mass. ResNet-18 additionally permits moderately compressed prediction
reconstruction, whereas ResNet-34 requires nearly the full interaction
dictionary for small logit error under magnitude truncation. The experiment
therefore supports input-specific contribution concentration, not sparse
execution or uniformly sparse decision reconstruction. The next experiment
examines whether the identities of the high-magnitude interactions vary
systematically across inputs.


\clearpage
\section{Input-Dependent Residual Expert Sets}
\label{app:input-dependent-expert-sets}

This section tests whether the high-magnitude residual interactions identified
for one input are also selected for other inputs. Across both ResNet-18 and
ResNet-34, vector-ranked interaction sets contain a large shared global core,
but same-class images consistently exhibit greater overlap than
different-class images. Decision-conditioned scalar sets show substantially
stronger class alignment, whereas confidence has little effect on set
similarity. The resulting picture is therefore not one of disjoint routes,
but of class-structured, input-dependent variation around a broadly reused
interaction set.

\subsection{Motivation}

Coefficient concentration alone does not establish input-dependent routing. A
fixed set of large interactions could explain every input and still yield a
compressible representation. A routing-like interpretation additionally
requires the identities of the selected interactions to vary across inputs in
a structured manner. We therefore ask whether input-specific magnitude
Top-$K$ sets differ across images, whether same-class images share more terms
than different-class images, and whether any class gap survives random-set
and label-shuffle controls. Because the underlying ResNets remain densely
executed, this is a test of input-dependent functional contribution rather
than conditional execution.

\subsection{Experimental Setup and Evaluation Metrics}

\paragraph{Models, Data, and Interaction Rankings.}
We use torchvision ResNet-18 and ResNet-34 checkpoints with
IMAGENET1K\_V1 weights. Both models are evaluated on the same class-balanced
ImageNet-1K validation subset containing ten images from each of the
$1{,}000$ classes ($N=10{,}000$). The all-open Top-1 accuracies are $69.94\%$
for ResNet-18 and $73.55\%$ for ResNet-34, providing a check that images,
labels, and pretrained weights are aligned. As in the preceding experiments,
we enumerate all binary residual masks and compute the exact M\"obius
coefficients. ResNet-18 has $L=8$ residual branches and $M=255$ non-empty
interactions; ResNet-34 has $L=16$ and $M=65{,}535$.

For each image $x$, vector interactions are ranked by
$a_S^h(x)=\|\Delta_Sh_x\|_2$, while scalar interactions are ranked by
$a_S^v(x)=|\Delta_Sv_x|$. Here $v_x$ is the logit of the class predicted by
the all-open model, with that class fixed across masks. The corresponding
per-input sets are
\begin{equation}
\mathcal A_K^r(x)
=
\operatorname*{TopK}_{S\ne\emptyset} a_S^r(x),
\qquad r\in\{h,v\}.
\label{eq:app-exp4-topk-set}
\end{equation}
The vector ranking is label-free and uses a common logit-space norm. The
scalar ranking is decision-conditioned because its output coordinate depends
on the image's full-model prediction; we therefore treat scalar results as a
complementary diagnostic rather than an independent label-free routing test.

ResNet-18 uses
$K\in\{3,8,13,26,32,64\}$, corresponding approximately to retained
fractions $\{1,3,5,10,13,25\}\%$. ResNet-34 uses
$K\in\{32,656,1967,3277,6554,8520,16384\}$, corresponding to
$\{0.05,1,3,5,10,13,25\}\%$. The fixed $K=32$ setting is used for the
nearest-neighbor diagnostic in both models. This is a comparison at the same
absolute budget, not at the same retained fraction: $K=32$ corresponds to
$12.55\%$ of the ResNet-18 dictionary but only $0.0488\%$ of the ResNet-34
dictionary. For the cross-model global-set histograms, we instead match the
retained fraction at approximately $13\%$, using $K=32$ for ResNet-18 and
$K=8{,}520$ for ResNet-34.

\paragraph{Binary and Weighted Signatures.}
The binary signature records the identities of the selected terms,
\begin{equation}
z_{K,S}^r(x)
=
\mathbf 1\!\left[S\in\mathcal A_K^r(x)\right].
\label{eq:app-exp4-binary-signature}
\end{equation}
For the vector ranking, we additionally define the normalized squared
coefficient mass
$p_S^h(x)=a_S^h(x)^2/\sum_{T\ne\emptyset}a_T^h(x)^2$ and the truncated
weighted signature
$w_{K,S}^h(x)=p_S^h(x)\mathbf 1[S\in\mathcal A_K^h(x)]$. Thus, the weighted
signature preserves the relative magnitude of selected vector interactions
without renormalizing away the omitted mass.

\paragraph{Pairwise Overlap and Class Structure.}
For binary sets, pair similarity is the Jaccard index
\begin{equation}
J_K^r(x_i,x_j)
=
\frac{|\mathcal A_K^r(x_i)\cap\mathcal A_K^r(x_j)|}
{|\mathcal A_K^r(x_i)\cup\mathcal A_K^r(x_j)|}.
\label{eq:app-exp4-jaccard}
\end{equation}
For vector-weighted signatures, we use
$J_K^w(x_i,x_j)=\sum_S\min\{w_{i,S},w_{j,S}\}/
(\sum_S\max\{w_{i,S},w_{j,S}\}+\epsilon)$. The primary class statistics
are
\begin{equation}
\Delta_{\mathrm{class}}^r(K)
=
\mathbb E[J_K^r\mid y_i=y_j]
-
\mathbb E[J_K^r\mid y_i\ne y_j],
\qquad
\mathrm{Lift}_{\mathrm{class}}^r(K)
=
\frac{\mathbb E[J_K^r\mid y_i=y_j]}
{\mathbb E[J_K^r\mid y_i\ne y_j]+\epsilon}.
\label{eq:app-exp4-class-gap}
\end{equation}
All $45{,}000$ within-class pairs are used, and an equal number of
different-class pairs are sampled. The same pair indices are reused across
$K$ whenever applicable.

\paragraph{Controls and Auxiliary Diagnostics.}
A uniform random-$K$ baseline, estimated from $2{,}000$ simulated pairs at
each $K$, measures overlap expected solely from set size.
To test whether vector sets reduce to a fixed global subset, we define
\begin{equation}
\mathcal G_K
=
\operatorname*{TopK}_{S\ne\emptyset}
\mathbb E_x[p_S^h(x)].
\label{eq:app-exp4-global-set}
\end{equation}
and report $J(\mathcal A_K^h(x),\mathcal G_K)$ for each image. The global set
is estimated and evaluated on the same $10{,}000$ images, so this overlap is
an in-sample descriptive statistic. A seeded label-shuffle control recomputes
the same-class and different-class gap after permuting the ground-truth
labels. It provides one fixed null realization at each $K$, not a permutation
test distribution. The source analysis also uses $200$ ordinary pair-level
bootstrap resamples. Because pairs share images, we treat these uncertainties
as descriptive and omit them from the tables below.

We also perform leave-one-out 1-nearest-neighbor classification using binary
Jaccard similarity at $K=32$ and compare its accuracy with the empirical
class-prior reference of $0.10\%$; the exact self-excluded random-neighbor
probability on this balanced sample is $9/9{,}999\approx0.09\%$. Binary
signatures can produce tied nearest neighbors, and the stored implementation
resolves ties by the first maximum. We therefore use 1-NN accuracy as a
diagnostic rather than as a standalone statistical test. Finally, images are
divided into upper and lower
terciles of all-open softmax confidence. We compare the Jaccard similarity of
random within-tercile pairs to test whether low-confidence images use more
idiosyncratic interaction sets.

\begin{figure*}[t]
\centering
\includegraphics[width=\textwidth,trim=7.99bp 16.48bp 7.85bp 11.77bp,clip]
{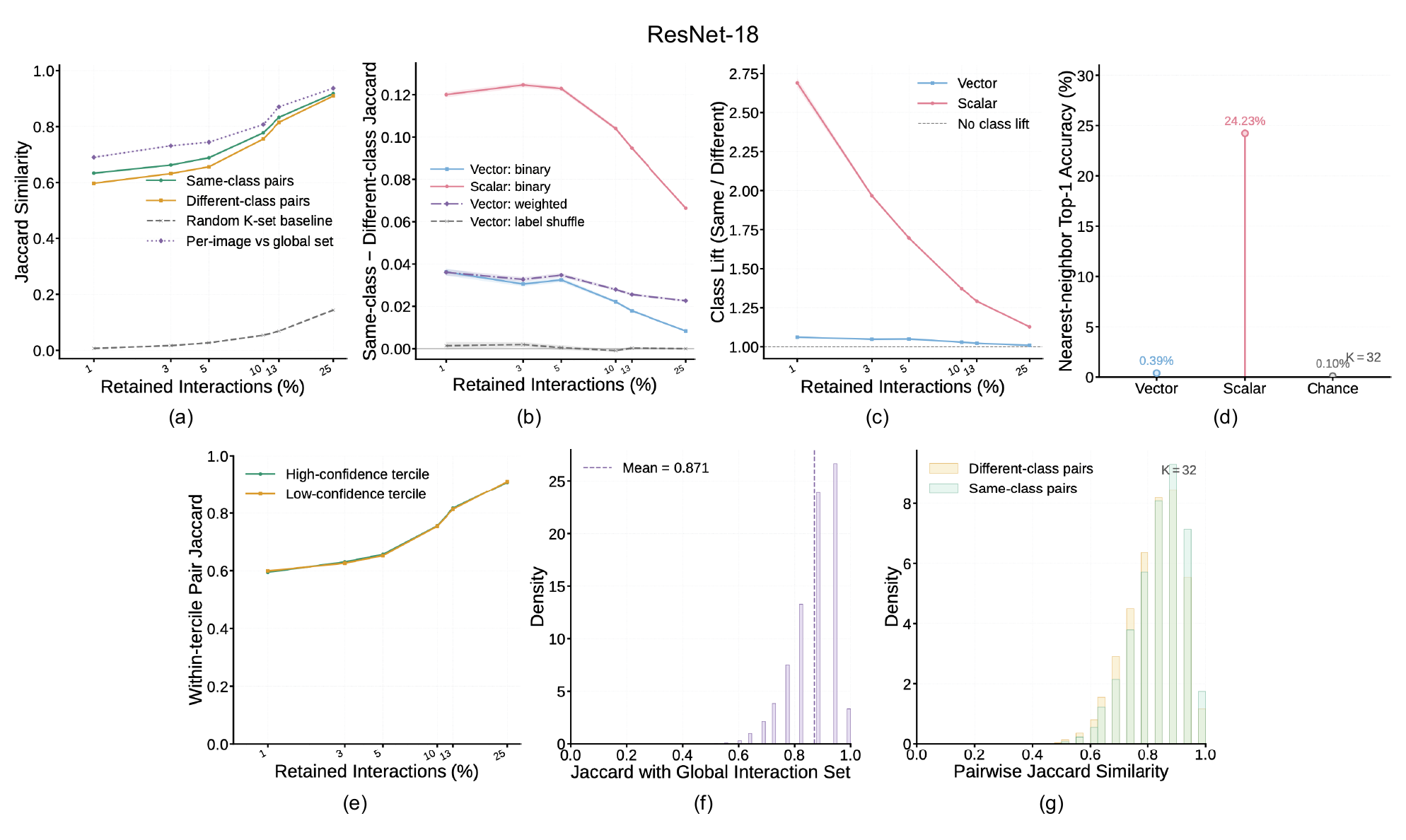}
\caption{Input dependence and class structure of ResNet-18
interaction sets. Results use $N=10{,}000$ ImageNet-1K validation images and
$M=255$ non-empty interactions. (a) Vector-set Jaccard similarity for
same-class and different-class pairs, a uniform random-$K$ baseline, and
per-image overlap with the global vector set. (b) Same-class minus
different-class gaps for vector binary, vector weighted, and
decision-conditioned scalar signatures, together with the vector
label-shuffle control. (c) Same-class/different-class overlap lift for vector
and scalar signatures. (d) Leave-one-out 1-nearest-neighbor class accuracy at
$K=32$, compared with the empirical $0.10\%$ chance level. (e) Similarity of
random pairs drawn within the high- and low-confidence terciles. (f)
Distribution of per-image overlap with the global vector set at $K=32$
($12.55\%$ retention). (g) Same-class and different-class pairwise Jaccard
distributions at $K=32$. The vector sets share a strong global core but retain
a small class-dependent component, while the decision-conditioned scalar
sets show substantially stronger class alignment.}
\label{fig:app-exp4-r18}
\end{figure*}

\begin{figure*}[t]
\centering
\includegraphics[width=\textwidth]{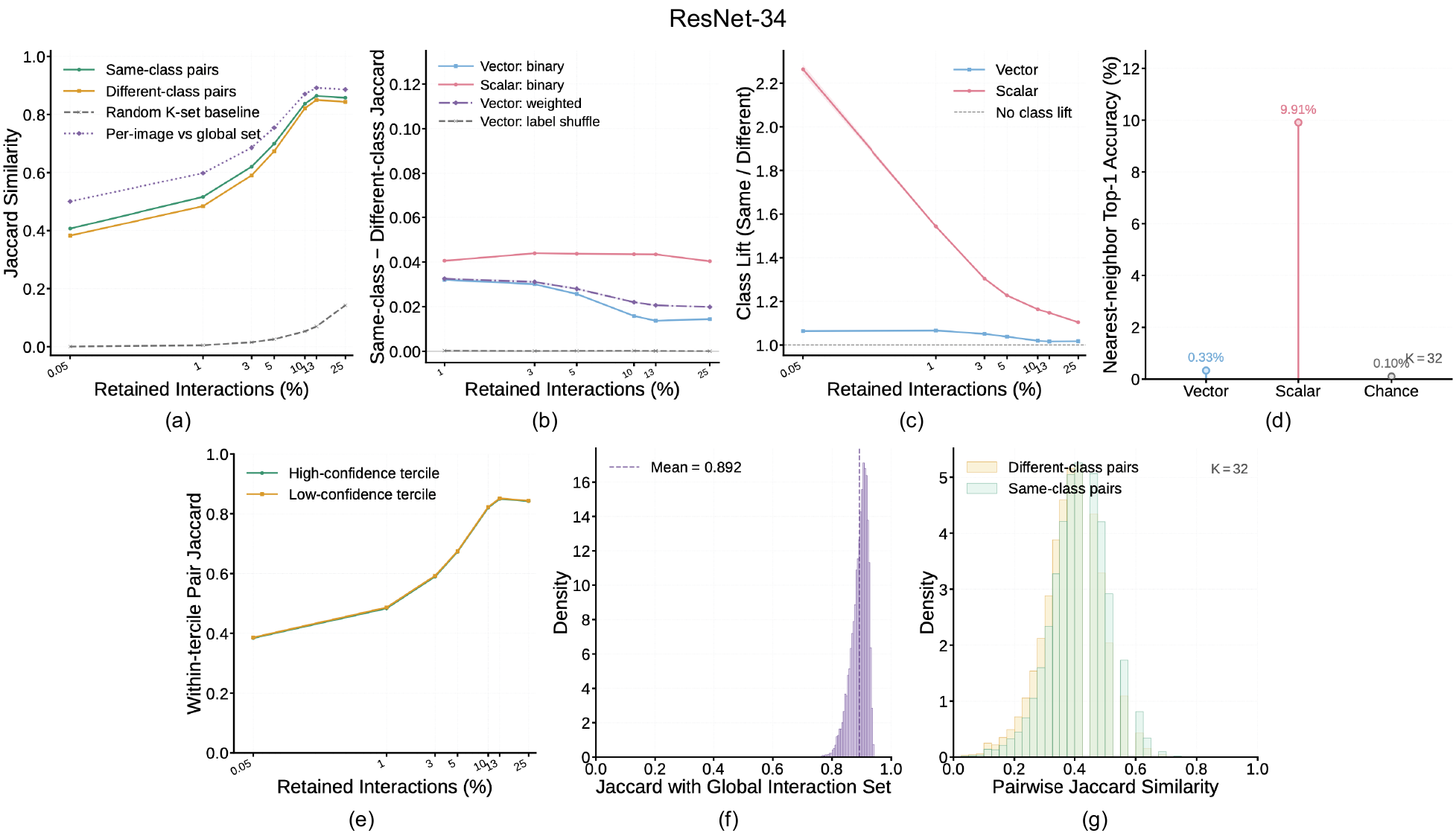}
\caption{Input dependence and class structure of ResNet-34
interaction sets. Results use the same $N=10{,}000$ class-balanced
ImageNet-1K validation images and $M=65{,}535$ non-empty interactions.
Panels (a), (c), and (e) include retained fractions from $0.05\%$ to $25\%$;
the fraction-matched comparison in panel (b) begins at $1\%$. Panel (f)
reports per-image overlap with the global
vector set at $K=8{,}520$ ($13.0\%$ retention), matching the retained fraction
used for ResNet-18 rather than its absolute $K$. Panel (g) shows the
same-class and different-class Jaccard distributions at the fixed $K=32$
used by the nearest-neighbor diagnostic. The positive class gaps replicate
the ResNet-18 trend, but high global overlap again indicates class-structured
variation around a shared interaction core rather than disjoint routes.}
\label{fig:app-exp4-r34}
\end{figure*}

\begin{table*}[t]
\centering
\caption{Vector interaction-set overlap. Entries are dataset means.
$J_{\mathrm{same}}$ and $J_{\mathrm{diff}}$ use ground-truth ImageNet classes;
$J_{\mathrm{rand}}$ is the uniform random-$K$ baseline; and
$J_{\mathrm{global}}$ is the mean per-image overlap with
$\mathcal G_K$. Uncertainty columns are omitted for readability.}
\label{tab:app-exp4-vector-overlap}
\scriptsize
\setlength{\tabcolsep}{3.7pt}
\begin{tabular}{@{}lrrrrrrr@{}}
\toprule[1.5pt]
Retained & $K$ & $J_{\mathrm{same}}^h$ & $J_{\mathrm{diff}}^h$
& $\Delta_{\mathrm{class}}^h$ & Lift$_h$ & $J_{\mathrm{rand}}$
& $J_{\mathrm{global}}$\\
\midrule
\multicolumn{8}{l}{\textit{ResNet-18 ($M=255$)}}\\
1.18\%  & 3  & 0.6334 & 0.5972 & 0.0362 & 1.061 & 0.0069 & 0.6900\\
3.14\%  & 8  & 0.6623 & 0.6318 & 0.0305 & 1.048 & 0.0167 & 0.7312\\
5.10\%  & 13 & 0.6885 & 0.6560 & 0.0325 & 1.050 & 0.0268 & 0.7443\\
10.20\% & 26 & 0.7778 & 0.7557 & 0.0222 & 1.029 & 0.0537 & 0.8072\\
12.55\% & 32 & 0.8329 & 0.8150 & 0.0179 & 1.022 & 0.0683 & 0.8705\\
25.10\% & 64 & 0.9179 & 0.9096 & 0.0083 & 1.009 & 0.1438 & 0.9371\\
\midrule
\multicolumn{8}{l}{\textit{ResNet-34 ($M=65{,}535$)}}\\
0.05\% & 32     & 0.4068 & 0.3825 & 0.0243 & 1.064 & 0.0003 & 0.5004\\
1.00\% & 656    & 0.5162 & 0.4842 & 0.0321 & 1.066 & 0.0050 & 0.5980\\
3.00\% & 1,967  & 0.6205 & 0.5904 & 0.0301 & 1.051 & 0.0152 & 0.6857\\
5.00\% & 3,277  & 0.6991 & 0.6734 & 0.0257 & 1.038 & 0.0257 & 0.7546\\
10.00\% & 6,554  & 0.8371 & 0.8213 & 0.0158 & 1.019 & 0.0527 & 0.8706\\
13.00\% & 8,520  & 0.8639 & 0.8502 & 0.0137 & 1.016 & 0.0695 & 0.8921\\
25.00\% & 16,384 & 0.8576 & 0.8431 & 0.0145 & 1.017 & 0.1429 & 0.8862\\
\bottomrule[1.5pt]
\end{tabular}
\end{table*}

\begin{table*}[t]
\centering
\caption{Class-alignment gaps across signatures and controls.
$\Delta_h$ is binary vector Jaccard, $\Delta_w$ is weighted vector Jaccard,
$\Delta_v$ is binary predicted-class scalar Jaccard, and
$\Delta_{\mathrm{shuf}}$ is the vector gap after label permutation. Scalar
results are decision-conditioned because the selected output coordinate can
differ across images. Values are means.}
\label{tab:app-exp4-class-gaps}
\scriptsize
\setlength{\tabcolsep}{4.1pt}
\begin{tabular}{@{}lrrrrrrr@{}}
\toprule[1.5pt]
Retained & $K$ & $\Delta_h$ & $\Delta_w$
& $J_{\mathrm{same}}^v$ & $J_{\mathrm{diff}}^v$ & $\Delta_v$
& $\Delta_{\mathrm{shuf}}$\\
\midrule
\multicolumn{8}{l}{\textit{ResNet-18}}\\
1.18\%  & 3  & 0.0362 & 0.0361 & 0.1911 & 0.0710 & 0.1200 &  0.0014\\
3.14\%  & 8  & 0.0305 & 0.0327 & 0.2535 & 0.1289 & 0.1246 &  0.0019\\
5.10\%  & 13 & 0.0325 & 0.0347 & 0.2994 & 0.1765 & 0.1229 &  0.0005\\
10.20\% & 26 & 0.0222 & 0.0279 & 0.3844 & 0.2804 & 0.1040 & -0.0009\\
12.55\% & 32 & 0.0179 & 0.0256 & 0.4195 & 0.3248 & 0.0947 &  0.0003\\
25.10\% & 64 & 0.0083 & 0.0226 & 0.5896 & 0.5232 & 0.0664 &  0.0000\\
\midrule
\multicolumn{8}{l}{\textit{ResNet-34}}\\
0.05\% & 32     & 0.0243 & 0.0257 & 0.0369 & 0.0163 & 0.0206 & 0.0016\\
1.00\% & 656    & 0.0321 & 0.0325 & 0.1152 & 0.0747 & 0.0406 & 0.0003\\
3.00\% & 1,967  & 0.0301 & 0.0311 & 0.1883 & 0.1444 & 0.0439 & 0.0002\\
5.00\% & 3,277  & 0.0257 & 0.0280 & 0.2364 & 0.1927 & 0.0437 & 0.0002\\
10.00\% & 6,554  & 0.0158 & 0.0220 & 0.3093 & 0.2658 & 0.0435 & 0.0003\\
13.00\% & 8,520  & 0.0137 & 0.0206 & 0.3383 & 0.2949 & 0.0434 & 0.0002\\
25.00\% & 16,384 & 0.0145 & 0.0199 & 0.4285 & 0.3881 & 0.0403 & 0.0001\\
\bottomrule[1.5pt]
\end{tabular}
\end{table*}

\begin{table}[t]
\centering
\caption{Signature diagnostics. Leave-one-out 1-NN uses binary
Jaccard at the fixed absolute budget $K=32$; this is not a fraction-matched
comparison across architectures, and deterministic tie handling makes the
values diagnostic. The final column is the largest absolute
confidence-tercile gap over the evaluated $K$ grid.}
\label{tab:app-exp4-diagnostics}
\small
\setlength{\tabcolsep}{3.7pt}
\begin{tabular}{@{}lrrrr@{}}
\toprule[1.5pt]
Model & Vector 1-NN & Scalar 1-NN & Chance & Max conf. gap\\
\midrule
ResNet-18 & 0.0039 & 0.2423 & 0.0010 & 0.0045\\
ResNet-34 & 0.0033 & 0.0991 & 0.0010 & 0.0031\\
\bottomrule[1.5pt]
\end{tabular}
\end{table}

\subsection{Results and Analysis}
\label{app:input-dependent-expert-results}

\paragraph{Vector interaction sets combine global reuse with class structure.}
As shown in Figures~\ref{fig:app-exp4-r18}(a)--(c) and
\ref{fig:app-exp4-r34}(a)--(c) and quantified in
Table~\ref{tab:app-exp4-vector-overlap}, same-class overlap exceeds
different-class overlap at every evaluated budget. For ResNet-18, the binary
vector gap is $0.0362$ at $K=3$ and decreases to $0.0083$ at $K=64$ as both
pair types approach saturation. ResNet-34 reproduces the pattern: the gap
peaks at $0.0321$ at $1\%$ retention and remains positive through $25\%$.
Weighted vector gaps are similarly positive
(Table~\ref{tab:app-exp4-class-gaps}), while the shuffled-label gaps remain
within $0.0019$ for ResNet-18 and $0.0016$ for ResNet-34. Thus, the class
signal is small in absolute magnitude but systematic and not explained by set
size alone.

At the same time, the vector sets are close to a dataset-level global set. At
approximately $13\%$ retention, mean per-image global overlap is $0.8705$ for
ResNet-18 and $0.8921$ for ResNet-34, as shown in panel (f) of each figure
and in Table~\ref{tab:app-exp4-vector-overlap}. Same-class and different-class pairs
also have high absolute overlap at these budgets. The label-free vector
analysis therefore supports a shared interaction core with input-specific,
class-aligned variation, rather than routing to nearly disjoint interaction
sets.

\paragraph{Decision-conditioned scalar signatures show stronger class
alignment.}
The scalar gaps in Figure~\ref{fig:app-exp4-r18}(b) and
Figure~\ref{fig:app-exp4-r34}(b) are larger than their vector counterparts over
most of the evaluated range. In ResNet-18, $\Delta_v$ is approximately
$0.12$ for the smallest three budgets and remains $0.0664$ at $25\%$
retention. In ResNet-34, it reaches $0.0439$ at $3\%$ retention and remains
near $0.04$ thereafter. Consistently, the scalar binary signatures attain
$24.23\%$ and $9.91\%$ leave-one-out 1-NN accuracy for ResNet-18 and
ResNet-34, compared with $0.10\%$ chance; vector signatures reach only
$0.39\%$ and $0.33\%$ (Table~\ref{tab:app-exp4-diagnostics}). These results
show that predicted-class interaction identities contain substantial class
information. Because the scalar coordinate is selected separately for each
image, and because the $K=32$ comparison is not fraction matched, however,
these values should not be read as a direct architectural ranking. The scalar
evidence is decision-conditioned and should not by itself be interpreted as a
label-free routing mechanism.

\paragraph{Set similarity is nearly independent of confidence.}
Panel (e) of each figure and Table~\ref{tab:app-exp4-diagnostics} show almost identical
within-tercile overlaps for high- and low-confidence images. Across the full
$K$ grid, the largest absolute difference is $0.0045$ for ResNet-18 and
$0.0031$ for ResNet-34, with no consistent direction in ResNet-18. We
therefore observe no material association between confidence tercile and how
broadly the vector-ranked interaction set is shared. This pattern is
consistent with the separate difficulty analysis: the identities of prominent
interactions vary with class more clearly than the amount of interaction
structure varies with sample difficulty.

\paragraph{Conclusion.}
Both checkpoints exhibit consistent class structure in their post-hoc
interaction sets, but the dominant vector interactions are also strongly
reused across inputs. The most defensible routing-like interpretation is
therefore a limited one: inputs modulate a class-aligned component around a
shared functional core, without an explicit router or sparse execution.
Decision-conditioned scalar signatures expose stronger class information,
while the vector results provide the more conservative label-free evidence.

\clearpage
\section{Sample Difficulty and Residual Interaction Complexity}
\label{app:difficulty-interaction-complexity}

This section examines whether difficult inputs induce a more complex residual
interaction profile. Across the studied ResNet-18 and ResNet-34 checkpoints,
sample difficulty is only weakly associated with interaction breadth,
interaction order, or magnitude-based reconstruction budgets. The hardest
loss quartile exhibits a small increase in effective interaction count, most
visibly for the decision-conditioned scalar readout of ResNet-18, but the
effective order and low-order coefficient mass remain nearly unchanged. The
same conclusion holds when misclassified and correctly classified images are
compared. Thus, input dependence in the identities of selected interactions
does not translate into substantial difficulty-dependent scaling of the
amount or order of functional interaction.

\subsection{Motivation}

The preceding experiment shows that the identities of high-magnitude
interactions vary across inputs and contain class structure. That result does
not establish that harder inputs use a broader or higher-order set of
interactions. A difficulty-adaptive interpretation would predict that hard
images recruit more interaction terms, shift coefficient mass toward higher
orders, or require a larger magnitude-ranked set to reconstruct the logits.
We test these possibilities by relating per-image difficulty to three
complementary aspects of the residual-interaction decomposition: breadth,
interaction order, and magnitude-based compressibility. As throughout the
paper, these quantities describe post-hoc functional contributions rather
than changes in the densely executed computation graph.

\subsection{Experimental Setup and Evaluation Metrics}

\paragraph{Models, Data, and Interaction Strengths.}
We evaluate torchvision ResNet-18 and ResNet-34 checkpoints with
IMAGENET1K\_V1 weights on the same class-balanced ImageNet-1K validation
subset used in Appendix~\ref{app:input-dependent-expert-sets}. The subset
contains ten images from each of $1{,}000$ classes ($N=10{,}000$). The
all-open Top-1 accuracies are $69.94\%$ for ResNet-18 and $73.55\%$ for
ResNet-34. ResNet-18 contains $L=8$ gateable residual branches and
$M=2^8-1=255$ non-empty interaction terms, whereas ResNet-34 contains $L=16$
branches and $M=2^{16}-1=65{,}535$ terms. For each image, exhaustive
residual-mask evaluation and M\"obius inversion yield scalar and vector
magnitudes
$a_S^v(x)=|\Delta_Sv_x|$ and
$a_S^h(x)=\|\Delta_Sh_x\|_2$, respectively. The scalar readout uses the class
predicted by the all-open model, fixed across all masks for that image.

\paragraph{Difficulty Measures.}
Let $h_x^{\mathrm{full}}$ be the all-open logits, $y$ the ground-truth class,
and $\hat y=\arg\max_c h_{x,c}^{\mathrm{full}}$. We use three label-aware
difficulty measures: negative log-likelihood
$d_{\mathrm{NLL}}=-\log\operatorname{softmax}(h_x^{\mathrm{full}})_y$,
the binary error indicator $d_{\mathrm{wrong}}=\mathbf 1[\hat y\ne y]$, and
the negative true-class logit margin
$d_{\mathrm{true}}=-(h_{x,y}^{\mathrm{full}}-
\max_{c\ne y}h_{x,c}^{\mathrm{full}})$. We additionally use two label-free
measures: negative confidence
$d_{\mathrm{conf}}=-\max_c\operatorname{softmax}(h_x^{\mathrm{full}})_c$
and the negative predicted-class logit margin
$d_{\mathrm{pred}}=-(h_{x,\hat y}^{\mathrm{full}}-
\max_{c\ne\hat y}h_{x,c}^{\mathrm{full}})$. All five scores are oriented so
that larger values indicate harder inputs.

\paragraph{Breadth and Interaction Order.}
For either readout $r\in\{v,h\}$, define
$q_S^r=a_S^r/\sum_Ta_T^r$ and
$p_S^r=(a_S^r)^2/\sum_T(a_T^r)^2$, where all sums are over non-empty
interactions. We measure breadth using the participation ratio and entropy
effective count,
\begin{equation}
N_{\mathrm{eff}}^r(x)
=\frac{1}{\sum_S(q_S^r(x))^2},
\qquad
N_{\mathrm{ent}}^r(x)
=\exp\!\left(-\sum_Sp_S^r(x)\log p_S^r(x)\right).
\label{eq:app-exp5-breadth}
\end{equation}
We summarize interaction order using the magnitude-weighted and
squared-magnitude-weighted averages
\begin{equation}
\bar k^r(x)=\sum_S|S|q_S^r(x),
\qquad
\kappa^r(x)=\sum_S|S|p_S^r(x),
\label{eq:app-exp5-order}
\end{equation}
and the low-order squared coefficient mass
$C_{\le3}^r(x)=\sum_{|S|\le3}p_S^r(x)$. The complementary high-order mass is
$T_{>3}^r(x)=1-C_{\le3}^r(x)$, so only one of the two is needed in the
tables.

\paragraph{Magnitude-based Reconstruction Budgets.}
For the vector readout, we also use the cumulative magnitude Top-$K$
reconstructions from Appendix~\ref{app:topk-sparse-reconstruction}. We define
$K_{\tau}^{\mathrm{err}}(x)$ as the first $K$ for which the relative logit
error is at most $\tau\in\{0.10,0.05\}$, and
$K_{\eta}^{\mathrm{mass}}(x)$ as the first $K$ that captures at least
$\eta\in\{0.90,0.95,0.99\}$ of the squared coefficient mass. These are
post-hoc magnitude-ranking budgets, not optimal subset sizes.

\paragraph{Evaluation Protocol.}
We recompute tie-correct Spearman correlations directly from the stored
per-image quantities using average ranks for tied values; these numbers match
the result-visualization figures. Because binary error is more naturally interpreted
as a group comparison, we additionally report the mean difference between
misclassified and correctly classified images. For NLL, the easiest and
hardest groups are the bottom and top quartiles, each containing $2{,}500$
images, and the hard--easy gap is the difference of their means. Finally, we
fit the class-fixed-effect model
$c(x)=\beta_0+\beta_1d_{\mathrm{NLL}}(x)+\gamma_{y(x)}+\varepsilon_x$
by demeaning difficulty and complexity within each ground-truth class. The
reported $\beta_1$ values use the original units and are therefore intended
as within-metric diagnostics rather than cross-metric effect sizes. All table
entries are point estimates; standard deviations and standard errors are
omitted. Because difficulty and complexity are measured on the same model
outputs, these analyses are observational and do not identify a causal effect
of difficulty.

\begin{figure*}[t]
\centering
\includegraphics[width=\textwidth]{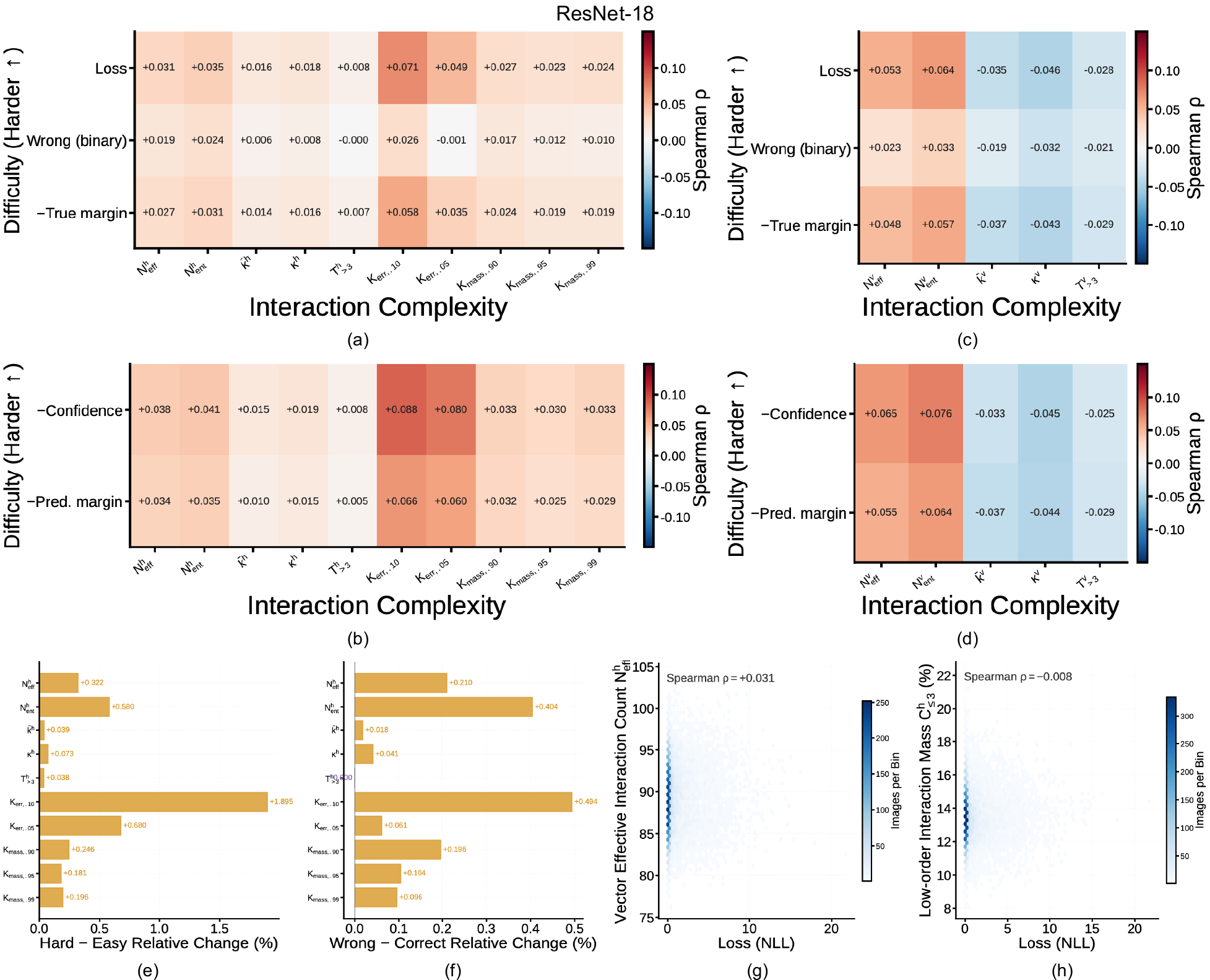}
\caption{Sample difficulty and residual interaction complexity for
ResNet-18. Results use $N=10{,}000$ ImageNet-1K validation images and
$M=255$ non-empty interactions. (a) Tie-correct Spearman correlations between
label-aware difficulty measures and vector-readout complexity. (b) The same
analysis for negative confidence and negative predicted-class margin, both
oriented so that larger values denote harder inputs. (c,d) Corresponding
results for the decision-conditioned scalar readout. (e) Relative hard--easy
change in the vector metrics, $100(\mu_{\mathrm{hard}}-\mu_{\mathrm{easy}})/
|\mu_{\mathrm{easy}}|$, between the top and bottom NLL quartiles. (f) Relative
change in the vector metrics between misclassified and correctly classified
images. Panels (e,f) display $T_{>3}^h$, whereas the tables report its
equivalent complement $C_{\le3}^h=1-T_{>3}^h$ to avoid redundant columns.
(g,h) NLL
versus the vector effective interaction count $N_{\mathrm{eff}}^h$ and
low-order mass $C_{\le3}^h$ (shown as a percentage), respectively.
Associations are uniformly weak;
the clearest change is a small broadening of the interaction-magnitude distribution,
not a shift toward higher interaction orders.}
\label{fig:app-exp5-r18}
\end{figure*}

\begin{figure*}[t]
\centering
\includegraphics[width=\textwidth]{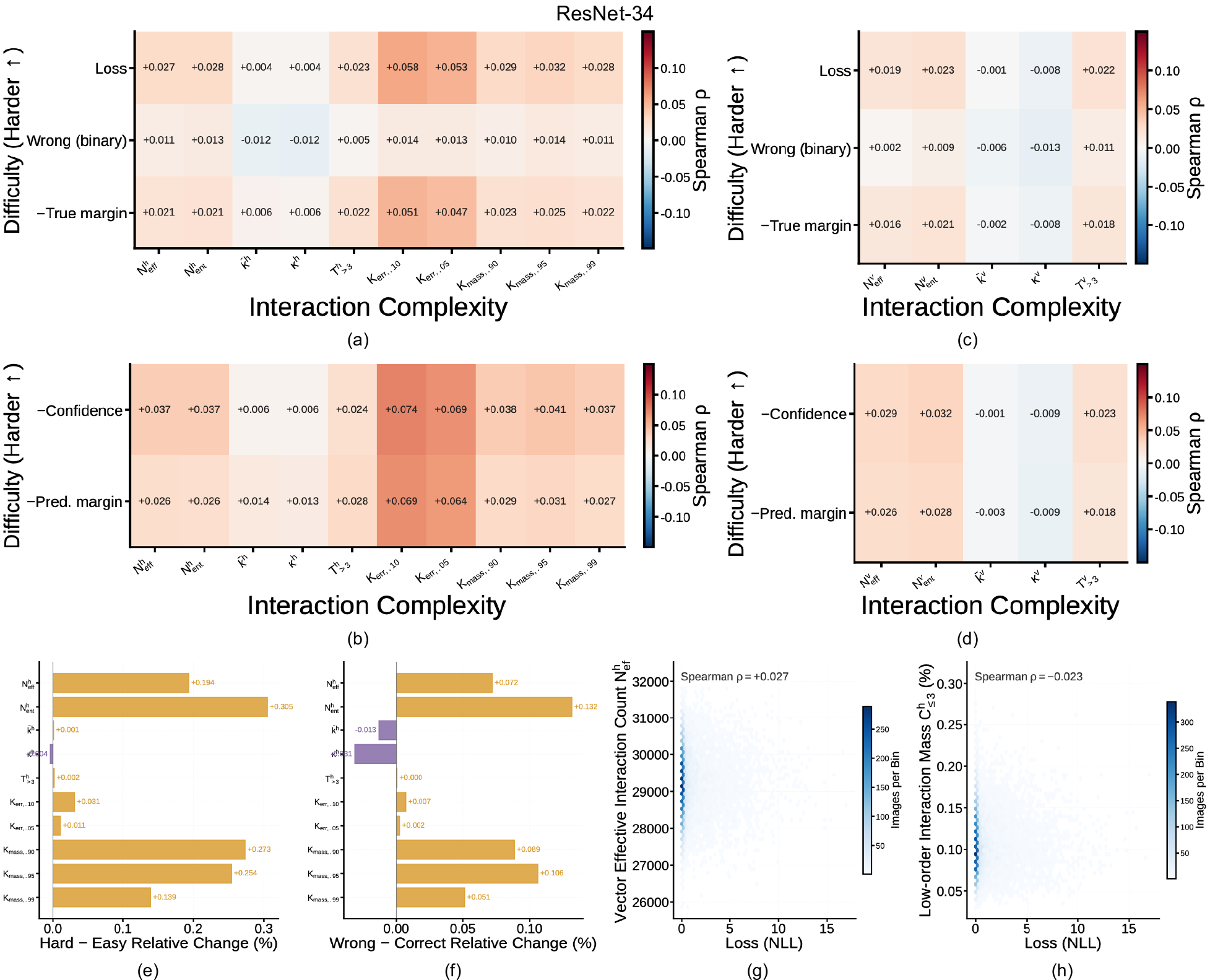}
\caption{Sample difficulty and residual interaction complexity for
ResNet-34. Results are computed over a class-balanced subset of
$N=10{,}000$ ImageNet-1K validation images and all $M=65{,}535$ non-empty
residual interactions. All difficulty scores are oriented so that larger
values indicate harder inputs. (a) Tie-correct Spearman correlations between
label-aware difficulty measures, including NLL, prediction error, and negative
true-class margin, and vector-readout measures of interaction breadth, order,
and Top-$K$ compressibility. (b) Corresponding correlations for the
label-free measures of negative confidence and negative predicted-class
margin. (c,d) Scalar-readout counterparts of (a,b), where the scalar
coordinate is the class predicted by the all-open model. (e) Relative changes
in vector-readout complexity between the highest- and lowest-NLL quartiles.
(f) Relative changes between misclassified and correctly classified images.
These comparisons cover effective interaction counts, interaction order,
high-order coefficient mass, and reconstruction budgets. (g,h) Binned
distributions of NLL against the vector effective interaction count
$N_{\mathrm{eff}}^h$ and low-order coefficient mass $C_{\le3}^h$,
respectively; color intensity denotes the number of images in each bin.
Across all analyses, the correlations and group differences remain small:
harder inputs neither recruit substantially broader interaction supports nor
shift coefficient mass toward higher orders, despite the large interaction
dictionary.}
\label{fig:app-exp5-r34}
\end{figure*}

\begin{table*}[t]
\centering
\caption{Tie-correct Spearman correlations for vector-readout
complexity. Every difficulty score is oriented so that larger values mean
harder inputs. $C_{\le3}^h$ is shown instead of the redundant
$T_{>3}^h=1-C_{\le3}^h$.}
\label{tab:app-exp5-vector-spearman}
\scriptsize
\setlength{\tabcolsep}{2.5pt}
\begin{tabular}{@{}lrrrrrrrrrr@{}}
\toprule[1.5pt]
Difficulty & $N_{\rm eff}^h$ & $N_{\rm ent}^h$ & $\bar k^h$
& $\kappa^h$ & $C_{\le3}^h$ & $K_{.10}^{\rm err}$
& $K_{.05}^{\rm err}$ & $K_{.90}^{\rm mass}$
& $K_{.95}^{\rm mass}$ & $K_{.99}^{\rm mass}$\\
\midrule
\multicolumn{11}{l}{\textit{ResNet-18}}\\
NLL                  & 0.0313 & 0.0352 & 0.0157 & 0.0182 & -0.0080 & 0.0711 & 0.0489 & 0.0267 & 0.0232 & 0.0238\\
Wrong                & 0.0186 & 0.0237 & 0.0063 & 0.0083 &  0.0002 & 0.0264 &-0.0009 & 0.0170 & 0.0116 & 0.0103\\
$-$True margin       & 0.0268 & 0.0306 & 0.0137 & 0.0162 & -0.0073 & 0.0575 & 0.0351 & 0.0237 & 0.0187 & 0.0187\\
$-$Confidence        & 0.0383 & 0.0407 & 0.0154 & 0.0191 & -0.0080 & 0.0883 & 0.0796 & 0.0334 & 0.0298 & 0.0335\\
$-$Predicted margin  & 0.0339 & 0.0355 & 0.0102 & 0.0145 & -0.0052 & 0.0665 & 0.0598 & 0.0315 & 0.0254 & 0.0290\\
\midrule
\multicolumn{11}{l}{\textit{ResNet-34}}\\
NLL                  & 0.0274 & 0.0280 & 0.0042 & 0.0042 & -0.0226 & 0.0580 & 0.0526 & 0.0288 & 0.0319 & 0.0280\\
Wrong                & 0.0110 & 0.0132 &-0.0115 &-0.0122 & -0.0045 & 0.0139 & 0.0133 & 0.0105 & 0.0141 & 0.0106\\
$-$True margin       & 0.0209 & 0.0212 & 0.0063 & 0.0059 & -0.0224 & 0.0513 & 0.0467 & 0.0225 & 0.0253 & 0.0217\\
$-$Confidence        & 0.0367 & 0.0370 & 0.0058 & 0.0057 & -0.0243 & 0.0744 & 0.0685 & 0.0384 & 0.0412 & 0.0372\\
$-$Predicted margin  & 0.0262 & 0.0255 & 0.0141 & 0.0128 & -0.0283 & 0.0693 & 0.0643 & 0.0289 & 0.0308 & 0.0273\\
\bottomrule[1.5pt]
\end{tabular}
\end{table*}

\begin{table*}[t]
\centering
\caption{Tie-correct Spearman correlations for scalar-readout
complexity. The scalar coordinate is the class predicted by the all-open
model and is fixed across masks for each image.}
\label{tab:app-exp5-scalar-spearman}
\small
\setlength{\tabcolsep}{5pt}
\begin{tabular}{@{}lrrrrr@{}}
\toprule[1.5pt]
Difficulty & $N_{\rm eff}^v$ & $N_{\rm ent}^v$ & $\bar k^v$
& $\kappa^v$ & $C_{\le3}^v$\\
\midrule
\multicolumn{6}{l}{\textit{ResNet-18}}\\
NLL                 & 0.0533 & 0.0637 & -0.0355 & -0.0456 &  0.0275\\
Wrong               & 0.0234 & 0.0333 & -0.0186 & -0.0320 &  0.0211\\
$-$True margin      & 0.0478 & 0.0571 & -0.0369 & -0.0435 &  0.0290\\
$-$Confidence       & 0.0654 & 0.0757 & -0.0326 & -0.0447 &  0.0250\\
$-$Predicted margin & 0.0550 & 0.0639 & -0.0365 & -0.0443 &  0.0286\\
\midrule
\multicolumn{6}{l}{\textit{ResNet-34}}\\
NLL                 & 0.0190 & 0.0234 & -0.0011 & -0.0084 & -0.0216\\
Wrong               & 0.0024 & 0.0090 & -0.0059 & -0.0129 & -0.0107\\
$-$True margin      & 0.0165 & 0.0205 & -0.0019 & -0.0080 & -0.0184\\
$-$Confidence       & 0.0290 & 0.0317 & -0.0013 & -0.0092 & -0.0225\\
$-$Predicted margin & 0.0264 & 0.0282 & -0.0025 & -0.0095 & -0.0184\\
\bottomrule[1.5pt]
\end{tabular}
\end{table*}

\begin{table*}[t]
\centering
\caption{Loss-quartile comparison. Easy and hard denote the bottom
and top NLL quartiles. Relative change is
$100(\mathrm{Hard}-\mathrm{Easy})/|\mathrm{Easy}|$. Values are dataset means;
no uncertainty columns are included. For $C_{\le3}$, the relative percentage
can appear comparatively large because the ResNet-34 baseline is close to
zero; the absolute gap should be used to assess its practical size.}
\label{tab:app-exp5-hard-easy}
\small
\setlength{\tabcolsep}{5pt}
\begin{tabular}{@{}lrrrr@{}}
\toprule[1.5pt]
Quantity & Easy & Hard & Hard $-$ Easy & Relative (\%)\\
\midrule
\multicolumn{5}{l}{\textit{ResNet-18}}\\
$N_{\rm eff}^h$        & 89.1949 & 89.4823 &  0.2874 &  0.322\\
$N_{\rm ent}^h$        & 62.7012 & 63.0648 &  0.3635 &  0.580\\
$\kappa^h$             &  4.9283 &  4.9319 &  0.0036 &  0.073\\
$C_{\le3}^h$           &  0.135677 & 0.135351 & -0.000326 & -0.241\\
$K_{.10}^{\rm err}$    & 148.7320 &151.5508 &  2.8188 &  1.895\\
$K_{.95}^{\rm mass}$   & 68.5012 & 68.6252 &  0.1240 &  0.181\\
$N_{\rm eff}^v$        & 63.4144 & 64.8123 &  1.3979 &  2.204\\
$N_{\rm ent}^v$        & 38.5587 & 39.9825 &  1.4238 &  3.693\\
$\kappa^v$             &  5.0352 &  4.9752 & -0.0600 & -1.191\\
$C_{\le3}^v$           &  0.136853 & 0.145008 & 0.008155 &  5.959\\
\midrule
\multicolumn{5}{l}{\textit{ResNet-34}}\\
$N_{\rm eff}^h$        & 29220.21 &29276.79 & 56.59 &  0.194\\
$N_{\rm ent}^h$        & 18305.46 &18361.31 & 55.85 &  0.305\\
$\kappa^h$             & 9.885070 &9.884682 & -0.000388 & -0.004\\
$C_{\le3}^h$           & 0.00112166 &0.00110484 & -0.00001682 & -1.500\\
$K_{.10}^{\rm err}$    & 65185.65 &65205.61 & 19.96 &  0.031\\
$K_{.95}^{\rm mass}$   & 25068.59 &25132.30 & 63.71 &  0.254\\
$N_{\rm eff}^v$        & 18856.96 &18932.03 & 75.07 &  0.398\\
$N_{\rm ent}^v$        & 9165.25 &9238.67 & 73.42 &  0.801\\
$\kappa^v$             & 9.888993 &9.882980 & -0.006012 & -0.061\\
$C_{\le3}^v$           & 0.00116970 &0.00115286 & -0.00001685 & -1.440\\
\bottomrule[1.5pt]
\end{tabular}
\end{table*}

\begin{table*}[t]
\centering
\caption{Correctness comparison. The ResNet-18 subset contains
$6{,}994$ correct and $3{,}006$ incorrect predictions; the ResNet-34 subset
contains $7{,}355$ correct and $2{,}645$ incorrect predictions. Relative
change is $100(\mathrm{Wrong}-\mathrm{Correct})/|\mathrm{Correct}|$.
Absolute gaps are the more informative scale for the small $C_{\le3}$
baseline.}
\label{tab:app-exp5-correctness}
\small
\setlength{\tabcolsep}{5pt}
\begin{tabular}{@{}lrrrr@{}}
\toprule[1.5pt]
Quantity & Correct & Wrong & Wrong $-$ Correct & Relative (\%)\\
\midrule
\multicolumn{5}{l}{\textit{ResNet-18}}\\
$N_{\rm eff}^h$        & 89.2743 & 89.4619 &  0.1876 &  0.210\\
$N_{\rm ent}^h$        & 62.7944 & 63.0482 &  0.2538 &  0.404\\
$\kappa^h$             &  4.9300 &  4.9320 &  0.0020 &  0.041\\
$C_{\le3}^h$           &  0.135411 &0.135413 & $2.22\!\times\!10^{-6}$ &  0.002\\
$K_{.10}^{\rm err}$    & 150.5648 &151.3084 &  0.7436 &  0.494\\
$K_{.95}^{\rm mass}$   & 68.5332 & 68.6045 &  0.0713 &  0.104\\
$N_{\rm eff}^v$        & 64.2505 & 64.6977 &  0.4472 &  0.696\\
$N_{\rm ent}^v$        & 39.3443 & 39.8978 &  0.5535 &  1.407\\
$\kappa^v$             &  5.0081 &  4.9755 & -0.0326 & -0.652\\
$C_{\le3}^v$           &  0.141030 &0.145338 & 0.004309 &  3.055\\
\midrule
\multicolumn{5}{l}{\textit{ResNet-34}}\\
$N_{\rm eff}^h$        & 29247.64 &29268.72 & 21.08 &  0.072\\
$N_{\rm ent}^h$        & 18328.32 &18352.43 & 24.11 &  0.132\\
$\kappa^h$             & 9.889496 &9.886413 & -0.003083 & -0.031\\
$C_{\le3}^h$           & 0.00110605 &0.00110169 & $-4.36\!\times\!10^{-6}$ & -0.394\\
$K_{.10}^{\rm err}$    & 65201.56 &65206.12 &  4.56 &  0.007\\
$K_{.95}^{\rm mass}$   & 25098.68 &25125.33 & 26.65 &  0.106\\
$N_{\rm eff}^v$        & 18919.93 &18933.02 & 13.09 &  0.069\\
$N_{\rm ent}^v$        & 9212.29 &9239.69 & 27.41 &  0.297\\
$\kappa^v$             & 9.889846 &9.882120 & -0.007726 & -0.078\\
$C_{\le3}^v$           & 0.00116570 &0.00115321 & $-1.249\!\times\!10^{-5}$ & -1.071\\
\bottomrule[1.5pt]
\end{tabular}
\end{table*}

\begin{table*}[t]
\centering
\caption{Class-controlled NLL slopes. Each entry is the raw
$\beta_1$ from a within-class fixed-effect regression. Because the quantities
have different units and scales, their signed magnitudes should be interpreted
within each column rather than compared directly across metrics or
architectures.}
\label{tab:app-exp5-class-controlled}
\small
\setlength{\tabcolsep}{5pt}
\begin{tabular}{@{}lrrrrrrr@{}}
\toprule[1.5pt]
Model & $N_{\rm eff}^h$ & $\kappa^h$ & $C_{\le3}^h$
& $K_{.10}^{\rm err}$ & $K_{.95}^{\rm mass}$
& $N_{\rm eff}^v$ & $\kappa^v$\\
\midrule
ResNet-18 & -0.0306 & -0.001003 & $9.652\!\times\!10^{-5}$
& 0.1311 & -0.0086 & 0.0913 & -0.007939\\
ResNet-34 & -5.7765 & -0.000242 & $2.762\!\times\!10^{-7}$
& 0.8280 & -5.1910 & 0.9821 & 0.000174\\
\bottomrule[1.5pt]
\end{tabular}
\end{table*}

\subsection{Results and Analysis}
\label{app:difficulty-interaction-complexity-results}

\paragraph{Difficulty has little rank association with interaction
complexity.}
Figures~\ref{fig:app-exp5-r18}(a)--(d) and
\ref{fig:app-exp5-r34}(a)--(d), together with
Tables~\ref{tab:app-exp5-vector-spearman} and
\ref{tab:app-exp5-scalar-spearman}, show that all associations are weak. For
ResNet-18, the largest absolute vector correlation is $0.0883$, between
negative confidence and $K_{.10}^{\mathrm{err}}$, while the largest scalar
correlation is $0.0757$, between negative confidence and
$N_{\mathrm{ent}}^v$. The corresponding maxima for ResNet-34 are $0.0744$
and $0.0317$. More importantly, the effective-order correlations are close to
zero: across the continuous difficulty measures, $|\rho(\kappa^h)|\le0.0191$
for ResNet-18 and $\le0.0128$ for ResNet-34. Difficulty therefore has little
monotonic association with either breadth or interaction order.

\paragraph{Hard images exhibit only small and model-dependent broadening.}
The loss-quartile comparison in Figure~\ref{fig:app-exp5-r18}(e),
Figure~\ref{fig:app-exp5-r34}(e), and
Table~\ref{tab:app-exp5-hard-easy} reveals a small increase in breadth. In
ResNet-18, $N_{\mathrm{eff}}^h$ increases from $89.1949$ to $89.4823$
($0.322\%$), while $K_{.10}^{\mathrm{err}}$ increases from $148.7320$ to
$151.5508$ ($1.895\%$). The decision-conditioned scalar breadth changes more,
with $N_{\mathrm{eff}}^v$ increasing by $2.204\%$. The same directions appear
in ResNet-34 but are smaller: the relative increases are $0.194\%$,
$0.031\%$, and $0.398\%$, respectively. Moreover, the within-class slopes in
Table~\ref{tab:app-exp5-class-controlled} are mixed: the vector effective
count has a small negative slope in both models, whereas the scalar effective
count has a small positive slope. The observed broadening is therefore weak
and not a readout-independent difficulty effect.

\paragraph{Interaction order remains essentially fixed.}
Neither architecture shifts toward higher-order interactions on hard images.
For ResNet-18, $\kappa^h$ changes by only $0.0036$ between the easiest and
hardest quartiles, and $C_{\le3}^h$ changes by $-0.000326$. For ResNet-34,
the corresponding changes are $-0.000388$ and
$-1.68\times10^{-5}$. The scalar effective order decreases slightly rather
than increasing, by $0.0600$ for ResNet-18 and $0.0060$ for ResNet-34. These
values, together with the near-zero correlations and class-controlled slopes,
provide no evidence that difficult inputs recruit systematically higher-order
residual interactions.

\paragraph{Misclassification yields the same qualitative conclusion.}
The wrong--correct comparisons in Figures~\ref{fig:app-exp5-r18}(f) and
\ref{fig:app-exp5-r34}(f), together with
Table~\ref{tab:app-exp5-correctness}, are also small. Misclassified ResNet-18
images increase $N_{\mathrm{eff}}^h$ by $0.1876$ and
$K_{.10}^{\mathrm{err}}$ by $0.7436$, while $\kappa^h$ changes by only
$0.0020$. For ResNet-34, the corresponding relative changes are $0.072\%$,
$0.007\%$, and $-0.031\%$. Thus, prediction failure is not accompanied by a
meaningful expansion of the magnitude-ranked reconstruction support or a
shift in interaction order.

\paragraph{Conclusion.}
For the studied checkpoints, residual-interaction complexity is approximately
stable across the difficulty spectrum. Hard images sometimes exhibit a small
increase in breadth or reconstruction budget, especially under the
decision-conditioned scalar readout, but the effect is modest, attenuates in
ResNet-34, and is not consistent across readouts after class control. The
stronger finding is the absence of an order shift. Combined with
Appendix~\ref{app:input-dependent-expert-sets}, these results distinguish two
forms of input dependence: the identities of prominent interactions vary with
the input and class, whereas the overall breadth and order of the interaction
profile remain nearly constant. This supports only a limited routing-like
interpretation at the level of functional contribution and does not support
a strong difficulty-adaptive interpretation or sparse execution.

\end{document}